\definecolor{lightgray}{gray}{0.8}
\definecolor{covgood}{RGB}{198,239,206} 
\definecolor{covbad}{RGB}{255,199,206}  
\newtcolorbox{rqbox}{
  colback=blue!3,
  colframe=blue!55!black,
  boxrule=0.8pt,
  arc=2mm,
  left=2mm,right=2mm,top=2mm,bottom=2mm
}
\theoremstyle{plain}
\newtheorem{theorem}{Theorem}[section]
\newtheorem{corollary}[theorem]{Corollary}
\theoremstyle{definition}
\newtheorem{definition}[theorem]{Definition}
\newtheorem{assumption}[theorem]{Assumption}
\theoremstyle{remark}
\newcommand{\E}{\mathbb{E}}
\newcommand{\Var}{\mathrm{Var}}
\newcommand{\R}{\mathbb{R}}
\newcommand{\logit}{\mathrm{logit}}
\newcommand{\prob}{\mathbb{P}}
\newcommand{\indep}{\perp \!\!\! \perp}
\newcommand*\circledgreen[1]{%
\tikz[baseline=(char.base)]{
  \node[shape=circle, draw=ForestGreen!60, fill=ForestGreen!10, thick, inner sep=1pt] (char) {\scriptsize\textsf{#1}};
}}
\newcommand*\circledredinv[1]{%
\tikz[baseline=(char.base)]{
  \node[shape=circle, draw=BrickRed, fill=BrickRed, thick, inner sep=1pt, text=white] (char) {\scriptsize\textsf{#1}};
}}
\newcommand*\circledblue[1]{%
\tikz[baseline=(char.base)]{
  \node[shape=circle, draw=NavyBlue!60, fill=NavyBlue!10, thick, inner sep=1pt] (char) {\scriptsize\textsf{#1}};
}}
\def\maketag@@@#1{\hbox{\m@th\normalfont\normalsize#1}}
\newcommand{\framework}{\textsc{DMLRank}\xspace}
\icmltitlerunning{Nonparametric LLM Evaluation from Preference Data}
\begin{document}

\twocolumn[
\icmltitle{Nonparametric LLM Evaluation from Preference Data}




\begin{icmlauthorlist}
\icmlauthor{Dennis Frauen}{yyy,mcl}
\icmlauthor{Athiya Deviyani}{comp}
\icmlauthor{Mihaela van der Schaar}{sch}
\icmlauthor{Stefan Feuerriegel}{yyy,mcl}
\end{icmlauthorlist}

\icmlaffiliation{yyy}{LMU Munich}
\icmlaffiliation{mcl}{Munich Center for Machine Learning}
\icmlaffiliation{comp}{Carnegie Mellon University}
\icmlaffiliation{sch}{University of Cambridge}

\icmlcorrespondingauthor{Dennis Frauen}{frauen@lmu.de}

\icmlkeywords{LLM evaluation, preference data, debiased machine learning, semiparametric efficiency, experimental design}

\vskip 0.3in
]



\printAffiliationsAndNotice{}  

\begin{abstract}
Evaluating the performance of large language models (LLMs) from human preference data is crucial for obtaining LLM leaderboards. However, many existing approaches either rely on restrictive parametric assumptions or lack valid uncertainty quantification when flexible machine learning methods are used. In this paper, we propose a nonparametric statistical framework called \framework \textit{for comparing and ranking LLMs from preference data} using debiased machine learning (DML). For this, we introduce \emph{generalized average ranking scores (GARS)}, which generalize commonly used ranking models, including the Bradley-Terry model or PageRank/ Rank centrality, with complex human responses such as ties. \framework comes with the following advantages: (i)~It produces statistically efficient estimates of GARS ranking scores. (ii) It naturally allows the incorporation of black-box machine learning methods for estimation.  (iii) It can be combined with pre-trained LLM evaluators (e.g., using LLM-as-a-judge). (iv) It suggests optimal policies for collecting preference data under budget constraints.  We demonstrate these advantages both theoretically and empirically using both synthetic and real-world preference datasets. In summary, our framework provides practitioners with powerful, state-of-the-art methods for comparing or ranking LLMs for leaderboards.
\end{abstract}

\section{Introduction}\label{sec:intro}


Large language models (LLM) are commonly evaluated using \emph{preference data}, where models are compared by directly judging which of two generated responses is better for a given prompt. This paradigm is widely used in leaderboards such as \textit{LM Arena} \citep{chiangChatbotArenaOpen2024} and related benchmarks for model comparison \citep{ouyangTrainingLanguageModels2022, rafailovDirectPreferenceOptimization2023, duboisAlpacaFarmSimulationFramework2023}. In practice, collecting preferences is often simpler and more reliable than assigning absolute quality scores to individual LLM outputs \citep{christianoDeepReinforcementLearning2017}.

In a typical setup, two LLMs (say, A and B) are prompted with the same task, their generated responses are presented to an evaluator, and the evaluator indicates whether the response of A or B is preferred (see Fig.~\ref{fig:overview}). The evaluator may be a human (e.g., a crowdworker or a domain expert) or an automated system (such as an LLM-as-a-judge). The outcome of this process is typically a leaderboard or ranking of models, where practitioners are often interested not only in the relative ordering of models but also in confidence intervals for the underlying model scores (e.g., as in the case of \textit{LM Arena}) \citep{chiangChatbotArenaOpen2024}.

Despite its practical success, statistical ranking from preference data poses substantial challenges. The main difficulty is that preference data contains only \emph{relative} information: we never directly observe an absolute quality score for a model's output, but only that one response was judged better than another. As a result, this typically requires practitioners to define a notion of a \textit{ranking score} that can be inferred from pairwise comparisons \citep{bradleyRankAnalysisIncomplete1952}. These ranking scores are often complex functionals of the data distribution, which makes statistical inference challenging. Hence, estimation procedures must be carefully tailored to the ranking target of interest to guarantee efficient estimation and valid uncertainty quantification \citep{frickPrompttoleaderboardPromptadaptiveLLM2025, kallusSemiparametricPreferenceOptimization2025}.

Many existing LLM leaderboards and other preference-based benchmarks build on the parametric Bradley--Terry (BT) model, which posits that each LLM's underlying ranking score determines the preference probabilities additively on the logit scale \citep{chiangChatbotArenaOpen2024, frickPrompttoleaderboardPromptadaptiveLLM2025}. However, the parametric BT model comes with important \textbf{limitations}: \textit{\textbf{(i)}}~It can be \textit{biased under misspecification}, such as when the BT link is incorrect for the data-generating process or if the chosen parametric model is too restrictive \citep{xuInvestigatingNontransitivityLLMasajudge2025}; \textit{\textbf{(ii)}}~The BT is parametric and generally \textit{not} compatible with flexible machine learning (ML) estimators that model context-dependent preferences, as this typically breaks standard inferential guarantees and can lead to invalid confidence intervals \citep{vanderVaart.1998, Chernozhukov.2018}. \textit{\textbf{(iii)}}~Many modern evaluation pipelines combine scarce but high-quality human labels with proxy labels from auto-raters or LLM-as-a-judge systems \citep{zhengJudgingLLMasaJudgeMTBench2023}. However, for parametric models, incorporating such proxy signals while obtaining valid statistical inference is \textit{not} straightforward. 

\begin{figure}[ht]
\vspace{-0.4cm}
 \centering
\includegraphics[width=1\linewidth]{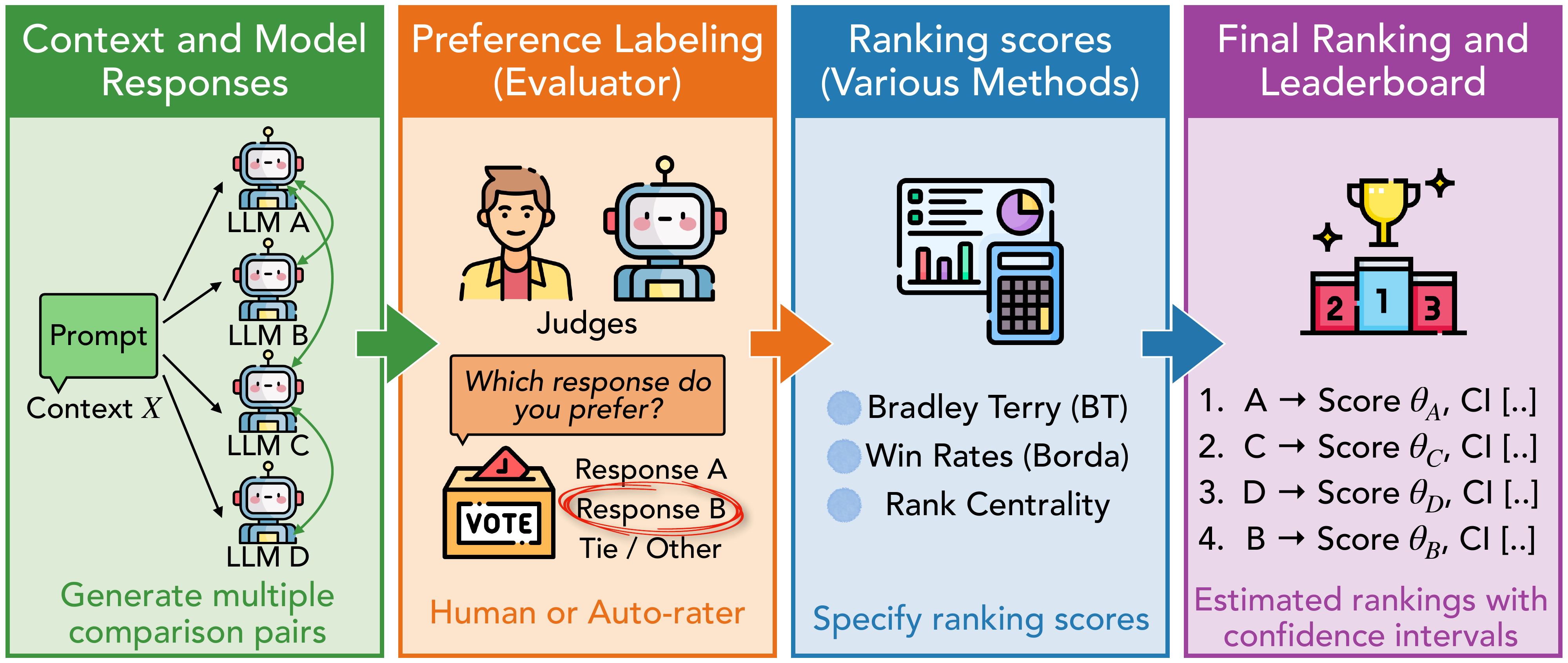}
\vspace{-0.4cm}
\caption{\textbf{Overview of preference-based LLM evaluation.}}
\vspace{-0.5cm}
\label{fig:overview}
\end{figure}

In this paper, we propose a \textit{nonparametric} statistical framework called \framework to compare and rank LLMs from preference data while addressing the above limitations. Our key idea is to define the ranking target directly as a functional of the contextual preference probabilities, which we call \textit{generalized average ranking scores (GARS)}. GARS naturally accommodate different notions of ranking scores such as average win rates (Borda scores), stationary-distribution scores (Rank Centrality / PageRank-style) \citep{negahbanRankCentralityRanking2017}, or BT-type scores as a special case. 

Building on debiased machine learning (DML), we then derive statistically efficient estimators of GARS ranking scores by deriving the corresponding efficient influence function \citep{vanderVaart.1998}. We further construct confidence intervals that remain valid when preference probabilities are estimated with black-box machine learning methods. We also show how \framework can incorporate external judges such as LLM-as-judge, or other pre-trained LLM evaluators. Finally, we also propose an extension for \emph{optimal preference data acquisition} by deriving collection policies that minimize the width of the resulting confidence intervals under budget constraints, thereby enabling cost-efficient preference data collection. 

Our main \textbf{contributions}\footnote{Code is available at \url{https://github.com/DennisFrauen/NonparametricLLMEval}.} are: (i)~We propose a generalized nonparametric framework for statistical ranking from preference data. Specifically, we introduce \emph{generalized average ranking scores (GARS)}, a nonparametric estimand that captures many existing ranking models. (ii)~We develop a statistically efficient estimator for GARS, which directly yields simultaneously valid confidence intervals for ranking scores. (iii)~We propose a cost-optimal policy for collecting preference labels under budget constraints.

\paragraph{Conflict of Interest Disclosure.}
The authors declare no financial conflicts of interest related to this work. In particular, no author has a financial interest that could reasonably be perceived as influencing the research, results, or conclusions of this paper.

\section{Related work}\label{sec:rw}

We review the main related literature streams on debiased machine learning and LLM evaluation. We refer to Appendix~\ref{app:rw} for details on optimal experimental design.

\textbf{Debiased machine learning.}
Debiased/double machine learning (DML) has a long tradition in semiparametric statistics \citep{Bickel.1998}. DML leverages the so-called efficient influence function (EIF) \citep{Hampel.1974} to obtain semiparametrically efficient estimators \citep{vanderVaart.1998} or Neyman-orthogonal risk functionals \citep{Foster.2023}. DML is generally useful when the estimation target is defined via unknown nuisance components \citep{Chernozhukov.2018}. Examples of such problems include missing data analysis \citep{Robins.1994b}, semi-supervised learning (prediction-powered inference) \citep{angelopoulosPredictionpoweredInference2023, angelopoulosPPIEfficientPredictionpowered2024, Xu.2252025, van2026calibeating}, and causal inference \citep{Robins.2000, vanderLaan.2006, Nie.2021, Kennedy.2023b}. Our work brings these ideas to preference-based evaluation using various ranking scores.

\textbf{Semi-supervised model evaluation.}
A growing literature combines small amounts of high-quality labels with large unlabeled data or cheap proxy labels. For instance, \citet{shanmugamEvaluatingMultipleModels2025} studied semi-supervised evaluation for standard labels, but \textit{not} preference data, and \textit{without} DML. Another literature stream builds upon prediction-powered inference (PPI) to develop debiased methods that leverage black-box predictors to reduce labeling cost \citep{fischStratifiedPredictionpoweredInference2024, guerdanDoublyrobustLLMasajudgeExternally2026}. Finally, \citet{chatziPredictionpoweredRankingLarge2024} uses PPI for preference-based ranking with uncertainty quantification \textit{but} focuses on specific ranking definitions. In contrast, we provide a unified EIF-based framework covering multiple ranking estimands (including BT-type, win-rate/Borda, and stationary-distribution scores) and also address optimal data collection.

\textbf{LLM evaluation from preference data.}
LLM leaderboards, such as \textit{LM Arena}, have popularized \textit{pairwise} comparison data as a practical evaluation signal. The same paradigm has also been used in specialized domains, including search-augmented models \citep{miroyanSearchArenaAnalyzing2025}, text-to-music \citep{kimMusicArenaLive2025}, and coding assistants \citep{chiCopilotArenaPlatform2025}. However, current methods for estimating ranking scores rely on the \textit{parametric} BT model. As a consequence, these methods are either \textit{sensitive to misspecification} or do \textit{not} yield valid asymptotic confidence intervals if flexible ML models are used. In parallel, simulation- and benchmark-driven pipelines such as AlpacaFarm/AlpacaEval \citep{duboisAlpacaFarmSimulationFramework2023, duboisLengthcontrolledAlpacaEvalSimple2024} commonly employ nonparametric win-rate or average-pairwise-win estimands, but are typically treated without a debiased efficiency perspective. Hence, the underlying ranking methods are \textit{neither} statistically efficient \textit{nor} yield principled, efficient uncertainty quantification when black-box ML models are used.

\textbf{Semiparametric ranking.}
There is extensive work on semiparametric estimation for discrete choice and comparison models in econometrics \citep{horowitzSmoothedMaximumScore1992, kleinEfficientSemiparametricEstimator1993}. These works typically focus on identifying and estimating low-dimensional preference parameters rather than providing a unified inference framework for \emph{arbitrary} ranking functionals of high-dimensional, context-dependent preference probabilities, and they generally do \textit{not} address missing labels or the use of auxiliary judges. More closely related, several recent papers study semiparametric inference and optimization for preference learning and LLM alignment under BT-style models \citep{sawarniPreferenceLearningResponse2025, kallusSemiparametricPreferenceOptimization2025}. Two recent works derive inference procedures for semiparametric/covariate-adjusted BT models \citep{liEfficientInferenceCovariateadjusted2025, spokoinySemiparametricPluginEstimation2025}. In contrast to this paper, these approaches remain tied to BT-type score models and do \textit{not} offer a unified treatment of alternative ranking functionals (e.g., Borda/win-rate or Rank Centrality), and do \textit{not} address optimal preference data acquisition.

\section{Problem setting}

\textbf{Setting.} Our aim is to rank $K$ items in $\mathcal{I} = \{1, \dots, K\}$ using preference data with $C$ different categories $\mathcal{C} = \{1, \dots, C\}$. Given an item pair $(j, k)$ with $j, k \in \mathcal{I}$, a common example is three categories $\mathcal{C} = \{1, 2, 3\}$ with categories labeled as \emph{``1: $j$ preferred over $k$,''} \emph{``2: $k$ preferred over $j$,''} or \emph{``3: tie (no preference).''} The data consists of a context $X \in \mathcal{X}$, binary selection indicators $S_{jk} \in \{0, 1\}$ with zero diagonals $S_{jj} = 0$ and $S_{jk} = 1$ if the item pair $(j, k)$ is labeled, and one-hot encoded categorical preference labels $Y_{jk} \in \{e_1, \dots, e_C\}$, where $Y_{jk}=e_c \in \{0, 1\}^C$ denotes assignment of category $c$ to the pair $(j,k)$.  Finally, we note that we allow for selecting multiple pairs of items per context $X$, but we also discuss the case of selecting one or fewer pairs per context in Appendix~\ref{app:one_pair_labeling}.

\textbf{Running example: LLM evaluation.} Here, the set of items $\mathcal{I}$ is the different LLMs that we want to compare using human feedback. The context $X$ is a prompt that is given to pairs of LLMs $(j, k)$ selected for comparison. $S_{jk}$ indicates whether human preference was acquired and $Y_{jk}$ is the human preference category. As an extension, we (optionally) may have access to a black-box model $f(x, j, k)$ that predicts the preference $Y_{jk}$ given a context $X =x$ and items $(j, k)$. We make no assumptions on the model $f$ (i.e., on its prediction performance). The black-box model $f$ may be an auto-rater or an LLM-as-a-judge.

\textbf{Observed data.} We write the observed data as $O = (X, S, \widetilde{Y})$ with $S \in \{0, 1\}^{K \times K}$ and $Y \in \{0, 1\}^{K \times K \times C}$ denoting stacked tensors of selection indicators and {preference labels,} and $\widetilde{Y} = S \circ Y$ such that $Y_{jkc}$ is only observed for all $c \in \mathcal{C}$ whenever $S_{jk} = 1$. We assume that the population $(X, S, \widetilde{Y}) \sim \prob$ follows some ground-truth distribution $\prob$ and that we have access to a \textit{preference dataset} $\mathcal{D} = \{(x_i, s_i, \widetilde{y}_i)_{i=1}^n\}$ sampled i.i.d. from $\prob$.

\textbf{Nuisance functions.} We denote
\begin{equation}
 \mu_{jkc}(x) = \prob(Y_{jkc} = 1 \mid X=x)   
\end{equation}
as probability of preference category $c$ for pair $(j, k)$ given context $x$. Furthermore, we denote 
\begin{equation}
    \pi_{jk}(x) = \prob(S_{jk} = 1 \mid X=x)
\end{equation}
as the selection probability of pair $(j, k)$ given context $x$. We define $\mu(x) \in [0, 1]^{K \times K \times C}$ and $\pi(x) \in [0, 1]^{K \times K}$ as the corresponding stacked tensors with convention $\mu_{jjc}(x) = \pi_{jj}(x) = 0$ for all $j, c$. We denote the collection $\eta = (\mu, \pi)$ as \emph{nuisance functions}. We make the standard assumptions of positivity and missing at random \citep{Rubin.1974}, under which we can write
\begin{equation}
    \mu_{jkc}(x) = \prob(Y_{jkc} = 1 \mid X=x, S_{jk} = 1)
\end{equation}
(see Appendix~\ref{app:assumptions}).

\textbf{BT model.} The \emph{Bradley-Terry (BT) model} \citep{bradleyRankAnalysisIncomplete1952} is commonly used ranking model in the simplified case of binary preference categories $\mathcal{C} = \{1, 2\}$, where $Y_{jk1} = 1$ denotes preference of $j$ over $k$ (and vice versa for $Y_{jk2} = 1$). BT is defined via
\begin{equation}\label{eq:bt}
\mu_{jk1}(x)=\sigma\!\big(r_j(x)-r_k(x) + b(x)\big)
\end{equation}
with the convention $\sum_{m=1}^K r_m(x)=0$ and where $\sigma(t)=\frac{1}{1+e^{-t}}$ denotes the sigmoid function. The vector $r(x) = (r_1(x), \dots, r_K(x))$ denotes the \emph{ranking score functions} of the items, and $b(x)$ models the position bias due to showing items in a certain order. The target of interest is the average \textit{ranking score} vector $\theta = \E[r(X)] \in \R^K$, which allows ranking of items according to the components of $\theta$.
Popular leaderboards such as \textit{LM Arena} \citep{chiangChatbotArenaOpen2024, frickPrompttoleaderboardPromptadaptiveLLM2025} leverage \emph{parametric} BT models to perform statistical inference on the target $\theta$. That is, the score function $r_\rho(x)$ is parametrized with some trainable parameters $\rho$ (e.g., a neural network). An estimator $\hat{\rho}$ can be obtained via maximum likelihood estimation, and the target parameter can be estimated via 
\begin{equation}
    \hat{\theta} = \frac{1}{n} \sum_{i=1}^n r_{\hat{\rho}}(x_i).
\end{equation}

\textbf{Research questions.} We address \emph{three main questions}:
\begin{tcolorbox}[colback=red!5!white,colframe=red!75!black]
\textbf{\circledredinv{1} Nonparametric ranking scores ($\rightarrow$Sec.~\ref{sec:gars}).} How can we define a vector of ranking scores $\theta \in \R^K$ \emph{nonparametrically} only in terms of the nuisance functions $\eta$? How does $\theta$ generalize established notions of ranking scores such as BT?

\textbf{\circledredinv{2} Efficient statistical inference ($\rightarrow$Sec.~\ref{sec:estimation}).} How can we leverage preference dataset $\mathcal{D}$ and an (optional) black-box judge $f$ in the most efficient way to perform statistical inference on the ranking scores $\theta$ (estimator and confidence intervals with black-box machine learning)?

\textbf{\circledredinv{3} Optimal data acquisition ($\rightarrow$Sec.~\ref{sec:acquisition}).} How can we design an optimal labeling rule $\pi$ to obtain preference data for optimal inference on $\theta$ subject to budget constraints (e.g., costly human annotations)? 
\end{tcolorbox}

\section{ Generalized Average Ranking Scores}\label{sec:gars}

\subsection{Motivation}\label{sec:gars_motivation}

\textbf{Why parametric approaches have drawbacks.} Parametric approaches such as the parametric BT model have multiple \textit{drawbacks}: (i)~the parametric estimator will generally be biased if the parametric model $r_\rho(x)$ is misspecified (i.e., the model class chosen is too small); (ii)~it is generally difficult to obtain valid (asymptotic) confidence intervals for $\hat{\theta}$ if flexible models such as neural networks are used for $r_\rho(x)$ \citep{Chernozhukov.2018}; and (iii)~it is unclear how pre-trained evaluation models (e.g., auto-rater of LLM-as-a-judge) can be incorporated into this procedure.

\textbf{Idea behind our nonparametric approach.} Motivated by the drawbacks of the parametric BT model, we propose a \emph{nonparametric} modeling approach. The key idea is to express the target parameter $\theta$ as a function of the preference probabilities $\mu$, that is, $\theta  = \E[F(\mu(X))]$ for some known function $F$. Then, we could use any black-box machine learning method or pre-trained auto-rater to obtain estimated preferences $\hat{\mu}(x)$ and use these to obtain an estimator $\hat{\theta}$.

Indeed, it is possible to construct such a function $F$ as follows: Let $B\in\mathbb R^{\binom K2\times K}$ be the incidence matrix (row $(j, k)$ equals $e_j-e_k$) and let $L_0 \;=\; B^\top B$ be the graph Laplacian. Let $\ell(\mu(x)) = \left(\logit(\mu_{jk1}(x)) / 2 + \logit (\mu_{kj2}(x))/2\right)_{j < k} \in\mathbb R^{\binom K2}$ and let 
\begin{equation}
H \;=\; \begin{bmatrix}
I_{K-1}\\[2pt]
-\mathbf 1_{K-1}^\top
\end{bmatrix},
\end{equation}
so that $\{r\in\mathbb R^K:\mathbf 1^\top r=0\}=\{H\alpha:\alpha\in\mathbb R^{K-1}\}$. We can then define the function $F$ via
\begin{equation}\label{eq:bt_scores_projection}
F\big(\mu(x)\big)
\;=\;
H\,\big(H^\top L_0 H\big)^{-1} H^\top B^\top\, \ell(\mu(x)),
\end{equation}
which coincides with the $L^2$-projection of the edge log-odds vector $\ell(\mu(x))$ onto the subspace $\{B\phi:\mathbf 1^\top\phi=0\}$ and recovers the BT-scores $\E[r(X)]$ if the BT model from Eq.~\eqref{eq:bt} holds (see Appendix~\ref{app:derivation_bt}).

\textbf{Ranking beyond BT.} The BT model is a convenient and widely used model for pairwise comparisons, but it is \textit{not} universally appropriate. In practice, it can be misspecified and yield biased rankings if, e.g., preferences exhibit cycles \citep{xuInvestigatingNontransitivityLLMasajudge2025}. This has motivated a broad literature on alternative definitions of ranking scores. As emphasized in social choice theory, no single ranking method is optimal across all desiderata; different models trade off interpretability, robustness, and other desired properties \citep{zhangInherentTradeoffsDiversity2024}. Motivated by this insight, we now develop a generalized framework that accommodates a range of ranking score definitions within one common inferential framework.

\subsection{Formulation of GARS}

In this paper, we propose \emph{generalized average ranking score (GARS)}, which we define as follows: Given a known function $F \colon [0, 1]^{K \times K \times C} \to \R^d$ for some dimension $d > 0$, GARS is defined as the parameter vector
\begin{equation}
    \theta  = \E[F(\mu(X))] \in \R^d.
\end{equation}

\textbf{Examples.} In the following, we consider examples for GARS with binary preference categories $\mathcal{C} = \{1, 2\}$. For additional examples, including ties, we refer to Appendix~\ref{app:gars_examples}.

$\bullet$\,\textbf{Bradley--Terry projections.} as described in Sec.~\ref{sec:gars_motivation}.

$\bullet$\,\textbf{Borda scores.}
Borda scores are defined as the average probability of an item $j$ to be preferred over the other items:
\begin{equation}
F_j\big(\mu(x)\big)\;=\;\frac{1}{2(K-1)}\sum_{k\neq j}(\mu_{jk1}(x) + \mu_{kj2}(x)).
\end{equation}

$\bullet$\,\textbf{Rank centrality (RC)} \citep{negahbanRankCentralityRanking2017}.
Here, we first construct a stochastic matrix from $\mu$; e.g., for $j\neq i$,
$T_{ij}(\mu(x))=\frac{\mu_{ji1}(x) + \mu_{ij2}(x)}{\sum_{\ell\neq i}\mu_{\ell i 1}(x) + \mu_{i \ell 2}(x)}$,
with zero diagonal $T_{ii}(x)=0$. We define the function
\begin{equation}
F(\mu(x))= \left(I - T(\mu(x))^\top + \mathbf{1}\mathbf{1}^\top\right)^{-1} \mathbf{1}, 
\end{equation}
where we assume matrix invertibility.
The resulting $F(\mu(x))$ is the stationary distribution of the associated Markov chain with transition matrix $T$. Intuitively, human raters randomly switch between LLMs according to their symmetrized preference probabilities. The stationary distribution $F(\mu(x))$ can then be interpreted as the long-term time spent on each LLM (larger implying better).

\section{Efficient statistical inference}\label{sec:estimation}

\subsection{Overview}

\textbf{Why na{\"i}ve, plug-in estimators have drawbacks.} To obtain rankings, our goal is now to develop an estimator $\hat{\theta}$ of the GARS $\theta = \E[F(\mu(X))]$ based on the preference dataset $\mathcal{D}$ and an optional pre-trained judge $f$.  As outlined in the previous section, our non-parametric framework allows us to estimate $\theta$ only by obtaining an estimator $\hat{\mu}$ of the preference probabilities $\mu$. The natural plug-in estimator is the following: in a first step, one can leverage any black-box machine learning classifier to fit $\hat{\mu}$ on $\mathcal{D}$ or simply use the judge predictions $\hat{\mu}_{jk}(x)= f(x, j, k)$ (we will discuss more sophisticated combinations of these two approaches later). Then, in a second step, one can estimate the GARS via the \emph{plug-in estimator}
\begin{equation}\label{eq:plugin}
    \hat{\theta} = \frac{1}{n} \sum_{i=1}^n F(\hat{\mu}(x_i)).
\end{equation}
Unfortunately, the plug-in estimator from Eq.~\eqref{eq:plugin} has two fundamental drawbacks: (i)~It is well known that plug-in estimators suffer from so-called \emph{plug-in bias} \citep{Kennedy.2022}. If errors in estimating $\hat{\mu}$ (e.g., due to limited preference data or due to bias in the judge $f$), these errors will propagate through $F$ to the final estimator $\hat{\theta}$, leading to biased and suboptimal statistical inference. (ii)~Due to the plug-in bias, it is generally difficult to obtain valid (asymptotic) confidence intervals for $\hat{\theta}$. For example, if the judge $f$ is fundamentally biased, the plug-in estimator does not correct for this bias and is thus unable to obtain intervals with valid coverage.

\textbf{Debiased estimators.} To remedy the drawbacks of the plug-in estimator above, we propose an estimator based on \emph{semiparametric efficiency theory} \citep{vanderVaart.1998}. Specifically, we build on the idea of \emph{debiased estimators}, which correct for the plug-in bias by adding a debiasing term. In our setting, we propose the use of a debiased estimator
\begin{equation}\label{eq:est_intuition}
    \hat{\theta} = \frac{1}{n} \sum_{i=1}^n F(\hat{\mu}(x_i)) + \alpha \Big(y_{i}-\hat\mu(x_i)\Big),
\end{equation}
where $\alpha$ is an appropriate scaling matrix that we have to derive. Intuitively, if $\hat{\mu}$ is estimated with error, the second term uses the labeled data to reduce the error from the plug-in term. Debiased estimators will often guarantee statistical efficiency and valid confidence intervals \emph{even if we use black-box ML to estimate} $\hat{\mu}$ \citep{Chernozhukov.2018}. 

\subsection{Main result}

\textbf{Deriving the debiased estimator.} We now derive the correct debiased estimator for $\theta$. This is connected to obtaining the so-called \emph{efficient influence function (EIF)} of $\theta$ \citep{Bickel.1998}.
Importantly, we also show in the following theorem that our estimator allows for (asymptotic) statistical efficiency (i.e., the lowest variance any unbiased estimator can achieve) and asymptotic normality (leading to valid confidence intervals) under weak assumptions.

\begin{theorem}[Efficient statistical inference for GARS]\label{thrm:estimation}
Let $F:[0,1]^{K \times K \times C}\to\mathbb R^d$ be differentiable with Jacobian columns $J_{jk}(\mu)\;=\;\nabla_{\mu_{jk}} F(\mu)\;\in\;\mathbb R^{d\times C}$. The efficient influence function (EIF) for $\theta$ is
\begin{align}\label{eq:eif}
& \quad \phi(O, \eta, \theta)
=
F\!\big(\mu(X)\big)\;-\;\theta \\
&+
\sum_{j\neq k}
\frac{S_{jk}}{\pi_{jk}(X)}\;
J_{jk}\!\big(\mu(X)\big)\,
\Big(Y_{jk}-\mu_{jk}(X)\Big) \nonumber
\in\mathbb R^d,
\end{align}
The (one-step) debiased estimator $\hat{\theta}_\mathrm{EIF} \in\mathbb R^d$ is given via
\begin{align}\label{eq:debiased_estimator}
&\hat\theta_\mathrm{EIF}
=
\frac1n\sum_{i=1}^n \Bigg[
F\!\big(\hat\mu(x_i)\big) \\
& \quad +
\sum_{j\neq k}
\frac{s_{i,jk}}{\hat\pi_{jk}(x_i)}\;
J_{jk}\!\big(\hat\mu(x_i)\big)\,
\Big(y_{i,jk}-\hat\mu_{jk}(x_i)\Big) \Bigg], \nonumber
\end{align}
Under standard assumptions (see Appendix~\ref{app:proofs} for details), $\hat{\theta}_\mathrm{EIF}$ is asymptotically efficient and normally distributed via
\begin{equation}
    \sqrt{n}(\hat{\theta}_\mathrm{EIF} - \theta) \to \mathcal{N}_d(0_{\R^d}, \Sigma),
\end{equation}
where $\Sigma = \E\left[\phi(O, \eta, \theta) \phi^\top(O, \eta, \theta)\right] \in \R^{d \times d}$.
\end{theorem}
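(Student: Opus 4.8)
The plan is to establish the result in two stages: first, derive the efficient influence function (EIF) by computing the pathwise derivative of the functional $\theta(\prob)=\E[F(\mu(X))]$ along smooth one-dimensional submodels of the nonparametric observed-data model; and second, invoke the standard one-step (augmented-IPW) estimator analysis to obtain $\sqrt{n}$-asymptotic normality with the stated sandwich covariance, from which efficiency follows because the limiting variance equals the EIF variance.

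For the EIF, I would factorize the observed-data likelihood under the missing-at-random and positivity assumptions as $p(O)=p(X)\,p(S\mid X)\,p(\widetilde Y\mid S,X)$, so that the tangent space decomposes into orthogonal pieces $\mathcal T_X\oplus\mathcal T_{S\mid X}\oplus\mathcal T_{Y\mid X}$. Since $\theta$ depends only on the marginal law of $X$ and on $\mu$ (hence on $p(Y\mid X)$) but not on the selection mechanism $p(S\mid X)$, the canonical gradient has no component along $\mathcal T_{S\mid X}$. Taking a submodel with score $s=s_X+s_{S\mid X}+s_{Y\mid X}$, I would differentiate $\theta(\prob_t)=\int F(\mu_t(x))\,p_t(x)\,dx$ at $t=0$. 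The variation of $p_t(x)$ gives $\E[(F(\mu(X))-\theta)\,s_X(X)]$, identifying the term $F(\mu(X))-\theta$; the variation of $\mu_t$, expanded via the chain rule with Jacobian $J_{jk}$, gives $\sum_{j\neq k}\E\big[J_{jk}(\mu(X))\,\E[(Y_{jk}-\mu_{jk}(X))\,s_{jk}(Y_{jk}\mid X)\mid X]\big]$. It then remains to verify that the proposed inverse-propensity term $\tfrac{S_{jk}}{\pi_{jk}(X)}J_{jk}(\mu(X))(Y_{jk}-\mu_{jk}(X))$ represents this derivative: conditioning on $X$ and using $\E[S_{jk}\mid X]=\pi_{jk}(X)$ together with the MAR independence $S_{jk}\indep Y_{jk}\mid X$ cancels the weight and reproduces the derivative exactly, while the term is mean-zero and lies in $\mathcal T_{Y\mid X}$. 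Because the relevant submodel is otherwise unrestricted, this gradient is the unique canonical one, hence the EIF in Eq.~\eqref{eq:eif}.

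For asymptotic normality, I would write $\hat\theta_\mathrm{EIF}=\mathbb P_n\,g(O,\hat\eta)$ with $g(O,\eta)=F(\mu(X))+\sum_{j\neq k}\tfrac{S_{jk}}{\pi_{jk}(X)}J_{jk}(\mu(X))(Y_{jk}-\mu_{jk}(X))$, so that $\phi=g-\theta$ and $\prob\,g(O,\eta)=\theta$, and decompose
\begin{align*}
\hat\theta_\mathrm{EIF}-\theta
&=(\mathbb P_n-\prob)\,\phi(O,\eta)\\
&\quad+(\mathbb P_n-\prob)\big(g(O,\hat\eta)-g(O,\eta)\big)\\
&\quad+\big(\prob\,g(O,\hat\eta)-\theta\big).
\end{align*}
The first term obeys the CLT with limiting covariance $\Sigma=\E[\phi\phi^\T]$. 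The second (empirical-process) term is $o_p(n^{-1/2})$ either under a Donsker condition or, more cleanly, via cross-fitting on held-out folds together with consistency $\norm{\hat\eta-\eta}\to_p 0$.

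The remaining obstacle, which I expect to be the crux, is showing the third term is a genuinely second-order remainder. Using $\E[S_{jk}\mid X]=\pi_{jk}(X)$, $\E[Y_{jk}\mid X]=\mu_{jk}(X)$, and MAR, this term equals $\E_X\big[F(\hat\mu)-F(\mu)+\sum_{j\neq k}\tfrac{\pi_{jk}}{\hat\pi_{jk}}J_{jk}(\hat\mu)(\mu_{jk}-\hat\mu_{jk})\big]$. A first-order Taylor expansion $F(\hat\mu)-F(\mu)=\sum_{j\neq k}J_{jk}(\mu)(\hat\mu_{jk}-\mu_{jk})+O(\norm{\hat\mu-\mu}^2)$ and the regrouping $J_{jk}(\mu)-\tfrac{\pi_{jk}}{\hat\pi_{jk}}J_{jk}(\hat\mu)=\big[J_{jk}(\mu)-J_{jk}(\hat\mu)\big]+J_{jk}(\hat\mu)\,\tfrac{\hat\pi_{jk}-\pi_{jk}}{\hat\pi_{jk}}$ expose the Neyman-orthogonal product structure, bounding the term by $O_p\big(\norm{\hat\mu-\mu}^2+\norm{\hat\pi-\pi}\,\norm{\hat\mu-\mu}\big)$, which is $o_p(n^{-1/2})$ under the rate condition $\norm{\hat\mu-\mu}\,\norm{\hat\pi-\pi}=o_p(n^{-1/2})$ (e.g., each error $o_p(n^{-1/4})$). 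Slutsky then yields $\sqrt n(\hat\theta_\mathrm{EIF}-\theta)\to\mathcal N_d(0_{\R^d},\Sigma)$, and efficiency is immediate since $\Sigma$ is the EIF variance, i.e.\ the semiparametric bound. The delicate points are ensuring $F$ is twice differentiable with locally Lipschitz $J$ and that the weights are bounded; for Rank Centrality and the BT projection $F$ involves a matrix inverse, so I would additionally assume the relevant matrices stay uniformly bounded away from singularity and impose positivity $\pi_{jk}\ge\epsilon>0$ to control $1/\hat\pi_{jk}$.
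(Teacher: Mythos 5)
Your proposal is correct and follows essentially the same route as the paper's proof: the same MAR likelihood factorization and pathwise-derivative computation for the EIF (including orthogonality to the selection tangent space $\mathcal{T}_{S\mid X}$), and the same one-step decomposition in which cross-fitting handles the empirical-process term and a Taylor/Cauchy--Schwarz argument bounds the remainder by $\norm{\hat\mu-\mu}^2+\norm{\hat\mu-\mu}\,\norm{\hat\pi-\pi}=o_p(n^{-1/2})$. The regularity conditions you flag at the end (locally Lipschitz Jacobian, truncated propensities bounded away from zero, product rate conditions) are exactly the assumptions (A2)--(A4) the paper imposes in Appendix~\ref{app:proofs}.
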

\begin{proof}
    See Appendix~\ref{app:proofs}.
\end{proof}

\textbf{Interpretation.} The debiased estimator from Eq.~\eqref{eq:debiased_estimator} weights the individual debiasing terms with the selection probability $\pi_{jk}$ to increase bias correction for rarely observed item pairs. This is similar to the AIPTW estimator in causal inference \citep{Robins.1994}. Different however is the re-weighting using the Jacobian $J_{jk}$, which adjusts strength the bias correction according to how strongly the preference probability affects the target $\theta$ via $F$. In the extreme case where $\theta$ does not depend on $\mu_{jk}$, the Jacobian will be zero, and no bias correction is applied. The Jacobians $J_{jk}$ often admit closed-form solutions and thus allow for scalable computation (see Appendix~\ref{app:weighted_gars_jacobians}).

\subsection{Implementation}

We now present a practical \emph{two-stage procedure} (see Fig.~\ref{fig:gars}) that turns Theorem~\ref{thrm:estimation} into an estimator with valid confidence intervals.

\begin{figure}[ht]
 \centering
\includegraphics[width=1\linewidth]{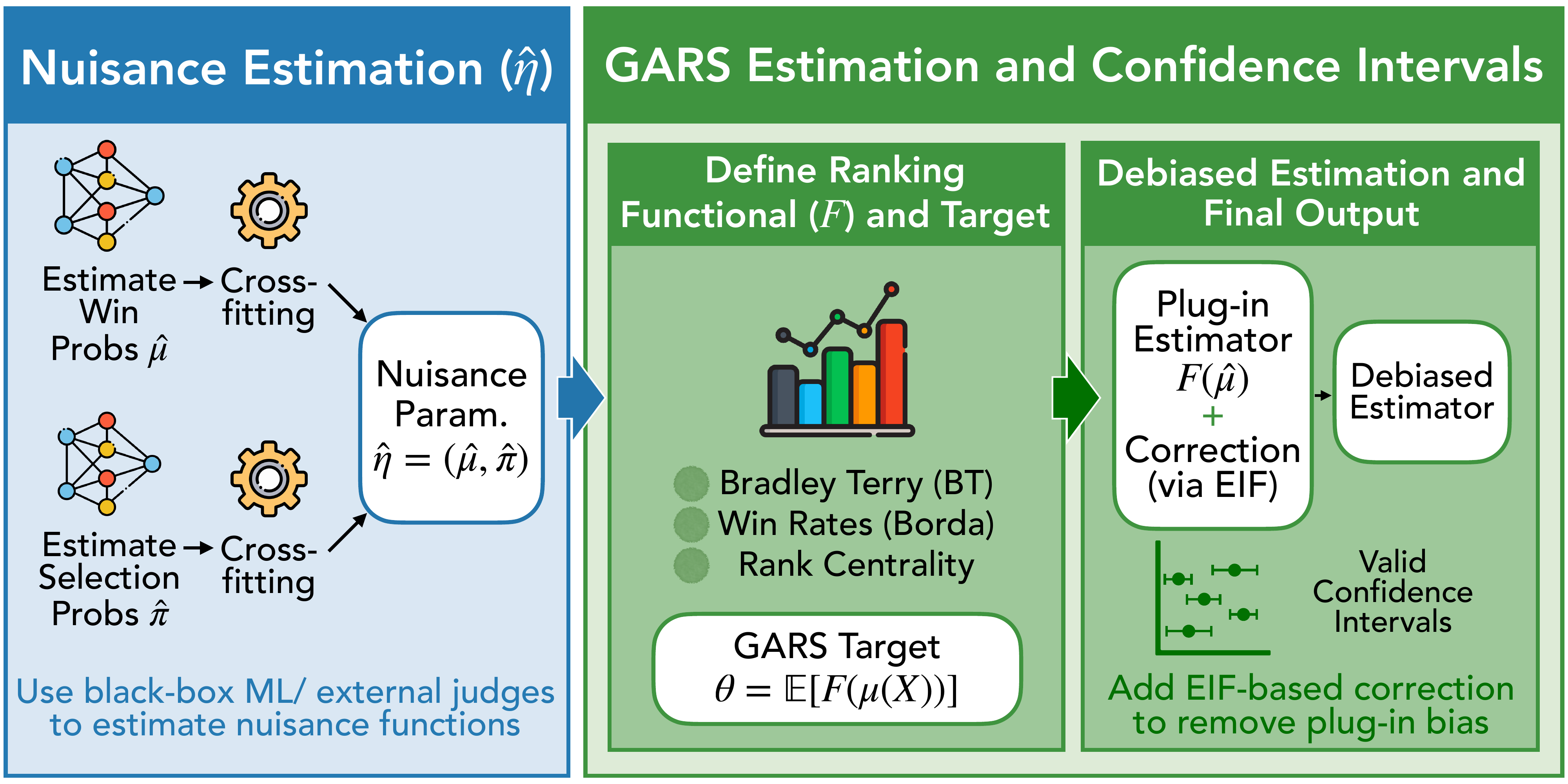}
\caption{\textbf{Overview of statistical inference for GARS.}}
\label{fig:gars}
\vspace{-0.4cm}
\end{figure}

\underline{Stage \circledblue{1}: Nuisance estimation with cross-fitting.}
We need estimates $\hat{\mu}$ of the preference probabilities $\mu$ and, if unknown, estimates  $\hat{\pi}$ of the selection propensities $\pi$. In some scenarios (e.g., when designing preference data as in Sec.~\ref{sec:acquisition}), $\pi$ may be known in which case we can set $\hat{\pi} = \pi$. As it is standard in semiparametric inference, we use \emph{cross-fitting} \citep{Chernozhukov.2018} to obtain $\hat{\mu}$ and $\hat{\pi}$ while guaranteeing valid statistical inference. We randomly split the data $\mathcal{D}$ into $V \geq 2$ folds $\{\mathcal{D}_v\}_{v=1}^V$. For each fold $v$, we use the out-of-fold data $\{\mathcal{D}_v^\prime\}_{v^\prime \neq v}$ to fit models models $\widehat\mu^{(-v)}$ and (if needed) $\widehat\pi^{(-v)}$. Then, for each $i\in\mathcal \mathcal{D}_v$, we obtain predictions $\widehat\mu(x_i)=\widehat\mu^{(-v)}(x_i)$ and $\widehat\pi(x_i)=\widehat\pi^{(-v)}(x_i)$ on the held-out fold $v$.

\textbf{Training the nuisance models.}
$\bullet$\,\emph{Estimating $\mu$.} Recall $\mu_{jkc}(x)=\mathbb P(Y_{jkc}=1\mid X=x, S_{jk}=1)$ with categorical labels $Y_{jk} \in \{0, 1\}^C$ that are only observed when $S_{jk}=1$. Hence, training $\mu$ can be cast as a $C$-class classification problem trained only on labeled pairs:
$\{(x_i,j,k, y_{i,jk}) : i\notin\mathcal I_v,\; s_{i,jk}=1\}$. $\bullet$\,\emph{Estimating $\pi$.} If $\pi_{jk}(x)=\mathbb P(S_{jk}=1\mid X=x)$ is unknown, we estimate it
via binary classification on the fully observed pairs
$\{(x_i, j,k, s_{i,jk}) : i\notin\mathcal I_v\}$. For additional details regarding nuisance model training, we refer to Appendix~\ref{app:nuisances}.

\noindent\underline{Stage \circledgreen{2}: Estimation and confidence intervals.}
Using cross-fitted $\widehat\mu$, $\widehat\pi$, and preference data $\mathcal{D}$, we can now compute the debiased GARS estimator $\hat{\theta}_\mathrm{EIF}$ from Eq.~\eqref{eq:debiased_estimator}. We can also estimate the asymptotic covariance matrix 
\begin{equation}
\widehat\Sigma\;=\; \frac{1}{n} \sum_{i=1}^n \phi\left(o_i, \hat{\eta}, \hat{\theta}_\mathrm{EIF}\right) \phi^\top\left(o_i, \hat{\eta}, \hat{\theta}_\mathrm{EIF}\right).
\end{equation}
Theorem~\ref{thrm:estimation} then implies that the set
\begin{equation}\label{eq:ci_ellipsoid}
\mathcal{E} = \Big\{\vartheta\in\mathbb R^d:\; n\,(\widehat\theta-\vartheta)^\top \widehat\Sigma^{-1}(\widehat\theta-\vartheta)\;\le\;\chi^2_{d,\,1-\alpha}\Big\}
\end{equation}
is an asymptotic $1 - \alpha$ confidence ellipsoid for the unknown GARS $\theta$, where $\chi^2_{d,\,1-\alpha}$ denotes the $d$-dimensional Chi-Squared quantile. It is also possible to use $\widehat\Sigma$ for constructing simultaneously valid confidence intervals for the components (ranking scores) of $\theta$. We refer to Appendix~\ref{app:cis} for additional details.

\subsection{Incorporating external judges}
If a black-box judge prediction $f(x_i,j,k)$ for $\mu_{jk}(x_i)$ is available, we use \emph{judge-as-features}: we add $f(x_i,j,k)$ to the model’s input features, i.e., setting $\widehat\mu_{jk}^{(-v)}(\widetilde{x}_i)$ with $\widetilde{x}_i = (x_i, f(x_i,j,k))$. If the judge's prediction is of high quality, the model $\widehat\mu_{jk}^{(-v)}$ should learn to leverage it for prediction, thus reducing finite sample error. If, however, the judge's prediction is of low quality, the model $\widehat\mu_{jk}^{(-v)}$ should learn to ignore it as an input, thus retaining consistent estimation and valid confidence intervals. Cross-fitting ensures any overfitting of the feature-augmenter is confined to the training folds.

\section{Optimal preference data acquisition}\label{sec:acquisition}

\textbf{Labeling under a budget.} So far, we assumed that existing preference data were generated using some fixed \emph{labeling policy} $\pi \colon \mathcal{X} \to [0, 1]^{K \times K}$, where the labeling probabilities $\pi_{jk}(x) = \prob(S_{jk} = 1 \mid X=x)$, denote the probability of labeling a pair of items $(j, k)$ given context $x$. Now, we consider the problem of \emph{designing} such a labeling policy $\pi$ ourselves, which we then can use to acquire new preferences. In practice, however, acquiring human preference labels for LLM pairs may be costly if we, e.g., query domain experts. Hence, we assume that labeling the item pair $(j, k)$ is associated with a cost $c_{jk} > 0$. We then want our labeling policy $\pi$ to fulfill a \emph{budget constraint}, which captures how many preferences we can afford to collect across different contexts and LLM pairs. That is, we define the set of feasible labeling policies as
\begin{equation*}
\Pi_{\alpha, \beta} \;=\;\Big\{\pi:\mathcal X\to[\alpha,1]^{K \times K}\;:\;
\E\!\big[\sum_{j\neq k} c_{jk}\pi_{jk}(X)\big]\le \beta\Big\}
\end{equation*}
for some budget $\beta \geq 0$ and where $\alpha >0$ is a user-selected threshold to ensure positivity.

\textbf{Optimal labeling policies.} Our goal is now to derive an optimal labeling policy among the feasible set. For this, we first need to formalize the notion of an optimal labeling policy. Our approach is as follows: an optimal $\pi$ should ensure that the corresponding debiased GARS estimator from Eq.~\eqref{eq:debiased_estimator} achieves minimal variance when using the newly collected data (and thus the smallest confidence interval width).

For a dataset generated by a fixed $\pi$, this variance is given by the semiparametric efficiency bound $\Sigma = \Sigma(\pi)$ from Theorem~\ref{thrm:estimation}, which can be written as a function of $\pi$ and also characterizes the best variance \emph{any} regular estimator can achieve. Hence, an optimal $\pi^\ast$ should minimize some meaningful function of the matrix $\Sigma(\pi)$ over the feasible set $\Pi_{\alpha, \beta}$.
\begin{definition}
\emph{A labeling policy $\pi^\ast$ is A-optimal if
\begin{equation}\label{eq:def_a_optimal}
    \pi^\ast \in \arg\min_{\pi \in \Pi_{\alpha, \beta}} \mathrm{tr}\left(\Sigma(\pi)\right),
\end{equation}
where $\mathrm{tr}$ is the trace operator. That is, an A-optimal labeling policy minimizes the sum of variances of the individual ranking score estimators.}
\end{definition}
An A-optimal $\pi^\ast$ minimizes the sum of the individual variance components of the debiased GARS estimator, thereby ensuring component-wise small confidence intervals. Different notions of optimality are possible and have their origin in the literature on optimal experimental design \citep{atkinsonOptimumExperimentalDesigns2007}. For example, we could minimize the determinant of $\Sigma(\pi)$ instead of the trace, leading to a smaller \emph{volume of the confidence ellipsoid} from Eq.~\eqref{eq:ci_ellipsoid}. In the following, we restrict ourselves to A-optimality as defined above, but we provide extensions to other optimality definitions in Appendix~\ref{app:acquisition}.

\textbf{Deriving optimal labeling policies.} 
The following result explicitly characterizes A-optimal labeling functions.

\begin{theorem}[A-optimal labeling policy]\label{thrm:allocation}
Assume $\Pi_{\alpha,\beta}\neq\emptyset$ (e.g.\ $\alpha\sum c_{ij}\le\beta$) and let $V_{jk}(\mu(x))
\;=\;
\mathrm{Var}(Y_{jk}\mid X=x)
\;=\;
\mathrm{Diag}\big(\mu_{jk}(x)\big)-\mu_{jk}(x)\mu_{jk}(x)^\top
\;\in\;\mathbb R^{C\times C}$.

Under Assumption~\ref{ass:indep_selection} (Appendix), any A-optimal policy satisfies, for almost all $x$ and all $j \neq k$,
\begin{equation}
\pi^{\ast}_{jk}(x)
=
\mathrm{clip}_{[\alpha, 1]}
\sqrt{\frac{\mathrm{tr}\!\Big(J_{jk}(\mu(x))\,V_{jk}(\mu(x))\,J_{jk}(\mu(x))^\top\Big)}{\lambda_A\,c_{jk}}},
\end{equation}
where $\mathrm{clip}_{[\alpha, 1]}(x) = \min \{ 1, \max \{ \alpha, x \}\}$ and where $\lambda_A\ge 0$ is chosen so that
$\E\!\big[\sum_{j \neq k} c_{jk}\,\pi^{\ast}_{jk}(X)\big]=\beta$.
\end{theorem}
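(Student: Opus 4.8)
The plan is to first derive an explicit formula for $\Sigma(\pi)$ as a function of the labeling policy, then reduce the A-optimality problem to a separable convex program, and finally solve it pointwise through a single Lagrange multiplier for the budget constraint. I would begin from the EIF of Theorem~\ref{thrm:estimation}, split as $\phi = A + B$ with $A = F(\mu(X)) - \theta$ and $B = \sum_{j\neq k}\frac{S_{jk}}{\pi_{jk}(X)}J_{jk}(\mu(X))(Y_{jk}-\mu_{jk}(X))$. Conditioning on $X$ and invoking missing-at-random ($S\indep Y\mid X$) together with $\E[Y_{jk}-\mu_{jk}(X)\mid X]=0$ shows $\E[B\mid X]=0$, so the cross term $\E[AB^\top]=\E[A\,\E[B^\top\mid X]]$ vanishes and $\Sigma(\pi)=\Var(F(\mu(X)))+\E[BB^\top]$, where the first summand is independent of $\pi$. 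Expanding $BB^\top$ and conditioning again, the diagonal pairs $(j,k)=(j',k')$ contribute $\tfrac{1}{\pi_{jk}(X)}J_{jk}V_{jk}J_{jk}^\top$ (using $S_{jk}^2=S_{jk}$, MAR, and $\E[(Y_{jk}-\mu_{jk})(Y_{jk}-\mu_{jk})^\top\mid X]=V_{jk}(\mu(X))$), while the off-diagonal pairs drop out under Assumption~\ref{ass:indep_selection}, which I read as guaranteeing that distinct pairs carry no cross-covariance (conditional independence of the label residuals across pairs given $X$). This yields
\begin{equation*}
\Sigma(\pi)=\Sigma_0+\E\Big[\textstyle\sum_{j\neq k}\tfrac{1}{\pi_{jk}(X)}J_{jk}(\mu(X))\,V_{jk}(\mu(X))\,J_{jk}(\mu(X))^\top\Big],\quad \Sigma_0=\Var(F(\mu(X))).
\end{equation*}

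Next I would take traces and write $g_{jk}(x)=\mathrm{tr}\big(J_{jk}(\mu(x))V_{jk}(\mu(x))J_{jk}(\mu(x))^\top\big)\ge 0$, so that minimizing $\mathrm{tr}(\Sigma(\pi))$ over $\Pi_{\alpha,\beta}$ is equivalent, up to the fixed constant $\mathrm{tr}(\Sigma_0)$, to
\begin{equation*}
\min_{\pi}\;\E\Big[\textstyle\sum_{j\neq k}\tfrac{g_{jk}(X)}{\pi_{jk}(X)}\Big]\quad\text{s.t.}\quad \E\Big[\textstyle\sum_{j\neq k}c_{jk}\pi_{jk}(X)\Big]\le\beta,\;\;\pi_{jk}(x)\in[\alpha,1].
\end{equation*}
Since $p\mapsto 1/p$ is convex on $(0,1]$ and the budget constraint is linear, this is a convex program over a convex feasible set, so it suffices to exhibit a KKT point.

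I would then introduce a single multiplier $\lambda_A\ge 0$ for the budget and form the Lagrangian $\E[\sum_{j\neq k}(g_{jk}(X)/\pi_{jk}(X)+\lambda_A c_{jk}\pi_{jk}(X))]-\lambda_A\beta$. Because the integrand decouples across $x$ and across pairs $(j,k)$, and the box constraints are separable, the minimization can be carried out pointwise: for fixed $x$ and $(j,k)$ I minimize $h(p)=g_{jk}(x)/p+\lambda_A c_{jk}p$ over $p\in[\alpha,1]$. As $h$ is strictly convex with unconstrained stationary point $p^\star=\sqrt{g_{jk}(x)/(\lambda_A c_{jk})}$, projecting onto $[\alpha,1]$ gives the clipped solution stated in the theorem. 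Complementary slackness pins down $\lambda_A$ so that the budget binds; a monotonicity/continuity argument (the budget spend $\E[\sum_{j\neq k} c_{jk}\,\mathrm{clip}_{[\alpha,1]}\sqrt{g_{jk}(X)/(\lambda_A c_{jk})}]$ is continuous and nonincreasing in $\lambda_A$, interpolating between $\alpha\,\E[\sum_{j\neq k} c_{jk}]$ and $\E[\sum_{j\neq k} c_{jk}]$) delivers a feasible $\lambda_A$ under $\Pi_{\alpha,\beta}\neq\emptyset$; if the budget is already slack at $\lambda_A=0$, the constraint is inactive and $\pi^\ast\equiv 1$.

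The hard part will be Step~1, namely rigorously showing that the off-diagonal cross-pair terms of $\E[BB^\top]$ vanish: this is precisely where Assumption~\ref{ass:indep_selection} must be invoked, and its exact statement determines whether the diagonalization goes through cleanly. A secondary technical obstacle is justifying that the variational problem over the function-valued policy $\pi:\mathcal X\to[\alpha,1]^{K\times K}$ reduces to almost-everywhere pointwise minimization; I would handle this via a measurable-selection argument ensuring the pointwise minimizer is measurable in $x$, together with strong duality for the convex program, which follows from a Slater-type feasibility condition implied by $\alpha\sum c_{ij}\le\beta$.
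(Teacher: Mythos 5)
Your overall route is the same as the paper's: split the EIF as $\phi = A + B_\pi$, use MAR to kill the cross term $\E[A B_\pi^\top]$, reduce $\mathrm{tr}(\Sigma(\pi))$ to a separable objective of the form $\E\big[\sum_{j\neq k} a_{jk}(X)/\pi_{jk}(X)\big]$ plus a $\pi$-free constant, solve the resulting convex program pointwise via a single budget multiplier with clipping to $[\alpha,1]$, and pin down $\lambda_A$ by monotonicity and continuity of the budget spend. Your Steps 2 and 3 match the paper's Lagrangian and bisection arguments essentially verbatim.

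However, your Step 1 — which you yourself flag as the delicate point — contains a genuine error. You claim the cross-pair terms of $\E[B_\pi B_\pi^\top]$ \emph{vanish}, reading Assumption~\ref{ass:indep_selection} as conditional independence of the label residuals across pairs given $X$. That is not what the assumption states: it factorizes only the conditional law of the selection indicators $S$ given $X$, and imposes no restriction on the joint law of the labels $(Y_{jk}, Y_{\ell m})$ given $X$, which may be arbitrarily correlated (e.g., the same rater judging several pairs). For distinct pairs, writing $U_{jk}=J_{jk}(\mu(X))(Y_{jk}-\mu_{jk}(X))$, MAR separates selection and label moments and Assumption~\ref{ass:indep_selection} gives $\E[S_{jk}S_{\ell m}\mid X]=\pi_{jk}(X)\pi_{\ell m}(X)$, so the inverse-propensity factors cancel and the cross term equals
\begin{equation*}
\E\big[U_{jk}U_{\ell m}^\top\big]
=
\E\Big[J_{jk}(\mu(X))\,\mathrm{Cov}(Y_{jk},Y_{\ell m}\mid X)\,J_{\ell m}(\mu(X))^\top\Big],
\end{equation*}
which is generally nonzero but \emph{free of $\pi$}. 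This is exactly how the paper proceeds: the off-diagonal terms are absorbed into the constant, not set to zero. Consequently your displayed identity $\Sigma(\pi)=\Sigma_0+\E\big[\sum_{j\neq k}\pi_{jk}(X)^{-1}J_{jk}V_{jk}J_{jk}^\top\big]$ with $\Sigma_0=\Var(F(\mu(X)))$ is false in general. Fortunately the error is inconsequential downstream: the terms you incorrectly dropped do not depend on $\pi$, so your reduced separable program differs from the correct one only by an additive constant, and the rest of your argument (KKT, clipping, budget monotonicity) goes through unchanged and yields the stated characterization. To repair the proof, replace ``no cross-covariance, hence zero'' with ``cross terms independent of $\pi$, hence constant in the optimization.''
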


\begin{proof}
    See Appendix~\ref{app:proofs}.
\end{proof}

\textbf{Interpretation.}
The A-optimal policy allocates more probability to pairs $(j,k)$ that are (i) intrinsically informative, meaning that $V_{jk}(\mu(x))=
\mathrm{Var}(Y_{jk}\mid X=x)$ is large (i.e., where humans tend to disagree more on their preferences); and (ii)~influential for the GARS $\theta$, which is measured by the Jacobian $J_{jk}$. Additionally, the costs $c_{jk}$ down‑weight expensive pairs via the denominator, and $\lambda_A$ scales the probabilities to satisfy the budget constraint.

\textbf{Implementation.} In practice, implementing the A-optimal labeling policy requires an estimate of the preference probabilities $\mu$. This can be obtained either (i) from historical preference data by fitting $\mu_{jkc}(x)=\mathbb P(Y_{jkc}=1\mid X=x,S_{jk}=1)$, or (ii) directly from an external judge such as an LLM-as-a-judge or auto-rater that outputs estimated category probabilities $\hat\mu_{jk}(x)$. 

Given $\hat\mu$, we then plug it into the closed-form rule in Theorem~\ref{thrm:allocation} to obtain $\hat\pi^\ast_{jk}(x)$. Since the left-hand side is monotone in $\lambda_A$, we estimate $\lambda_A$ efficiently via a one-dimensional binary search using the empirical average $\frac{1}{n}\sum_{i=1}^n\sum_{j\neq k}c_{jk}\hat\pi^\ast_{jk}(x_i)$. We then acquire new preference labels by \emph{independent Bernoulli sampling}: for each context $x$ and each pair $(j,k)$, we draw $S_{jk}\sim\mathrm{Bernoulli}(\hat\pi^\ast_{jk}(x))$ and query a human label only if $S_{jk}=1$. Optionally, to encourage exploration or improve robustness to misspecification of $\hat\mu$, one may mix the resulting policy with a baseline random policy, e.g., $\pi_{\mathrm{mix}}(x)=(1-\varepsilon)\hat\pi^\ast(x)+\varepsilon\pi_{\mathrm{rand}}(x)$ for some small $\varepsilon\in(0,1)$, while still enforcing the lower bound $\alpha$ and the budget constraint.

\section{Experiments}\label{sec:exp}

\begin{figure*}[t]
 \centering
\includegraphics[width=1\linewidth]{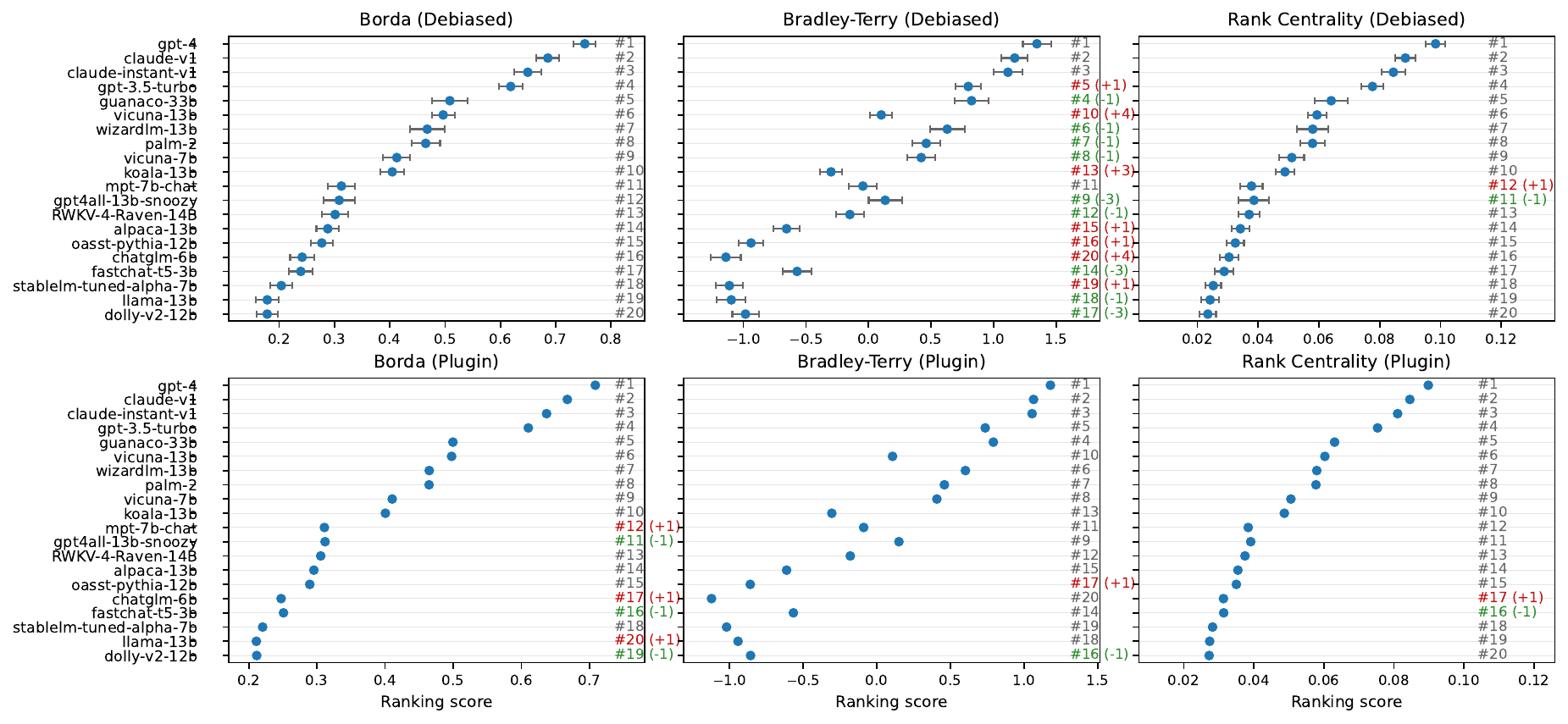}
\caption{\textbf{Results for Chatbot Arena preference data.} Shown: estimated ranking scores for different GARS functionals (Borda, BT, and rank centrality) and estimators (debiased and plug-in). Changes in ranking are indicated in red and green as compared to the baseline ranking from \citet{zhengJudgingLLMasaJudgeMTBench2023}. We report 95\% simultaneous confidence intervals, which attain near-zero width for plug-in estimators.}
\vspace{-0.2cm}
\label{fig:real_arena}
\vspace{-0.4cm}
\end{figure*}

\textbf{Implementation details.} We estimate nuisance functions using LightGBM, trained with standard gradient-boosted tree objectives for classification. As described in Sec.~\ref{sec:estimation}, we apply cross‑fitting and train nuisance models held‑out folds for predictions. The LightGBM hyperparameters are tuned via 3-fold cross-validation within the training folds (see Appendix~\ref{app:nuisances} for details). We compute valid $95\%$ simultaneous confidence intervals (CIs) for all ranking scores using multiple-testing adjustments (see Appendix~\ref{app:cis}).

\textbf{Data.} As standard in the literature on debiased machine learning, we use various synthetic datasets for evaluating our proposed methodology \citep{Chernozhukov.2018}. This is because synthetic data allows for comparison with ground-truth ranking scores, which are not available on real-world data. Nevertheless, we also apply our methodology to real-world datasets, including public preference data from Chatbot Arena and MT-Bench \citep{zhengJudgingLLMasaJudgeMTBench2023}. However, we emphasize that the score of our experiments is the evaluation of our methodology, not to provide a state-off-the-art leaderboard. We refer to Appendix~\ref{app:sim} for details on synthetic data and to Appendix~\ref{app:real} for details on real-world data.

\textbf{Baselines.} We follow common benchmarking practices in debiased machine learning \citep{Nie.2021, Kennedy.2023b} and compare debiased estimators from Theorem~\ref{thrm:estimation} with plug-in estimators from Eq.~\eqref{eq:plugin} using the \emph{same nuisance estimators $\hat{\eta}$}. We refrain from comparing different ML models for $\eta$ (e.g., different model architectures) as our primary goal is to show the advantages of model-agnostic debiased estimation.

\subsection{Synthetic data}

$\bullet$\,\textbf{Effectiveness of debiasing.} \underline{Experimental setup.} We compare the debiased GARS estimators for Borda scores, BT-scores, and RC-scores with their corresponding plug-in versions. For this, we generate synthetic preference data with varying sample sizes, $m=3$ categories (win, loss, tie), and $5$ synthetic covariates $X$. \underline{Results.} The results are in Table~\ref{tab:debiasing}. Across all GARS estimands and sampling regimes, debiased estimators significantly outperform their plug-in counterparts. Furthermore, debiased estimators achieve coverage close to the target of $95\%$, while the coverage of the plug-in estimator is invalid. \textit{This confirms our main results from Theorem~\ref{thrm:estimation} (statistical efficiency and asymptotic normality).}

\begin{table}[ht]
\centering
\caption{\textbf{Experimental results for debiasing.} Shown: Estimation error an CI coverage (mean $\pm$ $95\%$ CIs over 100 runs). Best in bold. Valid coverage (within CI) in \colorbox{covgood}{green}, otherwise in \colorbox{covbad}{red}.}
\label{tab:debiasing}
\resizebox{\columnwidth}{!}{%
\begin{tabular}{llcccccc}
\toprule
GARS & Estimator & \multicolumn{2}{c}{$n=1000$} & \multicolumn{2}{c}{$n=2000$} & \multicolumn{2}{c}{$n=3000$} \\
\cmidrule(lr){3-4} \cmidrule(lr){5-6} \cmidrule(lr){7-8}
 &  & Error & Coverage & Error & Coverage & Error & Coverage \\
\midrule
\multirow{2}{*}{Borda} & Plug-in
& 0.38 $\pm$ 0.08 & \cellcolor{covbad}0.17 $\pm$ 0.09
& 0.16 $\pm$ 0.04 & \cellcolor{covbad}0.20 $\pm$ 0.09
& 0.10 $\pm$ 0.02 & \cellcolor{covbad}0.15 $\pm$ 0.08 \\
& Debiased
& \textbf{0.15 $\pm$ 0.03} & \cellcolor{covgood}\textbf{0.94 $\pm$ 0.06}
& \textbf{0.08 $\pm$ 0.02} & \cellcolor{covgood}\textbf{0.95 $\pm$ 0.06}
& \textbf{0.05 $\pm$ 0.01} & \cellcolor{covgood}\textbf{0.97 $\pm$ 0.05} \\
\midrule
\multirow{2}{*}{BT} & Plug-in
& 0.62 $\pm$ 0.13 & \cellcolor{covbad}0.09 $\pm$ 0.07
& 0.30 $\pm$ 0.07 & \cellcolor{covbad}0.07 $\pm$ 0.07
& 0.22 $\pm$ 0.04 & \cellcolor{covbad}0.05 $\pm$ 0.06 \\
& Debiased
& \textbf{0.25 $\pm$ 0.05} & \cellcolor{covgood}\textbf{0.90 $\pm$ 0.07}
& \textbf{0.12 $\pm$ 0.02} & \cellcolor{covbad}\textbf{0.85 $\pm$ 0.08}
& \textbf{0.08 $\pm$ 0.01} & \cellcolor{covgood}\textbf{0.90 $\pm$ 0.07} \\
\midrule
\multirow{2}{*}{RC} & Plug-in
& 0.52 $\pm$ 0.12 & \cellcolor{covbad}0.12 $\pm$ 0.08
& 0.26 $\pm$ 0.06 & \cellcolor{covbad}0.06 $\pm$ 0.06
& 0.18 $\pm$ 0.04 & \cellcolor{covbad}0.05 $\pm$ 0.06 \\
& Debiased
& \textbf{0.27 $\pm$ 0.06} & \cellcolor{covgood}\textbf{0.91 $\pm$ 0.07}
& \textbf{0.13 $\pm$ 0.02} & \cellcolor{covgood}\textbf{0.95 $\pm$ 0.06}
& \textbf{0.09 $\pm$ 0.02} & \cellcolor{covgood}\textbf{0.95 $\pm$ 0.06} \\
\bottomrule
\end{tabular}
}
\end{table}

$\bullet$\,\textbf{Using external judges.} \underline{Experimental setup.} We evaluate the effectiveness of augmenting debiased estimators with external judges of different quality. To simulate judges, we add noise of different strengths to the ground-truth nuisance $\mu$ and use judge-as-features as described in Sec.~\ref{sec:estimation}. \underline{Results.} Figure~\ref{fig:sim_judge} shows the results across different judge noise levels. \textit{All debiased estimators achieve lower estimation error when incorporating higher quality judges, thus confirming the effectiveness of our method to leverage external judges.}

\begin{figure}[ht]
  \centering
  \begin{subfigure}[t]{0.48\columnwidth}
    \centering
    \includegraphics[width=\linewidth]{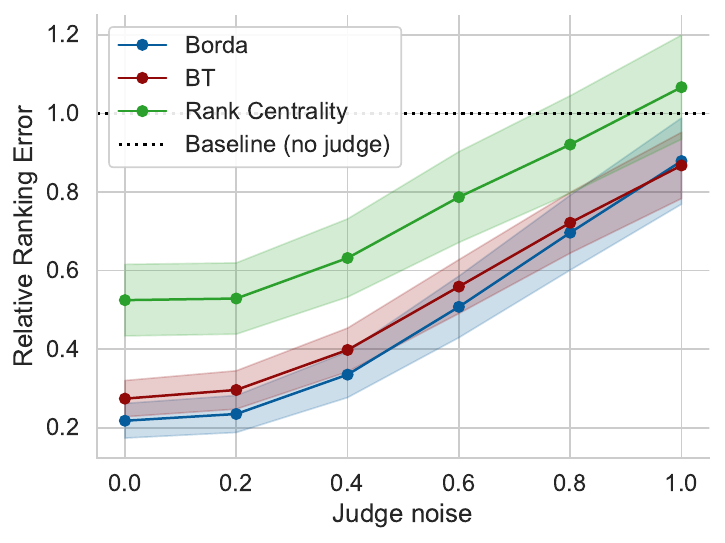}
    \caption{Results for synthetic judges}
    \label{fig:sim_judge}
  \end{subfigure}\hfill
  \begin{subfigure}[t]{0.48\columnwidth}
    \centering
    \includegraphics[width=\linewidth]{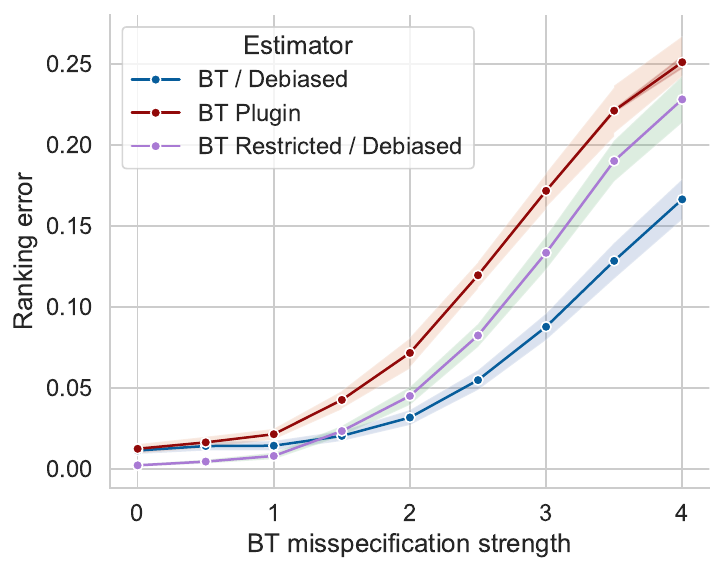}
    \caption{Results for BT misspecification}
    \label{fig:sim_bt_misspec}
  \end{subfigure}

  \caption{\textbf{Synthetic experimental results.} (a) Relative ranking error normalized by the no-judge baseline (mean and $95\%$ CIs over $n=100$ runs). (b) BT-projection estimation error (mean and $95\%$ CIs over $30$ runs).}
  \label{fig:sim_two}
\vspace{-0.4cm}
\end{figure}

$\bullet$\,\textbf{Optimal data acquisition.} \underline{Experimental setup.} We compare the A-optimal data-collection policy from Theorem~\ref{thrm:allocation} with a uniform baseline that collects preference data at random. For this, we label $n=1500$ contexts $X$ two times with both policies using a budget of $\beta = 2000$ each for three different GARS types. We then compare the estimation error of the corresponding debiased estimators. \underline{Results.} The results are shown in Table~\ref{tab:collection}. \textit{The A-optimal policy outperforms random collection across all GARS types, thus confirming our result from Theorem~\ref{thrm:allocation}.}

\begin{table}[ht]
\centering
\caption{\textbf{Experimental results for preference data acquisition.} Shown: Ranking MSE (Mean and $95\%$ confidence intervals over 50 runs $(\cdot 10^{2})$) of debiased GARS estimators using the data from two labeling policies at budget $n=2000$. Best in bold.}
\resizebox{1\columnwidth}{!}{%
\begin{tabular}{lccc}
\toprule
Policy & Borda & BT & Rank Centrality \\
\midrule
A-optimal policy & \textbf{0.130 $\pm$ 0.031} & \textbf{2.861 $\pm$ 0.670} & \textbf{0.017 $\pm$ 0.004} \\
Random policy & 0.141 $\pm$ 0.030 & 2.974 $\pm$ 0.585 & 0.020 $\pm$ 0.005 \\
\bottomrule
\end{tabular}
}
\label{tab:collection}
\end{table}

$\bullet$\,\textbf{BT model misspecification.} \underline{Experimental setup.} Finally, we evaluate the effect of misspecification of the BT model. We create synthetic datasets with binary preferences deviating from a true BT model by a parameter $\gamma$, where $\gamma = 0$ equals a true BT model. We then compare three estimators: (i) the debiased BT projection estimator from Sec.\ref{sec:gars_motivation}, (ii) the corresponding plug-in estimator, and (iii) a restricted debiased estimator which assumes that a BT model holds (see Appendix~\ref{app:bt_restricted} for details). \underline{Results.} Fig.\ref{fig:sim_bt_misspec} shows that the restricted debiased estimator achieves the best performance under a correct BT model specification due to a lower efficiency bound in the setting. However, under BT misspecification, the debiased projection estimator outperforms both the plug-in and the misspecified debiased estimator. \textit{This highlights the robustness of the debiased projection estimator to BT model misspecifications.}

\subsection{Real-world data}

\textbf{Experimental setup.} We demonstrate \framework using a public, real-world preference dataset with conversations from Chatbot Arena \citep{zhengJudgingLLMasaJudgeMTBench2023}. The dataset includes $n=32980$ different prompts, $K = 20$ different models a categorical preference outcome in \{\textit{first model wins, second model wins, tie, tie (both bad)}\}. Context features $X$ combine a toxicity probability (from a RoBERTa-based tag) with a model-agnostic low-rank 100-dimensional representation of the prompt, obtained via TF-IDF followed by a truncated SVD. We report estimated GARS scores and $95\%$ simultaneous confidence intervals for Borda, BT, and Rank Centrality using both plug-in and debiased estimators.

\textbf{Results.} The results are shown in Fig.~\ref{fig:real_arena}, with the baseline ranking of models taken from \cite{zhengJudgingLLMasaJudgeMTBench2023} (note that this is \emph{not} a ground-truth ranking). We make the following observations: (i) all obtained rankings are largely consistent with the baseline ranking, thus indicating a reasonable base performance. (ii) BT deviates further from the baseline than Borda and RC, thus highlighting that different GARS types can yield different rankings even on the same data. (iii) While the point ranking scores of the plug-in estimators do not deviate too much from their debiased counterparts, their confidence intervals collapse and are most likely invalid. In summary, the results showcase the applicability and flexibility of the proposed GARS framework for real-world preference data.

\section{Discussion}\label{sec:discussion}

We proposed a nonparametric statistical framework for preference-based LLM evaluation that targets ranking functionals directly from contextual preference data. By formulating evaluation through generalized average ranking scores, our approach unifies several common ranking procedures while allowing flexible nonparametric estimation. We developed debiased estimators with valid uncertainty quantification, showed how auxiliary judges can be incorporated without treating them as ground truth, and studied optimal data collection policies for improving the efficiency of leaderboard estimation.

\textbf{Limitations and future work.} Our work does not capture all aspects of evaluating modern AI systems. In particular, many evaluations involve complex systems with tools, memory, multi-agent interactions, long-horizon tasks, or deployment-specific feedback loops. Extending the framework to such settings is an important direction for future work. Another key practical limitation is lack of overlap. When some model pairs, contexts, or annotator subpopulations have very small selection probabilities, estimators based on inverse propensity weighting can become unstable. Practical implementations may require additional stabilization, such as trimming, clipping, or propensity calibration as establised in causal inference literature. Finally, our results rely on missing-at-random assumptions for the observed preference labels. Violations can introduce bias that cannot be removed by debiasing alone. A promising direction is to adapt partial identification approaches to preference-based LLM evaluation, yielding uncertainty sets or bounds for ranking scores when missingness assumptions are weakened.

\textbf{Conclusion.} By combining flexible ranking estimands, debiased estimation, uncertainty quantification, auxiliary judges, and efficient data collection, the proposed framework offers methodology for more reliable and transparent leaderboards.

\clearpage

\section*{Impact Statement} This paper presents work whose goal is to advance the field of machine learning. There are many potential societal consequences of our work, none of which we feel must be specifically highlighted here.

\bibliography{references}
\bibliographystyle{icml2026}

\newpage
\appendix
\onecolumn

\section{Extended related work on experimental design}\label{app:rw}

Optimal experimental design has a long history \citep{atkinsonOptimumExperimentalDesigns2007}, including classical variance-minimizing allocation rules such as Neyman allocation \citep{neymanTwoDifferentAspects1934} and their extensions to semiparametric causal inference and semi-supervised learning settings \citep{cookSemiparametricEfficientInference2024, oprescuEfficientAdaptiveExperimentation2025, zrnicActiveStatisticalInference2024}. In the context of LLM evaluation, \citet{angelopoulosCostoptimalActiveAI2025} study cost-optimal active evaluation, but not preference-based ranking targets.
There is also a line of work on actively selecting pairwise comparisons for ranking \citep{jamiesonActiveRankingUsing2011} and on optimal design under parametric BT/BTL models \citep{grasshoffOptimalDesignBradley2008, guoExperimentalDesignBradleyterry2018}. Finally, dueling bandits \citep{yueInteractivelyOptimizingInformation2009, dudikContextualDuelingBandits2015, Mukherjee.2024} study interactive comparison feedback but typically target regret minimization rather than efficient statistical inference for a fixed ranking estimand.
In contrast, our design problem is \emph{efficient influence function (EIF)-driven}: we derive labeling policies that minimize the asymptotic variance of semiparametrically efficient estimators for general (nonparametric) ranking functionals.

\newpage

\section{Theoretical results}\label{app:proofs}

\subsection{Setup}\label{app:assumptions}

\textbf{Identifiability assumptions.} We make the following assumptions \citep{Rubin.1974}: (i)~Positivity: $\pi_{jk}(x) > 0$ whenever $\prob(X=x) > 0$ (every pair of items has a positive selection probability), and (ii)~Missing at random: $Y \indep S \mid X$ (the preference label does not affect missingness). Note that (ii) implies that $\mu_{jkc}(x) = \prob(Y_{jkc} = 1 \mid X=x, S_{jk} = 1)$ which can be estimated from the observed data $\mathcal{D}$.

Assumptions (i) and (ii) are standard in the literature on missing data and causal inference \citep{Rubin.1974}. If they fail, the population ranking target may no longer be identifiable from the observed data alone. In practice, this means observed labels are then systematically biased, and \emph{no method can guarantee unbiased estimation} without adding further assumptions or information. Note that (i) can be ensured by design (e.g., via randomized exploration to ensure all pairs have non-negligible selection probability).

\textbf{DML assumptions.}\label{ass:dml}
We work under the standard conditions used for one-step / DML estimators with cross-fitting \citep{vanderVaart.1998, Chernozhukov.2018}:
\begin{enumerate}[label=(A\arabic*)]
\item \textbf{Sampling.} The observations $o_i=(x_i,a_i,\widetilde y_i)$ are i.i.d.\ draws from $\prob$.

\item \textbf{Smoothness of the target map.} The function
$F:[0,1]^{K\times K\times C}\to\R^d$ is (Fr\'echet) differentiable on a neighborhood of
$\{\mu(x): x\in\mathcal X\}$ with Jacobian blocks $J_{jk}(\mu)=\nabla_{\mu_{jk}}F(\mu)\in\R^{d\times C}$.
Moreover, the Jacobian is locally Lipschitz: there is $L<\infty$ such that for all $\mu,\mu'$ in this neighborhood,
\[
\|J_{jk}(\mu)-J_{jk}(\mu')\| \le L\|\mu-\mu'\|, \qquad \forall j\neq k,
\]
and $\sup_{x}\|J_{jk}(\mu(x))\|<\infty$ for all $j\neq k$.

\item \textbf{Moments.} $\E[\|\phi(O,\eta)\|^2]<\infty$.

\item \textbf{Cross-fitted nuisance estimation.} The nuisance estimators $\hat\eta=(\hat\mu,\hat\pi)$ are obtained
by cross-fitting on $V\ge 2$ folds as described in Sec.\ref{sec:estimation}. In addition, the nuisance estimators satisfy the usual product-rate conditions
\[
\sum_{j\neq k}\|\hat\mu_{jk}-\mu_{jk}\|_{L^2(P)}^2 = o_p(n^{-1/2}),
\qquad
\sum_{j\neq k}\|\hat\mu_{jk}-\mu_{jk}\|_{L^2(P)}\,\|\hat\pi_{jk}-\pi_{jk}\|_{L^2(P)} = o_p(n^{-1/2}),
\]
which is implied, for instance, by $\|\hat\mu-\mu\|_{L^2}=o_p(n^{-1/4})$ and $\|\hat\pi-\pi\|_{L^2}=o_p(n^{-1/4})$ in fixed $K$ settings. Furthermore, we assume that $\inf_{x,j\neq k}\hat\pi_{jk}(x)\ge \underline\pi/2$ with probability tending to one (e.g., via truncation).
\end{enumerate}

\textbf{Optimal data acquisition.} For deriving the A-optimal policy in Theorem~\ref{thrm:allocation}, we impose the following assumption.
\begin{assumption}[Independent selection]\label{ass:indep_selection}
\emph{For all  $x \in \mathcal{X}$ and $s\in\{0,1\}^{K\times K}$, it holds}
\begin{equation*}
\mathbb P\big(S=s\mid X=x\big)
=\prod_{j\neq k}\pi_{jk}(x)^{\,s_{jk}}\big(1-\pi_{jk}(x)\big)^{1-s_{jk}}.
\end{equation*}
\end{assumption}
Assumption~\ref{ass:indep_selection} ensures that labeling a preference for pair $(j, k)$ does not affect the labeling for a different pair $(\ell, m)$. This is satisfied in many practical applications, for example, in adaptive experimentation, where the data is already generated by a previous labeling policy that we control. However, we emphasize that we only rely on Assumption~\ref{ass:indep_selection} for \emph{optimality} of our labeling policy. Even under violations of Assumption~\ref{ass:indep_selection}, our labeling policy may still yield high-quality labels, especially when compared to a policy that collects labels at random.

\subsection{Proof of Theorem~\ref{thrm:estimation}}

\begin{proof}
We prove (i) the efficient influence function (EIF) in Eq.~(\eqref{eq:eif}), and (ii) asymptotic linearity, normality, and efficiency of the one-step estimator from Eq.~(\eqref{eq:debiased_estimator}) under standard regularity conditions.

We start by defining the parameter map
\[
\theta(\prob) \;=\; \E_\prob\!\left[F\!\big(\mu_\prob(X)\big)\right]\in\R^d,
\]
where $\mu_\prob$ denotes the regression function under $\prob$.

\paragraph{Step 1: Derivation of the EIF.}
Consider a regular parametric submodel $\{\prob_\varepsilon:\varepsilon\in(-\delta,\delta)\}$ through $\prob_0=\prob$
with score $s(O)=\left.\partial_\varepsilon \log \prob_\varepsilon(O)\right|_{\varepsilon=0}$.
Under missing at random, the observed-data likelihood factorizes as
\[
\prob(o) = \prob(x)\,\prob(s\mid x)\,\prob(\widetilde y\mid x,s),
\]
and the corresponding score decomposes as
\[
s(O) = s_X(X) + s_{S\mid X}(S\mid X) + s_{\widetilde Y\mid X,S}(\widetilde Y\mid X,S),
\]
with $\E[s_X(X)]=0$, $\E[s_{S\mid X}(S\mid X)\mid X]=0$, and
$\E[s_{\widetilde Y\mid X,S}(\widetilde Y\mid X,S)\mid X,S]=0$.

We compute the pathwise derivative of $\theta(\prob_\varepsilon)$ at $\varepsilon=0$.
Write $\mu_\varepsilon$ for the regression under $\prob_\varepsilon$ and $\theta_\varepsilon=\theta(\prob_\varepsilon)$.
By the product rule, we have
\begin{align*}
\left.\partial_\varepsilon \theta_\varepsilon\right|_{0}
&=
\left.\partial_\varepsilon \E_\varepsilon\!\left[F\!\big(\mu_\varepsilon(X)\big)\right]\right|_{0}\\
&=
\E\!\left[\big(F(\mu(X))-\theta\big)\,s_X(X)\right]
\;+\;
\E\!\left[\left.\partial_\varepsilon F\!\big(\mu_\varepsilon(X)\big)\right|_{0}\right].
\end{align*}
Since $F$ is differentiable, the chain rule yields
\[
\left.\partial_\varepsilon F\!\big(\mu_\varepsilon(X)\big)\right|_{0}
=
\sum_{j\neq k} J_{jk}\!\big(\mu(X)\big)\,\left.\partial_\varepsilon \mu_{jk,\varepsilon}(X)\right|_{0}.
\]

\emph{Key identity (regression under MAR).}
Fix $(j,k)$ and any component $c\in\{1,\dots,C\}$.
Because $Y\perp S\mid X$ and $\widetilde Y_{jk}=S_{jk}Y_{jk}$, we have, for all $x$, that
\[
\mu_{jk,c}(x)=\E[Y_{jk,c}\mid X=x]=\E[Y_{jk,c}\mid X=x,S_{jk}=1].
\]
For a regular submodel that perturbs the conditional law of labels (and/or $\prob(x)$), standard likelihood calculus gives
\[
\left.\partial_\varepsilon \mu_{jk,c,\varepsilon}(x)\right|_{0}
=
\E\!\left[\frac{S_{jk}}{\pi_{jk}(X)}\,(Y_{jk,c}-\mu_{jk,c}(X))\; s(O)\,\Big|\,X=x\right].
\]
This follows from (i) $\E[S_{jk}/\pi_{jk}(X)\mid X]=1$, (ii) $\E[Y_{jk,c}-\mu_{jk,c}(X)\mid X]=0$,
and (iii) the fact that $s_{\widetilde Y\mid X,S}$ spans mean-zero perturbations of the conditional law
of the observed labels given $(X,S)$; the factor $S_{jk}/\pi_{jk}(X)$ is the usual coarsening adjustment
under MAR/positivity.

Multiplying by $J_{jk}(\mu(X))$ and taking expectation over $X$ yields
\[
\E\!\left[\sum_{j\neq k} J_{jk}\!\big(\mu(X)\big)\left.\partial_\varepsilon \mu_{jk,\varepsilon}(X)\right|_{0}\right]
=
\E\!\left[\sum_{j\neq k}\frac{S_{jk}}{\pi_{jk}(X)}
J_{jk}\!\big(\mu(X)\big)\big(Y_{jk}-\mu_{jk}(X)\big)\; s(O)\right],
\]
where $Y_{jk},\mu_{jk}(X)\in\R^C$ and the product is matrix--vector multiplication.

Putting the pieces together, we obtain
\[
\left.\partial_\varepsilon \theta_\varepsilon\right|_{0}
=
\E\!\left[\left\{
F\!\big(\mu(X)\big)-\theta
+
\sum_{j\neq k}\frac{S_{jk}}{\pi_{jk}(X)}
J_{jk}\!\big(\mu(X)\big)\big(Y_{jk}-\mu_{jk}(X)\big)
\right\} s(O)\right].
\]
Therefore, the candidate influence function
\[
\phi(O,\eta)
=
F\!\big(\mu(X)\big)-\theta
+
\sum_{j\neq k}\frac{S_{jk}}{\pi_{jk}(X)}
J_{jk}\!\big(\mu(X)\big)\big(Y_{jk}-\mu_{jk}(X)\big)
\]
satisfies the defining property of the (canonical) gradient:
for every regular submodel with score $s$, the pathwise derivative equals $\E[\phi s]$.

It remains to verify $\phi$ is efficient.
Because $\theta(\prob)$ does not depend on $\prob(s\mid x)$, the nuisance tangent space contains
$\{h(X,S):\E[h\mid X]=0\}$.
We have $\E[\phi(O,\eta)\mid X,S]=F(\mu(X))-\theta$ since
$\E[Y_{jk}-\mu_{jk}(X)\mid X,S_{jk}=1]=0$ and the summand vanishes when $S_{jk}=0$.
Hence, $\E[\phi\,h(X,S)]=0$ for all such $h$, i.e., $\phi$ is orthogonal to the missingness tangent space.
Thus $\phi$ is the projection of the gradient onto the orthocomplement of the nuisance tangent space,
i.e., the EIF. This proves Eq.~\eqref{eq:eif}.

\paragraph{Step 2: One-step estimator, asymptotic linearity and normality.}
Define the one-step estimator
\[
\hat\theta_{\mathrm{EIF}}
=
\frac1n\sum_{i=1}^n
\Bigg[
F\!\big(\hat\mu(x_i)\big)
+
\sum_{j\neq k}\frac{s_{i,jk}}{\hat\pi_{jk}(x_i)}
J_{jk}\!\big(\hat\mu(x_i)\big)\big(y_{i,jk}-\hat\mu_{jk}(x_i)\big)
\Bigg],
\]
where $(\hat\mu,\hat\pi)$ are cross-fitted nuisance estimates.

Let $P_n$ denote the empirical measure and $P$ the true distribution.
Write $\phi(O,\eta)$ for the EIF and $\phi(O,\hat\eta)$ for the same expression with $(\mu,\pi)$ replaced by
$(\hat\mu,\hat\pi)$ and $\theta$ replaced by $\hat\theta_{\mathrm{EIF}}$ where appropriate.
A standard one-step expansion yields
\[
\hat\theta_{\mathrm{EIF}}-\theta
=
(\prob_n-\prob)\phi(O,\eta)
\;+\;
\underbrace{\prob\{\phi(O,\hat\eta)-\phi(O,\eta)\}}_{=:R_n}
\;+\;
\underbrace{(\prob_n-\prob)\{\phi(O,\hat\eta)-\phi(O,\eta)\}}_{=:r_n}.
\]
Cross-fitting implies $r_n=o_p(n^{-1/2})$ under mild moment conditions because, on each fold,
$\phi(O_i,\hat\eta)$ is evaluated using nuisance estimates trained on independent data, allowing
a conditional central limit theorem (CLT)/conditional law of large numbers (LLN) argument (see, e.g., standard cross-fitting results for one-step/DML estimators).

\paragraph{Control of the remainder $R_n$.}
Recall $R_n := \prob\{\phi(O,\hat\eta)-\phi(O,\eta)\}=\prob\phi(O,\hat\eta)$ since $\prob\phi(O,\eta)=0$.
Using iterated expectations and MAR, for each $(j,k)$, we have
\[
\E\!\left[\frac{S_{jk}}{\hat\pi_{jk}(X)}\big(Y_{jk}-\hat\mu_{jk}(X)\big)\,\Big|\,X\right]
=
\frac{\pi_{jk}(X)}{\hat\pi_{jk}(X)}\big(\mu_{jk}(X)-\hat\mu_{jk}(X)\big),
\]
and therefore
\[
R_n
=
\E\!\Big[F(\hat\mu(X))-F(\mu(X))\Big]
+
\sum_{j\neq k}\E\!\Big[\frac{\pi_{jk}(X)}{\hat\pi_{jk}(X)}J_{jk}(\hat\mu(X))\big(\mu_{jk}(X)-\hat\mu_{jk}(X)\big)\Big].
\]
Add and subtract the linear term $\sum_{j\neq k}J_{jk}(\mu(X))(\hat\mu_{jk}(X)-\mu_{jk}(X))$ to obtain a decomposition
$R_n=A_n+B_n+C_n$, where $A_n$ is the second-order Taylor remainder of $F$ around $\mu$,
$B_n$ collects the error from replacing $J_{jk}(\mu(X))$ by $J_{jk}(\hat\mu(X))$, and $C_n$ collects the error from replacing $\pi_{jk}(X)$ by $\hat\pi_{jk}(X)$ in the inverse-weight factor.
By the locally Lipschitz property of the Jacobian (Taylor theorem in integral form), there exists a constant $L<\infty$ such that
\[
\big\|F(\hat\mu(X))-F(\mu(X))-\sum_{j\neq k}J_{jk}(\mu(X))(\hat\mu_{jk}(X)-\mu_{jk}(X))\big\|
\;\le\; L\|\hat\mu(X)-\mu(X)\|^2,
\]
and similarly $\|J_{jk}(\hat\mu(X))-J_{jk}(\mu(X))\|\le L\|\hat\mu(X)-\mu(X)\|$.
Moreover, under positivity and truncation $\inf_{x,j\neq k}\hat\pi_{jk}(x)\ge \underline\pi/2$ with high probability, the ratios $\pi_{jk}(X)/\hat\pi_{jk}(X)$ are uniformly bounded and
$\big|\pi_{jk}(X)/\hat\pi_{jk}(X)-1\big|\lesssim |\hat\pi_{jk}(X)-\pi_{jk}(X)|$.
Combining these bounds and applying Cauchy--Schwarz to the terms involving $\hat\pi-\pi$ yields the generic estimate
\[
\|R_n\|
\;\lesssim\;
\sum_{j\neq k}\Big(
\|\hat\mu_{jk}-\mu_{jk}\|_{L^2(P)}^2
+
\|\hat\mu_{jk}-\mu_{jk}\|_{L^2(P)}\,
\|\hat\pi_{jk}-\pi_{jk}\|_{L^2(P)}
\Big),
\]
which is $o_p(n^{-1/2})$ under the rate conditions in Assumption~(A5).

Consequently,
\[
\sqrt{n}(\hat\theta_{\mathrm{EIF}}-\theta)
=
\frac{1}{\sqrt{n}}\sum_{i=1}^n \phi(O_i,\eta) + o_p(1).
\]
By the multivariate central limit theorem and $\E[\phi(O,\eta)]=0$, we obtain
\[
\sqrt{n}(\hat{\theta}_\mathrm{EIF}-\theta)\;\rightsquigarrow\;
\mathcal N_d\!\big(0,\Sigma\big),
\qquad
\Sigma=\E\!\left[\phi(O,\eta)\phi(O,\eta)^\top\right].
\]

\paragraph{Step 3: Efficiency.}
Since $\phi(O,\eta)$ is the EIF (canonical gradient), $\Sigma$ is the semiparametric efficiency bound.
The asymptotic linear representation above therefore shows that $\hat\theta_{\mathrm{EIF}}$
achieves this bound, i.e., it is asymptotically efficient.

This completes the proof.
\end{proof}

\subsection{Proof of Theorem~\ref{thrm:allocation}}

\begin{proof}
Fix $\pi\in\Pi_{\alpha,\beta}$ and recall from Theorem~\ref{thrm:estimation} that the EIF is
\[
\phi(O,\eta,\theta)
=
F(\mu(X))-\theta
+
\sum_{j\neq k}\frac{S_{jk}}{\pi_{jk}(X)}\,J_{jk}(\mu(X))\big(Y_{jk}-\mu_{jk}(X)\big).
\]
Write
\[
A(X):=F(\mu(X))-\theta,\qquad
B_\pi(O):=\sum_{j\neq k}\frac{S_{jk}}{\pi_{jk}(X)}\,J_{jk}(\mu(X))\big(Y_{jk}-\mu_{jk}(X)\big),
\]
so that $\phi(O,\eta,\theta)=A(X)+B_\pi(O)$. Recall that the efficiency bound equals
\[
\Sigma(\pi)=\E\!\left[\phi(O,\eta,\theta)\phi(O,\eta,\theta)^\top\right].
\]
which can be expanded to
\[
\Sigma(\pi)=\E[A(X)A(X)^\top]+\E[B_\pi(O)B_\pi(O)^\top]+2\,\E[A(X)B_\pi(O)^\top].
\]
By the missing at random assumption ($Y\indep S\mid X$) we have, for each $j\neq k$,
\[
\E\!\left[J_{jk}(\mu(X))\big(Y_{jk}-\mu_{jk}(X)\big)\mid X\right]
=
J_{jk}(\mu(X))\,\E\!\left[Y_{jk}-\mu_{jk}(X)\mid X\right]=0,
\]
hence $\E[B_\pi(O)\mid X]=0$ and thus
\[
\E[A(X)B_\pi(O)^\top]=\E\!\big[A(X)\E[B_\pi(O)^\top\mid X]\big]=0.
\]
Therefore, the $\pi$-dependence of $\Sigma(\pi)$ is entirely through $\E[B_\pi(O)B_\pi(O)^\top]$.

\medskip
\noindent\textbf{Step 1: Reducing $\mathrm{tr}(\Sigma(\pi))$ to a separable objective.}
Condition on $X=x$ and define
\[
U_{jk}(x):=J_{jk}(\mu(x))\big(Y_{jk}-\mu_{jk}(x)\big)\in\R^d.
\]
Then $B_\pi(O)\mid (X=x)=\sum_{j\neq k}\frac{S_{jk}}{\pi_{jk}(x)}\,U_{jk}(x)$, so
\begin{align*}
\E\!\big[B_\pi(O)B_\pi(O)^\top\mid X=x\big]
&=
\sum_{j\neq k}\sum_{\ell\neq m}
\E\!\left[\frac{S_{jk}S_{\ell m}}{\pi_{jk}(x)\pi_{\ell m}(x)}\Bigm|X=x\right]
\E\!\big[U_{jk}(x)U_{\ell m}(x)^\top\mid X=x\big],
\end{align*}
where we used $Y\indep S\mid X$ to separate selection and label moments.
Under Assumption~\ref{ass:indep_selection},
\[
\E\!\left[\frac{S_{jk}S_{\ell m}}{\pi_{jk}(x)\pi_{\ell m}(x)}\Bigm|X=x\right]
=
\begin{cases}
\frac{1}{\pi_{jk}(x)}, & (j,k)=(\ell,m),\\[4pt]
1, & (j,k)\neq(\ell,m).
\end{cases}
\]
Hence
\[
\E\!\big[B_\pi(O)B_\pi(O)^\top\mid X=x\big]
=
\sum_{j\neq k}\frac{1}{\pi_{jk}(x)}\,\E\!\big[U_{jk}(x)U_{jk}(x)^\top\mid X=x\big]
+
\sum_{\substack{(j,k)\neq(\ell,m)\\ j\neq k,\;\ell\neq m}}
\E\!\big[U_{jk}(x)U_{\ell m}(x)^\top\mid X=x\big].
\]
The second (off-diagonal) term does not involve $\pi$, so taking traces and then expectations over $X$ yields
\[
\mathrm{tr}(\Sigma(\pi))
=
\text{const}
+
\E\!\left[\sum_{j\neq k}\frac{1}{\pi_{jk}(X)}\,
\mathrm{tr}\!\Big(\E\!\big[U_{jk}(X)U_{jk}(X)^\top\mid X\big]\Big)\right],
\]
where “const” collects all $\pi$-free terms.

Next, we compute the conditional variance. Since
\[
\Var(Y_{jk}\mid X=x)=V_{jk}(\mu(x))=\mathrm{Diag}(\mu_{jk}(x))-\mu_{jk}(x)\mu_{jk}(x)^\top,
\]
we have
\begin{align*}
\E\!\big[U_{jk}(x)U_{jk}(x)^\top\mid X=x\big]
&=
J_{jk}(\mu(x))\,\E\!\big[(Y_{jk}-\mu_{jk}(x))(Y_{jk}-\mu_{jk}(x))^\top\mid X=x\big]\,
J_{jk}(\mu(x))^\top\\
&=
J_{jk}(\mu(x))\,V_{jk}(\mu(x))\,J_{jk}(\mu(x))^\top.
\end{align*}
Define the nonnegative scalar weight
\[
a_{jk}(x):=\mathrm{tr}\!\Big(J_{jk}(\mu(x))\,V_{jk}(\mu(x))\,J_{jk}(\mu(x))^\top\Big)\ge 0.
\]
Then minimizing $\mathrm{tr}(\Sigma(\pi))$ over $\Pi_{\alpha,\beta}$ is equivalent to solving
\begin{equation}\label{eq:reduced_problem_Aopt}
\min_{\pi:\,\mathcal X\to[\alpha,1]^{K\times K}}
\E\!\left[\sum_{j\neq k}\frac{a_{jk}(X)}{\pi_{jk}(X)}\right]
\quad\text{s.t.}\quad
\E\!\left[\sum_{j\neq k}c_{jk}\pi_{jk}(X)\right]\le \beta.
\end{equation}

\medskip
\noindent\textbf{Step 2: Lagrangian and pointwise minimization.}
Introduce a multiplier $\lambda_A\ge 0$ for the budget constraint in Eq.~(\eqref{eq:reduced_problem_Aopt}).
The Lagrangian is
\[
\mathcal L(\pi,\lambda_A)
=
\E\!\left[\sum_{j\neq k}\left(\frac{a_{jk}(X)}{\pi_{jk}(X)}+\lambda_A c_{jk}\pi_{jk}(X)\right)\right]
-\lambda_A\beta,
\]
with box constraints $\pi_{jk}(x)\in[\alpha,1]$ imposed explicitly.
For fixed $\lambda_A$, the integrand is separable across $(x,j,k)$, so, for $x$ almost everywhere (a.e.)
and each $(j,k)$, we minimize
\[
g_{jk,x}(\pi):=\frac{a_{jk}(x)}{\pi}+\lambda_A c_{jk}\pi
\quad\text{over}\quad \pi\in[\alpha,1].
\]
On $(0,\infty)$, $g'_{jk,x}(\pi)=-a_{jk}(x)/\pi^2+\lambda_A c_{jk}$, so the unique
unconstrained minimizer satisfies
\[
\pi=\sqrt{\frac{a_{jk}(x)}{\lambda_A c_{jk}}}.
\]
Projecting onto $[\alpha,1]$ yields
\[
\pi^{\ast}_{jk}(x)
=
\mathrm{clip}_{[\alpha,1]}
\sqrt{\frac{a_{jk}(x)}{\lambda_A c_{jk}}}
=
\mathrm{clip}_{[\alpha,1]}
\sqrt{\frac{\mathrm{tr}\!\Big(J_{jk}(\mu(x))\,V_{jk}(\mu(x))\,J_{jk}(\mu(x))^\top\Big)}{\lambda_A\,c_{jk}}}.
\]

\medskip
\noindent\textbf{Step 3: Choosing $\lambda_A$ to satisfy the budget.}
Define
\[
G(\lambda):=\E\!\left[\sum_{j\neq k}c_{jk}\,\mathrm{clip}_{[\alpha,1]}
\sqrt{\frac{a_{jk}(X)}{\lambda\,c_{jk}}}\right].
\]
For each $(j,k)$ and a.e.\ $x$, the map $\lambda\mapsto \mathrm{clip}_{[\alpha,1]}\sqrt{a_{jk}(x)/(\lambda c_{jk})}$
is nonincreasing, hence $G(\lambda)$ is nonincreasing in $\lambda$. Moreover,
$\lim_{\lambda\to\infty}G(\lambda)=\E[\sum_{j\neq k}c_{jk}\alpha]$ and
$\lim_{\lambda\downarrow 0}G(\lambda)=\E[\sum_{j\neq k}c_{jk}]$ due to clipping at $1$.
Since $\Pi_{\alpha,\beta}\neq\emptyset$ implies $\beta\ge \E[\sum_{j\neq k}c_{jk}\alpha]$,
there exists $\lambda_A\ge 0$ such that $G(\lambda_A)=\beta$ whenever the constraint is binding;
if the budget is slack, the minimizer is $\pi^\ast_{jk}(x)\equiv 1$ (corresponding to $\lambda_A=0$).
This proves the stated characterization of any A-optimal policy.
\end{proof}

\subsection{Robustness of plug-in A-optimal acquisition}

The following result quantifies robustness of our data-collection policy w.r.t. nuisance estimation error. 

\begin{theorem}\label{thm:aopt_robustness}
For any measurable \(\nu:\mathcal X\to [0,1]^{K\times K\times C}\), define
\[
V_{jk}(\nu(x))
:=
Diag(\nu_{jk}(x))-\nu_{jk}(x)\nu_{jk}(x)^\top,
\]
and
\[
a_{jk}(x;\nu)
:=
tr\!\Big(J_{jk}(\nu(x))\,V_{jk}(\nu(x))\,J_{jk}(\nu(x))^\top\Big).
\]
Let
\[
Q(\pi;\nu)
:=
\E\!\left[\sum_{j\neq k}\frac{a_{jk}(X;\nu)}{\pi_{jk}(X)}\right],
\qquad \pi\in\Pi_{\alpha,\beta}.
\]
Let \(\pi^\ast\in\arg\min_{\pi\in\Pi_{\alpha,\beta}}Q(\pi;\mu)\) be the oracle A-optimal policy, and let
\[
\tilde\pi \in \arg\min_{\pi\in\Pi_{\alpha,\beta}}Q(\pi;\hat\mu)
\]
be the plug-in A-optimal policy computed from an estimate \(\hat\mu\).

For a fixed labeling policy \(q\in\Pi_{\alpha,\beta}\), let \(\hat\theta_{\mathrm{EIF}}^{(q)}\) denote the debiased estimator in Eq.~\eqref{eq:debiased_estimator} computed from data collected under \(q\), with known realized propensities, i.e., with \(\hat\pi=q\). Assume the conditions of Theorem~\ref{thrm:estimation} hold for every fixed \(q\in\Pi_{\alpha,\beta}\), and moreover that
\[
R_n(q)
:=
\E\!\left[\left\|\hat\theta_{\mathrm{EIF}}^{(q)}-\theta\right\|_2^2\right]
=
\frac{1}{n}tr\!\big(\Sigma(q)\big)+o(n^{-1}).
\]
Then
\[
R_n(\tilde\pi)-R_n(\pi^\ast)
\;\le\;
\frac{2}{\alpha n}\,
\E\!\left[\sum_{j\neq k}\big|a_{jk}(X;\hat\mu)-a_{jk}(X;\mu)\big|\right]
+o(n^{-1}).
\]
In particular, if there exists \(L_a<\infty\) such that
\[
|a_{jk}(x;\nu)-a_{jk}(x;\nu')|
\le
L_a\|\nu(x)-\nu'(x)\|_\infty
\qquad\text{for all }x,\; j\neq k,
\]
then any uniform error bound \(\|\hat\mu-\mu\|_\infty\le \varepsilon\) implies
\[
R_n(\tilde\pi)-R_n(\pi^\ast)
\;\le\;
\frac{2K(K-1)L_a}{\alpha n}\,\varepsilon
+o(n^{-1}).
\]
\end{theorem}

\begin{proof}
As shown in Step~1 of the proof of Theorem~\ref{thrm:allocation},
\[
tr\!\big(\Sigma(\pi)\big)
=
\mathrm{const}+Q(\pi;\mu),
\]
where the constant does not depend on \(\pi\). Hence it is enough to bound
\[
Q(\tilde\pi;\mu)-Q(\pi^\ast;\mu).
\]
Add and subtract the plug-in objective:
\begin{align*}
Q(\tilde\pi;\mu)-Q(\pi^\ast;\mu)
&=
\big(Q(\tilde\pi;\mu)-Q(\tilde\pi;\hat\mu)\big)
+
\big(Q(\tilde\pi;\hat\mu)-Q(\pi^\ast;\hat\mu)\big) \\
&\qquad +
\big(Q(\pi^\ast;\hat\mu)-Q(\pi^\ast;\mu)\big).
\end{align*}
Since \(\tilde\pi\) minimizes \(Q(\cdot;\hat\mu)\) over \(\Pi_{\alpha,\beta}\), the middle term is non-positive. Therefore
\[
Q(\tilde\pi;\mu)-Q(\pi^\ast;\mu)
\le
2\sup_{\pi\in\Pi_{\alpha,\beta}}|Q(\pi;\mu)-Q(\pi;\hat\mu)|.
\]
For any feasible \(\pi\), using \(\pi_{jk}(x)\ge \alpha\),
\begin{align*}
|Q(\pi;\mu)-Q(\pi;\hat\mu)|
&=
\left|
\E\!\left[\sum_{j\neq k}
\frac{a_{jk}(X;\mu)-a_{jk}(X;\hat\mu)}{\pi_{jk}(X)}
\right]
\right| \\
&\le
\E\!\left[\sum_{j\neq k}
\frac{|a_{jk}(X;\mu)-a_{jk}(X;\hat\mu)|}{\pi_{jk}(X)}
\right] \\
&\le
\frac{1}{\alpha}
\E\!\left[\sum_{j\neq k}|a_{jk}(X;\mu)-a_{jk}(X;\hat\mu)|\right].
\end{align*}
Combining the last two displays yields
\[
tr\!\big(\Sigma(\tilde\pi)\big)-tr\!\big(\Sigma(\pi^\ast)\big)
\le
\frac{2}{\alpha}
\E\!\left[\sum_{j\neq k}|a_{jk}(X;\mu)-a_{jk}(X;\hat\mu)|\right].
\]
By the assumed first-order MSE expansion,
\[
R_n(q)=\frac{1}{n}tr\!\big(\Sigma(q)\big)+o(n^{-1}),
\]
for every fixed \(q\in\Pi_{\alpha,\beta}\). Therefore
\[
R_n(\tilde\pi)-R_n(\pi^\ast)
\le
\frac{2}{\alpha n}
\E\!\left[\sum_{j\neq k}|a_{jk}(X;\mu)-a_{jk}(X;\hat\mu)|\right]
+o(n^{-1}),
\]
which proves the first claim.

If, in addition, \(a_{jk}(x;\cdot)\) is uniformly \(L_a\)-Lipschitz and
\(\|\hat\mu-\mu\|_\infty\le \varepsilon\), then
\[
|a_{jk}(x;\hat\mu)-a_{jk}(x;\mu)|
\le
L_a\varepsilon
\qquad\text{for all }x,\; j\neq k,
\]
so
\[
\E\!\left[\sum_{j\neq k}|a_{jk}(X;\hat\mu)-a_{jk}(X;\mu)|\right]
\le
K(K-1)L_a\varepsilon.
\]
Substituting this into the previous bound gives
\[
R_n(\tilde\pi)-R_n(\pi^\ast)
\le
\frac{2K(K-1)L_a}{\alpha n}\,\varepsilon+o(n^{-1}),
\]
as claimed.
\end{proof}

\clearpage

\section{Additional examples of GARS}\label{app:gars_examples}

So far, our examples of GARS implicitly focused on settings with binary preference categories (e.g., ``$j$ wins'' vs.\ ``$k$ wins''). In practice, human feedback often includes richer labels such as ties, or categories indicating that both items are good or both are bad. In this section, we introduce a weighted version of GARS that is compatible with an arbitrary number of categories $C$ and that unifies all examples.

\subsection{Unifying multiple categories via weighted preferences}\label{app:weighted_gars}

\paragraph{Category weights.}
Recall that $Y_{jk}\in\{e_1,\dots,e_C\}$ is a one-hot categorical label for pair $(j,k)$, and that
\[
\mu_{jkc}(x) \;=\; \mathbb P\big(Y_{jkc}=1 \mid X=x\big), \qquad c=1,\dots,C
\]
denotes the corresponding preference probabilities. For each category $c$, we allow separate weights for the first and the second item in the ordered pair:
\[
w^{(1)},\,w^{(2)} \;\in\; \mathbb R^C,
\qquad
w^{(1)} = (w^{(1)}_1,\dots,w^{(1)}_C),
\quad
w^{(2)} = (w^{(2)}_1,\dots,w^{(2)}_C).
\]
Intuitively, $w^{(1)}_c$ measures how much category $c$ contributes in favor of the first item in the ordered pair $(j,k)$, while $w^{(2)}_c$ measures how much category $c$ contributes in favor of the second item.

Given weights $(w^{(1)},w^{(2)})$, we define \emph{directional weighted scores}
\begin{align}
s_{jk}^{(1)}(x)
&\;=\;\sum_{c=1}^C w^{(1)}_c\,\mu_{jkc}(x), 
&
s_{jk}^{(2)}(x)
&\;=\;\sum_{c=1}^C w^{(2)}_c\,\mu_{jkc}(x).
\end{align}
Here, $s_{jk}^{(1)}(x)$ aggregates the probabilities of all categories which favor the first item in $(j,k)$, weighted according to $w^{(1)}$, and $s_{jk}^{(2)}(x)$ analogously aggregates categories in favor of the second item.

To account for possible position-bias (or display-order), we symmetrize the directional scores across both orders $(j,k)$ and $(k,j)$ via
\begin{equation}\label{eq:s_sym_def}
s^{\mathrm{sym}}_{jk}(x)
\;=\;
\frac{1}{2}\Big( s^{(1)}_{jk}(x) + s^{(2)}_{kj}(x) \Big),
\qquad j\neq k,
\end{equation}
and we set $s^{\mathrm{sym}}_{jj}(x)=0$. The quantity $s^{\mathrm{sym}}_{jk}(x)$ can be interpreted as a symmetrized, position-bias-corrected score by how much item $j$ is preferred over item $k$ under context $x$.

\paragraph{Default weights for common label semantics.}
The choice of $(w^{(1)},w^{(2)})$ is user--specified and encodes how each category is interpreted. In our implementation, we provide the following default weights for common label schemes:
\begin{itemize}
    \item \emph{Binary preferences} ($C=2$): categories \emph{win / loss} for the first item.
    \[
    w^{(1)} = (1, 0), \qquad
    w^{(2)} = (0, 1).
    \]
    \item \emph{Ternary preferences} ($C=3$): categories \emph{win / loss / tie}.
    \[
    w^{(1)} = (1, 0, 1/2), \qquad
    w^{(2)} = (0, 1, 1/2).
    \]
    \item \emph{Quaternary preferences} ($C=4$): categories \emph{win / loss / both good / both bad}.
    \[
    w^{(1)} = (1, 0, 1, 0), \qquad
    w^{(2)} = (0, 1, 1, 0).
    \]
    \item \emph{Quinary preferences} ($C=5$): categories \emph{win / loss / both good / both bad / tie}.
    \[
    w^{(1)} = (1, 0, 1, 0, 1/2), \qquad
    w^{(2)} = (0, 1, 1, 0, 1/2).
    \]
\end{itemize}

\subsection{GARS examples for weighted category preferences}
The weighted scores $s^{\mathrm{sym}}(x)=\big(s^{\mathrm{sym}}_{jk}(x)\big)_{j,k=1}^K$ provide a generic building block from which we can construct different ranking models, all within our GARS framework.

\begin{enumerate}[label=(E\arabic*)]

\item \textbf{Weighted Borda scores.}
For item $j$, the weighted Borda score is defined as the average symmetrized score of $j$ against all other items:
\begin{equation}\label{eq:weighted_borda}
F_j\big(\mu(x)\big)
\;=\;
\frac{1}{K-1}\sum_{k\neq j} s^{\mathrm{sym}}_{jk}(x),
\qquad j=1,\dots,K.
\end{equation}
The corresponding GARS are
$\theta = \mathbb E[F(\mu(X))] \in\mathbb R^K$ with $F(\mu(x))=(F_1(\mu(x)),\dots,F_K(\mu(x)))$.
When the categories are binary win/loss and we use the default weights above, Eq.~\eqref{eq:weighted_borda} coincides with the standard Borda scores.

\item \textbf{Weighted Bradley--Terry scores.}
To obtain a generalized BT--type model, we first form directional ``win probabilities'' from the weighted scores:
\begin{align}
p_{jk}^{(1)}(x) &= s^{(1)}_{jk}(x),
&
p_{jk}^{(2)}(x) &= s^{(2)}_{kj}(x),
\end{align}
and define edge logits by averaging over the two display orders,
\begin{equation}\label{eq:weighted_bt_logit}
\ell_{jk}(\mu(x))
\;=\;
\frac{1}{2}\Big(
\logit\big(p_{jk}^{(1)}(x)\big)
+
\logit\big(p_{jk}^{(2)}(x)\big)
\Big),
\qquad j<k.
\end{equation}
Let $B\in\mathbb R^{\binom K2\times K}$ be the incidence matrix of the complete graph (row $(j,k)$ equals $e_j-e_k$ for $j<k$), $L_0 = B^\top B$ the unweighted Laplacian, and
\[
H \;=\; \begin{bmatrix}
I_{K-1}\\[2pt]
-\mathbf 1_{K-1}^\top
\end{bmatrix}
\in\mathbb R^{K\times(K-1)},
\]
so that $\{\phi\in\mathbb R^K : \mathbf 1^\top\phi=0\} = \{H\alpha : \alpha\in\mathbb R^{K-1}\}$ as before. Setting
\begin{equation}
\ell(\mu(x))
=
\big(\ell_{jk}(\mu(x))\big)_{j<k}\in\mathbb R^{\binom K2},
\end{equation}
we define the generalized BT scores via
\begin{equation}\label{eq:weighted_bt_scores}
F(\mu(x))
\;=\;
H\big(H^\top L_0 H\big)^{-1} H^\top B^\top \,\ell(\mu(x))
\;\in\;\mathbb R^K.
\end{equation}
The corresponding GARS are $\theta=\mathbb E[F(\mu(X))]$. In the special case $C=2$ with binary win/loss labels and default weights, $F(\mu(x))$ reduces to the BT scores from Section~\ref{sec:gars_motivation}.

\item \textbf{Weighted rank centrality / PageRank scores.}
We can likewise define a weighted version of rank centrality by using the symmetrized scores to build a Markov transition matrix. For each context $x$, let
\begin{equation}
R_{ij}(\mu(x)) 
\;=\; s^{\mathrm{sym}}_{ji}(x),
\qquad i\neq j,
\end{equation}
and $R_{ii}(\mu(x))=0$. Intuitively, $R_{ij}$ measures how much item $j$ is preferred over item $i$. We form row--stochastic transition probabilities
\begin{equation}
T_{ij}(\mu(x))
\;=\;
\frac{R_{ij}(\mu(x))}{\sum_{\ell\neq i} R_{i\ell}(\mu(x))},
\qquad i\neq j,
\end{equation}
with $T_{ii}(\mu(x))=0$. (If the denominator is zero, we may, e.g., define the $i$-th row to be uniform over all $j$.) As in Section~\ref{sec:gars}, we define
\begin{equation}\label{eq:weighted_rank_centrality}
F(\mu(x))
\;=\;
\big(I - T(\mu(x))^\top + \mathbf 1\mathbf 1^\top\big)^{-1}\mathbf 1
\;\in\;\mathbb R^K,
\end{equation}
which yields the stationary distribution of the Markov chain with transition matrix $T(\mu(x))$. The associated GARS are again $\theta=\mathbb E[F(\mu(X))]$. When $C=2$ and the weights correspond to binary win/loss, Eq.~(\eqref{eq:weighted_rank_centrality}) reduces to the standard rank centrality construction.
\end{enumerate}

\textbf{Custom ranking estimands.} The GARS form makes it straightforward to construct new application-specific ranking criteria.

For example, one can define a \emph{thresholded dominance score} that rewards only sufficiently confident wins:
\begin{equation}
F^{\mathrm{thr}}_j(\mu(x))
=
\frac{1}{K-1}\sum_{k\neq j}\phi_{\tau}\!\left(\mu_{jk}(x)\right),
\end{equation}
where $\mu_{jk}(x)$ is the probability that item $j$ beats item $k$ in context $x$, and $\phi_{\tau}$ is a thresholding function, e.g.
\begin{equation}
\phi_{\tau}(u)=\mathbf{1}\{u\geq \tau\}
\qquad \text{or} \qquad
\phi_{\tau}(u)=\sigma\!\left(\alpha(u-\tau)\right),
\end{equation}
for a threshold $\tau > 1/2$, logistic function $\sigma$, and sharpness parameter $\alpha>0$. The corresponding GARS estimand is
\begin{equation}
\theta^{\mathrm{thr}}_j = \mathbb{E}\!\left[F^{\mathrm{thr}}_j(\mu(X))\right].
\end{equation}
This score emphasizes \emph{clear} pairwise wins rather than marginal ones.

A second example is a \emph{robust / worst-case score}, which favors items that do not have a clear weakness against any competitor:
\begin{equation}
F^{\mathrm{rob}}_j(\mu(x))
=
\min_{k\neq j}\mu_{jk}(x),
\end{equation}
with corresponding estimand
\begin{equation}
\theta^{\mathrm{rob}}_j = \mathbb{E}\!\left[F^{\mathrm{rob}}_j(\mu(X))\right].
\end{equation}
A smooth alternative is the soft minimum
\begin{equation}
F^{\mathrm{softrob}}_j(\mu(x))
=
-\frac{1}{\lambda}\log\!\left(\sum_{k\neq j}\exp\!\left(-\lambda\,\mu_{jk}(x)\right)\right),
\qquad \lambda>0.
\end{equation}
This score emphasizes robustness by penalizing poor matchups more strongly than average-based criteria.

\clearpage

\section{Derivation of closed-form BT-projection scores.}\label{app:derivation_bt}
Here we provide a short derivation of the BT projection scores from Eq.~\eqref{eq:bt_scores_projection}.
Fix a context
\(x\), and let
\[
r(x) = (r_1(x),\ldots,r_K(x))^\top
\]
denote the Bradley--Terry (BT) score vector. For each unordered pair
\((j,k)\) with \(j<k\), let \(B\) be the signed incidence matrix whose
\((j,k)\)-row satisfies
\[
B_{(j,k),i}
=
\begin{cases}
1, & i=j,\\
-1, & i=k,\\
0, & \text{otherwise}.
\end{cases}
\]
Thus,
\[
(Br(x))_{jk} = r_j(x)-r_k(x), \qquad j<k.
\]

Under the BT model with an order-dependent bias term \(b(x)\), the pairwise
preference probabilities satisfy
\begin{align}
\mu_{jk1}(x)
&=
\sigma\!\left(r_j(x)-r_k(x)+b(x)\right), \\
\mu_{kj2}(x)
&=
\sigma\!\left(r_j(x)-r_k(x)-b(x)\right),
\end{align}
where \(\sigma(t)=(1+\exp(-t))^{-1}\). The second probability uses the
reversed ordering. Since \(\logit(\sigma(t))=t\), the symmetrized log-odds are
\begin{align}
\ell(\mu(x))
&=
\left(
\frac{
\logit(\mu_{jk1}(x))+\logit(\mu_{kj2}(x))
}{2}
\right)_{j<k} \\
&=
\left(
\frac{
r_j(x)-r_k(x)+b(x)
+
r_j(x)-r_k(x)-b(x)
}{2}
\right)_{j<k} \\
&=
\left(r_j(x)-r_k(x)\right)_{j<k} \\
&=
Br(x).
\end{align}
Thus, under the BT model, the objective in Eq.~(3) becomes
\begin{align}
\min_{\phi:\,\mathbf{1}^\top \phi=0}
\left\|B\phi-\ell(\mu(x))\right\|_2^2
&=
\min_{\phi:\,\mathbf{1}^\top \phi=0}
\left\|B\phi-Br(x)\right\|_2^2 .
\end{align}
Since \(r(x)\) is feasible and achieves objective value zero, it is a minimizer.
With the zero-sum normalization \(\mathbf{1}^\top r(x)=0\), this minimizer is
unique. Therefore, Eq.~\eqref{eq:bt_scores_projection} exactly recovers the BT score vector whenever the BT
model holds.

\paragraph{Relation to existing ranking formulations.}
The purpose of Eq.~\eqref{eq:bt_scores_projection} is to obtain a closed-form BT-type projection written
directly in terms of the preference probabilities \(\mu\). Related formulations appear, for
example, in HodgeRank-style methods, where an observed pairwise comparison flow
is projected onto the score-difference subspace \citep{jiangStatisticalRankingCombinatorial2010}. In contrast, Eq.~Eq.~\eqref{eq:bt_scores_projection} projects
the symmetrized log-odds \(\ell(\mu(x))\) and does so in the contextual,
nonparametric setting considered here.

\clearpage

\section{Jacobians of weighted GARS examples}\label{app:weighted_gars_jacobians}

Recall that for any GARS of the form
\[
\theta(\mathbb P) \;=\; \mathbb E\big[F(\mu(X))\big] \in \mathbb R^d,
\]
the efficient influence function and debiased estimator depend on the Jacobian
\[
J_{jk}(\mu) \;=\; \frac{\partial F(\mu)}{\partial \mu_{jk}}
\;\in\;\mathbb R^{d\times C},
\qquad j\neq k,
\]
where $\mu_{jk}=(\mu_{jk1},\dots,\mu_{jkC})^\top$ is the vector of category probabilities for pair $(j,k)$.
Below we derive $J_{jk}(\mu)$ for the three weighted GARS functionals from
Section~\ref{app:weighted_gars}. Throughout, we write $e_r$ for the $r$-th standard basis vector in
$\mathbb R^K$.

\paragraph{Preliminaries.}
We recall the weighted and symmetrized scores from Eq.~(\eqref{eq:s_sym_def}). Given category weights
$w^{(1)},w^{(2)}\in\mathbb R^C$, we define
\begin{align*}
s_{jk}^{(1)}(\mu)
&\;=\; \sum_{c=1}^C w^{(1)}_c\,\mu_{jkc}
\;=\; \langle w^{(1)}, \mu_{jk}\rangle,
\\[2pt]
s_{jk}^{(2)}(\mu)
&\;=\; \sum_{c=1}^C w^{(2)}_c\,\mu_{jkc}
\;=\; \langle w^{(2)}, \mu_{jk}\rangle,
\end{align*}
and the symmetrized score
\[
s^{\mathrm{sym}}_{jk}(\mu)
\;=\;
\frac{1}{2}\Big(s^{(1)}_{jk}(\mu) + s^{(2)}_{kj}(\mu)\Big),
\qquad j\neq k,
\]
with $s^{\mathrm{sym}}_{jj}(\mu)=0$.
Simple differentiation gives, for any $j\neq k$ and $c=1,\dots,C$,
\begin{align}
\frac{\partial s^{\mathrm{sym}}_{jk}(\mu)}{\partial \mu_{jkc}}
&\;=\;\frac{1}{2}\,w^{(1)}_c,&
\frac{\partial s^{\mathrm{sym}}_{jk}(\mu)}{\partial \mu_{kjc}}
&\;=\;\frac{1}{2}\,w^{(2)}_c,\label{eq:dsym_basic}
\end{align}
and $\partial s^{\mathrm{sym}}_{ab}/\partial\mu_{jkc}=0$ for all other $(a,b)$.

\subsection{Weighted Borda scores}

The weighted Borda scores from Eq.~\eqref{eq:weighted_borda} are
\[
F_j(\mu)
=
\frac{1}{K-1}\sum_{k\neq j} s^{\mathrm{sym}}_{jk}(\mu),
\qquad j=1,\dots,K.
\]
Fix a pair $(j,k)$ with $j\neq k$ and a category index $c$. Using Eq.~\eqref{eq:dsym_basic} and the definition of $F$, we obtain
\begin{align*}
\frac{\partial F_j(\mu)}{\partial \mu_{jkc}}
&=
\frac{1}{K-1}\,\frac{\partial s^{\mathrm{sym}}_{jk}(\mu)}{\partial \mu_{jkc}}
=
\frac{1}{K-1}\cdot \frac{1}{2} w^{(1)}_c
=
\frac{w^{(1)}_c}{2(K-1)},
\\[4pt]
\frac{\partial F_k(\mu)}{\partial \mu_{jkc}}
&=
\frac{1}{K-1}\,\frac{\partial s^{\mathrm{sym}}_{kj}(\mu)}{\partial \mu_{jkc}}
=
\frac{1}{K-1}\cdot \frac{1}{2} w^{(2)}_c
=
\frac{w^{(2)}_c}{2(K-1)},
\end{align*}
and $\partial F_r(\mu)/\partial\mu_{jkc}=0$ for all $r\notin\{j,k\}$.

Equivalently, the Jacobian $J_{jk}(\mu)\in\mathbb R^{K\times C}$ has only two non-zero rows and can be written compactly as
\begin{equation}\label{eq:J_borda_matrix}
J_{jk}(\mu)
=
\frac{1}{2(K-1)}\Big(
e_j\,w^{(1)\top} + e_k\,w^{(2)\top}
\Big),
\qquad j\neq k,
\end{equation}
where $w^{(1)\top},w^{(2)\top}\in\mathbb R^{1\times C}$ are viewed as row vectors.

\subsection{Weighted Bradley--Terry scores}

The weighted BT scores from Eq.~\eqref{eq:weighted_bt_scores} are constructed as follows. For each unordered edge $\{a,b\}$ with $a<b$, define
\begin{equation}\label{eq:ell_edge_def}
\ell_{ab}(\mu)
=
\frac{1}{2}\Big(
\logit\big(s^{(1)}_{ab}(\mu)\big)
+
\logit\big(s^{(2)}_{ba}(\mu)\big)
\Big),
\end{equation}
and collect these into
\[
\ell(\mu) = \big(\ell_{ab}(\mu)\big)_{a<b} \in\mathbb R^{\binom K2}.
\]
Let $B\in\mathbb R^{\binom K2\times K}$ be the incidence matrix of the complete graph, $L_0=B^\top B$ the Laplacian, $H$ the zero-sum basis as in Eq.~\eqref{eq:weighted_bt_scores}, and define the constant matrix
\[
P \;=\; H\big(H^\top L_0 H\big)^{-1}H^\top B^\top \in\mathbb R^{K\times \binom K2}.
\]
Then
\[
F(\mu) \;=\; P\,\ell(\mu) \in\mathbb R^K
\]
gives the (generalized) BT scores.

Fix an ordered pair $(j,k)$ with $j\neq k$ and a category index $c$. The vector $\ell(\mu)$ depends on $\mu_{jk}$ through exactly one edge, namely the unordered edge $\{j,k\}$. Let $e(j,k)\in\{1,\dots,\binom K2\}$ denote the index of edge $\{j,k\}$ with the convention $a<b$ inside the indexing. Then
\[
\frac{\partial F(\mu)}{\partial \mu_{jkc}}
=
P_{\cdot,e(j,k)}\,
\frac{\partial \ell_{j\wedge k,\,j\vee k}(\mu)}{\partial \mu_{jkc}},
\]
where $P_{\cdot,e(j,k)}\in\mathbb R^K$ is the corresponding column of $P$ and $j\wedge k = \min\{j,k\}$, $j\vee k = \max\{j,k\}$.

We write, for brevity,
\begin{align*}
p^{(1)}_{ab}(\mu) &= s^{(1)}_{ab}(\mu) = \langle w^{(1)},\mu_{ab}\rangle,\\
p^{(2)}_{ab}(\mu) &= s^{(2)}_{ab}(\mu) = \langle w^{(2)},\mu_{ab}\rangle,
\end{align*}
so that $\logit'(p) = 1/\{p(1-p)\}$ wherever $p\in(0,1)$. Differentiating Eq.~\eqref{eq:ell_edge_def} gives
\begin{align*}
\frac{\partial \ell_{ab}(\mu)}{\partial \mu_{abc}}
&=
\frac{1}{2}\,\logit'\big(p^{(1)}_{ab}(\mu)\big)\,
\frac{\partial p^{(1)}_{ab}(\mu)}{\partial \mu_{abc}}
=
\frac{1}{2}\,\frac{w^{(1)}_c}{p^{(1)}_{ab}(\mu)\big(1-p^{(1)}_{ab}(\mu)\big)},\\[4pt]
\frac{\partial \ell_{ab}(\mu)}{\partial \mu_{bac}}
&=
\frac{1}{2}\,\logit'\big(p^{(2)}_{ba}(\mu)\big)\,
\frac{\partial p^{(2)}_{ba}(\mu)}{\partial \mu_{bac}}
=
\frac{1}{2}\,\frac{w^{(2)}_c}{p^{(2)}_{ba}(\mu)\big(1-p^{(2)}_{ba}(\mu)\big)},
\end{align*}
and $\partial\ell_{ab}/\partial\mu_{jkc}=0$ for all other $(j,k)$.

Thus the Jacobian $J_{jk}(\mu)\in\mathbb R^{K\times C}$ is an outer product of the relevant column of $P$ and a category-weighted derivative factor:
\begin{itemize}
\item If $j<k$, then $\mu_{jk}$ appears as ``first item’’ in edge $(j,k)$ and
\begin{equation}\label{eq:J_bt_case1}
J_{jk}(\mu)
=
P_{\cdot,e(j,k)}\;
\Bigg(
\frac{1}{2}\,
\frac{w^{(1)\top}}{p^{(1)}_{jk}(\mu)\big(1-p^{(1)}_{jk}(\mu)\big)}
\Bigg),
\end{equation}
where the right-hand side is a $K\times C$ outer product.
\item If $j>k$, then $\mu_{jk}$ appears as ``second item’’ in edge $(k,j)$ and
\begin{equation}\label{eq:J_bt_case2}
J_{jk}(\mu)
=
P_{\cdot,e(k,j)}\;
\Bigg(
\frac{1}{2}\,
\frac{w^{(2)\top}}{p^{(2)}_{kj}(\mu)\big(1-p^{(2)}_{kj}(\mu)\big)}
\Bigg).
\end{equation}
\end{itemize}
In both cases, $J_{jk}(\mu)$ is rank--one as a matrix in $\mathbb R^{K\times C}$.

\subsection{Weighted Rank Centrality / PageRank scores}

For the weighted rank centrality functional in Eq.~\eqref{eq:weighted_rank_centrality}, we first construct
\[
R_{ij}(\mu) \;=\; s^{\mathrm{sym}}_{ji}(\mu),
\qquad i\neq j,
\quad
R_{ii}(\mu)=0,
\]
and define row sums
\[
d_i(\mu) = \sum_{\ell\neq i} R_{i\ell}(\mu).
\]
On the set where $d_i(\mu)>0$ for all $i$, we define the row-stochastic transition matrix
\[
T_{ij}(\mu)
=
\frac{R_{ij}(\mu)}{d_i(\mu)},
\qquad i\neq j,
\quad
T_{ii}(\mu)=0,
\]
and set
\[
A(\mu)
=
I - T(\mu)^\top + \mathbf 1\mathbf 1^\top.
\]
Then the rank centrality scores are
\[
F(\mu)
=
A(\mu)^{-1}\mathbf 1 \in\mathbb R^K.
\]

Fix an ordered pair $(j,k)$ with $j\neq k$ and category $c$. We first differentiate $R$ and $d$ using Eq.~\eqref{eq:dsym_basic}. Since $R_{ij}(\mu)=s^{\mathrm{sym}}_{ji}(\mu)$, we obtain
\begin{align*}
\frac{\partial R_{jk}(\mu)}{\partial \mu_{jkc}}
&=
\frac{\partial s^{\mathrm{sym}}_{kj}(\mu)}{\partial \mu_{jkc}}
=
\frac{1}{2}\,w^{(2)}_c,\\[4pt]
\frac{\partial R_{kj}(\mu)}{\partial \mu_{jkc}}
&=
\frac{\partial s^{\mathrm{sym}}_{jk}(\mu)}{\partial \mu_{jkc}}
=
\frac{1}{2}\,w^{(1)}_c,
\end{align*}
and all other partial derivatives $\partial R_{i\ell} / \partial\mu_{jkc}$ vanish. Consequently, the row sums satisfy
\begin{align*}
\frac{\partial d_j(\mu)}{\partial \mu_{jkc}}
&=
\frac{\partial R_{jk}(\mu)}{\partial \mu_{jkc}}
=
\frac{1}{2}\,w^{(2)}_c,\\[4pt]
\frac{\partial d_k(\mu)}{\partial \mu_{jkc}}
&=
\frac{\partial R_{kj}(\mu)}{\partial \mu_{jkc}}
=
\frac{1}{2}\,w^{(1)}_c,
\end{align*}
and $\partial d_i(\mu)/\partial\mu_{jkc}=0$ for all $i\notin\{j,k\}$.

For $d_i(\mu)>0$, the quotient rule gives, for any $i,\ell$,
\[
\frac{\partial T_{i\ell}(\mu)}{\partial \mu_{jkc}}
=
\frac{
\frac{\partial R_{i\ell}(\mu)}{\partial \mu_{jkc}}\,d_i(\mu)
-
R_{i\ell}(\mu)\,\frac{\partial d_i(\mu)}{\partial \mu_{jkc}}
}{d_i(\mu)^2}.
\]
Using the structure above, the non-zero derivatives are confined to rows $i=j$ and $i=k$:
\begin{align*}
\frac{\partial T_{jk}(\mu)}{\partial \mu_{jkc}}
&=
\frac{\frac{1}{2}w^{(2)}_c\,d_j(\mu) - R_{jk}(\mu)\,\frac{1}{2}w^{(2)}_c}{d_j(\mu)^2}
=
\frac{1}{2}w^{(2)}_c\,\frac{d_j(\mu)-R_{jk}(\mu)}{d_j(\mu)^2},\\[4pt]
\frac{\partial T_{j\ell}(\mu)}{\partial \mu_{jkc}}
&=
-\frac{R_{j\ell}(\mu)\,\frac{1}{2}w^{(2)}_c}{d_j(\mu)^2},
\qquad \ell\neq k,\\[4pt]
\frac{\partial T_{kj}(\mu)}{\partial \mu_{jkc}}
&=
\frac{\frac{1}{2}w^{(1)}_c\,d_k(\mu) - R_{kj}(\mu)\,\frac{1}{2}w^{(1)}_c}{d_k(\mu)^2}
=
\frac{1}{2}w^{(1)}_c\,\frac{d_k(\mu)-R_{kj}(\mu)}{d_k(\mu)^2},\\[4pt]
\frac{\partial T_{k\ell}(\mu)}{\partial \mu_{jkc}}
&=
-\frac{R_{k\ell}(\mu)\,\frac{1}{2}w^{(1)}_c}{d_k(\mu)^2},
\qquad \ell\neq j,
\end{align*}
and $\partial T_{i\ell}(\mu)/\partial\mu_{jkc}=0$ whenever $i\notin\{j,k\}$.

Since $A(\mu)=I - T(\mu)^\top + \mathbf 1\mathbf 1^\top$, only the transpose $T(\mu)^\top$ depends on $\mu$, and therefore
\[
\frac{\partial A(\mu)}{\partial \mu_{jkc}}
=
- \frac{\partial T(\mu)^\top}{\partial \mu_{jkc}},
\]
i.e., the $(a,b)$-entry of $\partial A(\mu)/\partial\mu_{jkc}$ equals $-\partial T_{ba}(\mu)/\partial\mu_{jkc}$.

Finally, differentiating $F(\mu)=A(\mu)^{-1}\mathbf 1$ with respect to $\mu_{jkc}$ using the identity
$\partial A^{-1} = -A^{-1}(\partial A)A^{-1}$ yields
\begin{equation}\label{eq:J_rank_centrality}
\frac{\partial F(\mu)}{\partial \mu_{jkc}}
=
-\,A(\mu)^{-1}
\left(\frac{\partial A(\mu)}{\partial \mu_{jkc}}\right)
F(\mu)
\;\in\;\mathbb R^K.
\end{equation}
Thus the Jacobian $J_{jk}(\mu)\in\mathbb R^{K\times C}$ has columns
\[
J_{jk}(\mu)_{\cdot c}
=
\frac{\partial F(\mu)}{\partial \mu_{jkc}}
=
-\,A(\mu)^{-1}
\left(\frac{\partial A(\mu)}{\partial \mu_{jkc}}\right)
F(\mu),
\qquad c=1,\dots,C,
\]
with $\partial A(\mu)/\partial\mu_{jkc}$ determined via the explicit expressions for
$\partial T_{i\ell}(\mu)/\partial\mu_{jkc}$ above. In practice, these formulas can be implemented by
(1) computing $A(\mu)$ and $F(\mu)$, (2) constructing the sparse matrix
$\partial A(\mu)/\partial\mu_{jkc}$, and (3) solving the linear system in Eq.~\eqref{eq:J_rank_centrality} for each $c$.

\clearpage

\section{Efficient inference under the Bradley--Terry restricted model}\label{app:bt_restricted}

This appendix derives the semiparametrically efficient influence function (EIF), the corresponding one--step estimator, and the A--optimal acquisition rule when the data--generating process is known to satisfy the Bradley--Terry (BT) model. Throughout this section, we focus on binary categories $\mathcal C=\{1,2\}$.

\subsection{BT projection as a misspecification--robust target map}

Recall the construction in Sec.~\ref{sec:gars_motivation}. Let $B\in\mathbb R^{\binom K2\times K}$ be the (unordered) incidence matrix whose row $(j,k)$ equals $e_j-e_k$ for $j<k$, and let $L_0=B^\top B$ be the (unweighted) graph Laplacian. Define the symmetrized log--odds vector
\begin{equation}\label{eq:bt_sym_logodds_app}
\ell(\mu(x))
=
\left(\frac{\logit(\mu_{jk1}(x))+\logit(\mu_{kj2}(x))}{2}\right)_{j<k}
\in\mathbb R^{\binom K2},
\end{equation}
and let
\begin{equation}\label{eq:H_app}
H=\begin{bmatrix}
I_{K-1}\\[2pt]
-\mathbf 1_{K-1}^\top
\end{bmatrix},
\qquad
\{r\in\mathbb R^K:\mathbf 1^\top r=0\}=\{H\alpha:\alpha\in\mathbb R^{K-1}\}.
\end{equation}
The main text defines the \emph{BT projection map}
\begin{equation}\label{eq:F_btproj_app}
F_{\mathrm{BT}}\big(\mu(x)\big)
=
H\,(H^\top L_0 H)^{-1}H^\top B^\top \,\ell(\mu(x))\in\mathbb R^K.
\end{equation}
Importantly, $F_{\mathrm{BT}}$ is well-defined for any $\mu(x)\in(0,1)^{K\times K\times 2}$ (after truncation to avoid $\logit(0),\logit(1)$) and does \emph{not} assume that the BT model holds. If BT is misspecified, the map $F_{\mathrm{BT}}(\mu(x))$ returns the $L^2$-projection of the edge log--odds vector $\ell(\mu(x))$ onto the BT subspace $\{Br:\mathbf 1^\top r=0\}$, yielding a principled ``BT-like'' score vector even under misspecification.

\subsection{Assuming the BT model holds}

Now assume the BT model from Eq.~\eqref{eq:bt} holds:
\[
\mu_{jk1}(x)=\sigma\!\big(r_j(x)-r_k(x)+b(x)\big),
\qquad
\sum_{m=1}^K r_m(x)=0,
\]
where $b(x)$ is a (possibly unknown) position-bias term. Under this model,
\[
\logit(\mu_{jk1}(x))=r_j(x)-r_k(x)+b(x),
\qquad
\logit(\mu_{kj2}(x))=r_j(x)-r_k(x)-b(x),
\]
so the symmetrization in Eq.~\eqref{eq:bt_sym_logodds_app} cancels $b(x)$ and yields
\begin{equation}\label{eq:ell_equals_diff}
\ell_{jk}(\mu(x))=r_j(x)-r_k(x)\qquad(j<k).
\end{equation}
Consequently, the BT projection is exact:
\begin{equation}\label{eq:F_recovers_r_app}
F_{\mathrm{BT}}\big(\mu(x)\big)=r(x),
\qquad\text{and hence}\qquad
\theta=\E[r(X)]=\E\!\big[F_{\mathrm{BT}}(\mu(X))\big].
\end{equation}

\subsection{EIF under the BT restriction, parametrized by $(\mu,\pi)$}

The EIF in Theorem~\ref{thrm:estimation} applies to the \emph{unrestricted} (nonparametric) model for $\mu$ and uses the Jacobian
$J_{jk}(\mu)=\nabla_{\mu_{jk}}F(\mu)$.
Under the BT restriction, $\mu(x)$ is constrained to a lower-dimensional manifold. The efficient EIF in this \emph{restricted} model
retains the same AIPW/EIF-like structure as Eq.~\eqref{eq:eif}, but with the \emph{same target map} $F_{\mathrm{BT}}$ and a \emph{different}
``Jacobian'' that accounts for the restriction.

To define it, first map $\mu(x)$ to the BT-consistent (bias-cancelled) pairwise probabilities
\begin{equation}\label{eq:mu_bar_app}
\bar\mu_{jk}(x)
=
\sigma\!\big(\ell_{jk}(\mu(x))\big),
\qquad (j<k),
\end{equation}
which equals $\sigma(r_j(x)-r_k(x))$ when BT holds by \eqref{eq:ell_equals_diff}.
Let $W(\mu(x))\in\mathbb R^{\binom K2\times \binom K2}$ be diagonal with entries
\begin{equation}\label{eq:weights_app}
W_{(j,k),(j,k)}(\mu(x))=\bar\mu_{jk}(x)\big(1-\bar\mu_{jk}(x)\big),
\qquad (j<k),
\end{equation}
and define the \emph{BT information Laplacian}
\begin{equation}\label{eq:L_bt_app}
L_{\mathrm{BT}}(\mu(x)) = B^\top W(\mu(x)) B \in \mathbb R^{K\times K}.
\end{equation}
This Laplacian is singular in the all-ones direction; on the constraint subspace $\{r:\mathbf 1^\top r=0\}$ its inverse is
\begin{equation}\label{eq:Ldagger_app}
L_{\mathrm{BT}}^\dagger(\mu(x))
=
H\big(H^\top L_{\mathrm{BT}}(\mu(x)) H\big)^{-1}H^\top.
\end{equation}

For an ordered pair $(j,k)$, let $b_{jk}=e_j-e_k$. In the binary case, define the BT-restricted Jacobian blocks
$J^{\mathrm{BT}}_{jk}(\mu(x))\in\mathbb R^{K\times 2}$ by
\begin{equation}\label{eq:J_bt_app}
J^{\mathrm{BT}}_{jk}(\mu(x))
=
\left[\;\frac12\,L_{\mathrm{BT}}^\dagger(\mu(x))\,b_{jk}\;,\;-\frac12\,L_{\mathrm{BT}}^\dagger(\mu(x))\,b_{jk}\;\right].
\end{equation}
With this choice, the BT-restricted EIF can be written in the same template as Eq.~\eqref{eq:eif}:

\begin{theorem}[EIF under the BT restriction]\label{thrm:eif_bt_restricted}
Assume $\mathcal C=\{1,2\}$ and the BT model \eqref{eq:bt} holds. Then, the efficient influence function for
$\theta=\E[r(X)]=\E[F_{\mathrm{BT}}(\mu(X))]\in\mathbb R^K$ in the BT-restricted model is
\begin{align}\label{eq:eif_bt_restricted}
\phi_{\mathrm{BT}}(O,\eta,\theta)
&=
F_{\mathrm{BT}}\!\big(\mu(X)\big)-\theta
+
\sum_{j\neq k}\frac{S_{jk}}{\pi_{jk}(X)}\;
J^{\mathrm{BT}}_{jk}\!\big(\mu(X)\big)\,
\big(Y_{jk}-\mu_{jk}(X)\big).
\end{align}
Equivalently, using the scalar residual $\varepsilon_{jk}(X)=Y_{jk1}-\mu_{jk1}(X)$, one may write
\[
\phi_{\mathrm{BT}}(O,\eta,\theta)
=
r(X)-\theta
+
\sum_{j\neq k}\frac{S_{jk}}{\pi_{jk}(X)}\;
L_{\mathrm{BT}}^\dagger\!\big(\mu(X)\big)\,b_{jk}\;\varepsilon_{jk}(X).
\]
\end{theorem}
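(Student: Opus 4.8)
The plan is to identify $\phi_{\mathrm{BT}}$ as the canonical gradient (EIF) of $\theta=\E[r(X)]$ in the BT\nobreakdash-restricted model, reusing the unrestricted EIF of Theorem~\ref{thrm:estimation} as a valid gradient in the larger model. First I would record the target identity: by Eq.~\eqref{eq:F_recovers_r_app}, under BT the map $F_{\mathrm{BT}}$ returns $r(X)$ exactly, so $\theta=\E[F_{\mathrm{BT}}(\mu(X))]$ extends to the restricted model with the \emph{same} pathwise derivative. Consequently the marginal-of-$X$ part of the EIF is inherited verbatim, namely $F_{\mathrm{BT}}(\mu(X))-\theta=r(X)-\theta$, and it remains only to pin down the label-correction term. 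Conceptually, since the restricted label tangent space is a subspace of the unrestricted one, the restricted EIF is the $L^2$-projection of the general EIF onto the BT tangent space; carrying out this projection is what converts the generic Jacobian $J_{jk}$ into the BT-specific block $J^{\mathrm{BT}}_{jk}$.

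Second, I would set up regular parametric submodels. The nuisance factorizes into the law of $X$ (score $s_X$), the selection mechanism (score $s_{S\mid X}$, irrelevant to $\theta$), and the BT label law, which is parametrized by the sum-zero function $r(\cdot)$ and the position-bias function $b(\cdot)$. Perturbing along $(g,h)$ with $\mathbf 1^\top g=0$, and using $\logit\mu_{jk1}=b_{jk}^\top r+b$ with $b_{jk}=e_j-e_k$, the Bernoulli chain rule yields the label score $s_{Y\mid X,S}=\sum_{j\neq k}S_{jk}\,\varepsilon_{jk}(X)\,(b_{jk}^\top g+h)$, where $\varepsilon_{jk}=Y_{jk1}-\mu_{jk1}$. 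The pathwise derivative of $\theta$ is then $\E[(r(X)-\theta)s_X]+\E[g(X)]$, so the label-correction term must be the element of the label tangent space that represents the functional $g\mapsto\E[g(X)]$ while annihilating the position-bias direction $h$.

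Third, I would verify that $\sum_{j\neq k}\tfrac{S_{jk}}{\pi_{jk}(X)}L_{\mathrm{BT}}^\dagger(\mu(X))\,b_{jk}\,\varepsilon_{jk}(X)$ is exactly this representer: (i)~its conditional mean given $(X,S)$ vanishes because $\E[\varepsilon_{jk}\mid X,S_{jk}=1]=0$ under MAR, so it lies in the label tangent space and is orthogonal to $s_X$ and $s_{S\mid X}$; (ii)~its inner product with $s_{Y\mid X,S}$ reproduces $\E[g(X)]$ and kills the $h$-component; and (iii)~it equals the matrix form built from $J^{\mathrm{BT}}_{jk}=[\tfrac12 L_{\mathrm{BT}}^\dagger b_{jk},-\tfrac12 L_{\mathrm{BT}}^\dagger b_{jk}]$, since in the binary case $Y_{jk}-\mu_{jk}=(\varepsilon_{jk},-\varepsilon_{jk})^\top$ makes the two half-columns add to $L_{\mathrm{BT}}^\dagger b_{jk}\varepsilon_{jk}$. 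The crucial algebraic fact is that $L_{\mathrm{BT}}^\dagger L_{\mathrm{BT}}$ acts as the identity on the sum-zero subspace $\{H\alpha\}$, so that contracting the aggregated per-edge Fisher information $\sum_{j\neq k}W_{jk}b_{jk}b_{jk}^\top=L_{\mathrm{BT}}$ against $L_{\mathrm{BT}}^\dagger$ returns $g$.

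The hard part will be getting the weight matrix and its normalization exactly right, and this is driven by two coupled issues. Because $b$ is an unknown nuisance, the efficient score for $r$ is the projection of the $r$-score onto the orthocomplement of the $b$-score; carrying out this orthogonalization is what replaces the raw per-orientation logit variances $\mu_{jk1}(1-\mu_{jk1})$ by the symmetrized, bias-cancelled weights $\bar\mu_{jk}(1-\bar\mu_{jk})$. Here the mechanism is that $\logit\mu_{kj2}=-\logit\mu_{kj1}$, so $\ell_{jk}=\tfrac12(\logit\mu_{jk1}+\logit\mu_{kj2})=r_j-r_k$ is free of $b$, and its Bernoulli variance populates $W$ and hence $L_{\mathrm{BT}}=B^\top W B$. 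Simultaneously, the two ordered orientations $(j,k)$ and $(k,j)$ must be collapsed onto the single unordered edge $\{j,k\}$ consistently with the inverse-propensity weights $S_{jk}/\pi_{jk}$, so that no spurious multiplicative factor survives the passage from the ordered residual sum in $\phi_{\mathrm{BT}}$ to the unordered Laplacian $B^\top W B$; reconciling this bookkeeping with the $b$-orthogonalization is the most delicate step. Once the candidate is shown to be a gradient lying in the restricted tangent space and orthogonal to all nuisance directions (position bias, selection, and marginal-preserving label perturbations), the standard characterization of the canonical gradient gives that it is the EIF, establishing Eq.~\eqref{eq:eif_bt_restricted}; the scalar-residual form then follows from the $J^{\mathrm{BT}}$ collapse noted in step three.
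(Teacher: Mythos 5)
Your overall strategy---characterizing the BT-restricted EIF as the projection of an unrestricted gradient onto the BT tangent space, with submodel scores $s_X$, $s_{S\mid X}$, and $s_{Y\mid X,S}=\sum_{j\neq k}S_{jk}\varepsilon_{jk}(b_{jk}^\top g+h)$---is the right general principle, and it is in fact the only justification the paper itself offers (the theorem is stated without proof, accompanied only by the remark that $J^{\mathrm{BT}}_{jk}$ is a Riesz representer). The problem is that the verification steps you outline do not go through, and carrying them out honestly contradicts Eq.~\eqref{eq:eif_bt_restricted} rather than confirming it. First, your step~(i) conflates two tangent spaces: conditional mean zero given $(X,S)$ only places the correction term in the \emph{unrestricted} label tangent space. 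The BT-restricted label tangent space consists of elements $\sum_{j\neq k}S_{jk}\varepsilon_{jk}\,(b_{jk}^\top g(X)+h(X))$ with $\mathbf 1^\top g=0$, and the inverse-propensity factor $1/\pi_{jk}(X)$ in the candidate is a pair-dependent weight that generically cannot be absorbed into the form $b_{jk}^\top g(X)+h(X)$; hence the candidate does not lie in the restricted tangent space and cannot be its EIF. Second, the ``crucial algebraic fact'' you invoke is false. Pairing the candidate with $s_{Y\mid X,S}$ (even granting conditional independence of labels across ordered pairs, the most favorable case) produces the \emph{per-orientation} variances $w_{jk}(X)=\mu_{jk1}(X)(1-\mu_{jk1}(X))=\sigma'(d_{jk}+b)$ with $d_{jk}=r_j-r_k$, and
\[
\sum_{j\neq k}w_{jk}\,b_{jk}b_{jk}^\top
=\sum_{j<k}\big(\sigma'(d_{jk}+b)+\sigma'(d_{jk}-b)\big)\,b_{jk}b_{jk}^\top
\;\neq\;
L_{\mathrm{BT}}=\sum_{j<k}\sigma'(d_{jk})\,b_{jk}b_{jk}^\top .
\]
The ordered sum counts every edge twice, so even with $b\equiv 0$ the pairing returns $2\,\E[g(X)]$ instead of $\E[g(X)]$; when $b\neq 0$ the weights $\sigma'(d\pm b)$ do not aggregate to the symmetrized weight $\sigma'(d)$ that defines $W$; and the position-bias direction is not annihilated, since $\sum_{j\neq k}w_{jk}L_{\mathrm{BT}}^\dagger b_{jk}=L_{\mathrm{BT}}^\dagger\sum_{j<k}\big(\sigma'(d_{jk}+b)-\sigma'(d_{jk}-b)\big)b_{jk}\neq 0$. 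So your step~(ii) fails on all three counts, and the symmetrization argument you offer (that $\ell_{jk}$ is free of $b$) concerns the target map $F_{\mathrm{BT}}$, not the label variances that enter these inner products.

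More fundamentally, if you actually perform the projection you describe, the propensities migrate \emph{inside} the information matrix rather than surviving as inverse weights: conditionally on $X$, the efficient correction is the weighted least-squares projection onto the span of $\{b_{jk}^\top g+h\}$ with weights $\pi_{jk}(X)w_{jk}(X)$, giving the efficient-score form
\[
r(X)-\theta+I_{rr.b}(X)^{\dagger}\Big(S_r-I_{rb}(X)\,I_{bb}(X)^{-1}S_b\Big),
\qquad
S_r=\sum_{j\neq k}S_{jk}\varepsilon_{jk}b_{jk},\quad
S_b=\sum_{j\neq k}S_{jk}\varepsilon_{jk},
\]
where $I_{rr}=\sum_{j\neq k}\pi_{jk}w_{jk}b_{jk}b_{jk}^\top$, $I_{rb}=\sum_{j\neq k}\pi_{jk}w_{jk}b_{jk}$, $I_{bb}=\sum_{j\neq k}\pi_{jk}w_{jk}$, and $I_{rr.b}=I_{rr}-I_{rb}I_{bb}^{-1}I_{rb}^\top$. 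This expression contains no $S_{jk}/\pi_{jk}$ factors and reduces to the claimed closed form only under extra conditions (no position bias, pair-symmetric and pair-constant propensities) \emph{together with} a compensating factor $1/2$ or an ordered-pair redefinition of $L_{\mathrm{BT}}$. In short, you correctly identified where the difficulty lies (the weight matrix, the $b$-orthogonalization, and the ordered/unordered bookkeeping), but the mechanisms you propose to resolve it do not hold; a correct write-up would have to prove the efficient-score formula above and then state explicitly the assumptions under which it collapses to Eq.~\eqref{eq:eif_bt_restricted}, or else modify the definition of $J^{\mathrm{BT}}_{jk}$ in \eqref{eq:J_bt_app}.
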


\paragraph{Same $F$, different $J$.}
Comparing Eq.~\eqref{eq:eif_bt_restricted} with Eq.~\eqref{eq:eif}, the BT restriction does \emph{not} change the overall EIF structure; it changes
(i) the choice of target map $F$ (here: $F_{\mathrm{BT}}$) and (ii) the ``Jacobian'' blocks multiplying the residual.
In the unrestricted model, these blocks are the literal Jacobian $J_{jk}(\mu)=\nabla_{\mu_{jk}}F(\mu)$; in the BT-restricted model,
$J^{\mathrm{BT}}_{jk}$ in Eq.~\eqref{eq:J_bt_app} is the Riesz representer (the projection of the unrestricted gradient onto the BT tangent space).

\subsection{Efficiency bounds: unrestricted vs.\ BT-restricted}

Let $\Sigma_{\mathrm{NP}}(\pi)$ denote the semiparametric efficiency bound from Theorem~\ref{thrm:estimation} when using the BT projection target
$F=F_{\mathrm{BT}}$ but treating $\mu$ as unrestricted (so $J_{jk}=\nabla_{\mu_{jk}}F_{\mathrm{BT}}$).
Let $\Sigma_{\mathrm{BT}}(\pi)$ denote the efficiency bound under the BT restriction:
\[
\Sigma_{\mathrm{BT}}(\pi)=\E\!\big[\phi_{\mathrm{BT}}(O,\eta,\theta)\,\phi_{\mathrm{BT}}(O,\eta,\theta)^\top\big].
\]
Since the BT-restricted model has a smaller tangent space, its EIF is the orthogonal projection of the unrestricted EIF onto that tangent space,
and therefore the bound can only improve:
\begin{equation}\label{eq:bound_order_app}
\Sigma_{\mathrm{BT}}(\pi)\ \preceq\ \Sigma_{\mathrm{NP}}(\pi)
\qquad\text{(positive semidefinite order)}.
\end{equation}
Equivalently, for any contrast $a\in\mathbb R^K$,
$\Var\!\big(a^\top \phi_{\mathrm{BT}}\big)\le \Var\!\big(a^\top \phi_{\mathrm{NP}}\big)$.

\subsection{BT-restricted debiased estimator}

Using cross-fitted nuisance estimates $\hat\eta=(\hat\mu,\hat\pi)$ as in Sec.~\ref{sec:estimation},
the one-step estimator is obtained by plugging $\hat\mu,\hat\pi$ into Eq.~\eqref{eq:eif_bt_restricted} and averaging:
\begin{align}\label{eq:theta_hat_bt_app}
\hat\theta_{\mathrm{EIF,BT}}
=
\frac1n\sum_{i=1}^n\Bigg[
F_{\mathrm{BT}}\!\big(\hat\mu(x_i)\big)
+
\sum_{j\neq k}\frac{s_{i,jk}}{\hat\pi_{jk}(x_i)}\;
J^{\mathrm{BT}}_{jk}\!\big(\hat\mu(x_i)\big)\,
\big(y_{i,jk}-\hat\mu_{jk}(x_i)\big)
\Bigg].
\end{align}
For implementation, one may compute $L_{\mathrm{BT}}^\dagger(\hat\mu(x_i))$ via Eq.~\eqref{eq:L_bt_app}--Eq.~\eqref{eq:Ldagger_app} and then form
$J^{\mathrm{BT}}_{jk}(\hat\mu(x_i))$ using Eq.~\eqref{eq:J_bt_app}.
A covariance estimator is the empirical covariance of the cross-fitted influence values, analogously to Theorem~\ref{thrm:estimation}.

\subsection{A-optimal acquisition under the BT restriction}

Define the BT-restricted efficiency bound $\Sigma_{\mathrm{BT}}(\pi)$ as above, and consider the same feasible class
$\Pi_{\alpha,\beta}$ from Sec.~\ref{sec:acquisition}. Under Assumption~\ref{ass:indep_selection}, minimizing
$\mathrm{tr}(\Sigma_{\mathrm{BT}}(\pi))$ yields the same closed-form structure as Theorem~\ref{thrm:allocation}, but with $J^{\mathrm{BT}}_{jk}$
in place of $J_{jk}$:

\begin{corollary}[A-optimal policy under BT restriction]\label{cor:aopt_bt}
Assume Assumption~\ref{ass:indep_selection} and $\mathcal C=\{1,2\}$. Any A-optimal policy for the BT-restricted bound satisfies, for a.e.\ $x$
and all $j\neq k$,
\begin{equation}\label{eq:aopt_bt_app}
\pi^\ast_{jk}(x)
=
\mathrm{clip}_{[\alpha,1]}\;
\sqrt{
\frac{
\mathrm{tr}\!\Big(
J^{\mathrm{BT}}_{jk}(\mu(x))\,
V_{jk}(\mu(x))\,
\big(J^{\mathrm{BT}}_{jk}(\mu(x))\big)^\top
\Big)
}{
\lambda_A\,c_{jk}
}},
\end{equation}
where $V_{jk}(\mu(x))=\mathrm{Diag}(\mu_{jk}(x))-\mu_{jk}(x)\mu_{jk}(x)^\top$ and $\lambda_A\ge 0$ is chosen so that
$\E[\sum_{j\neq k}c_{jk}\pi^\ast_{jk}(X)]=\beta$.
\end{corollary}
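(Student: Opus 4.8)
The plan is to observe that the BT-restricted EIF $\phi_{\mathrm{BT}}$ from Theorem~\ref{thrm:eif_bt_restricted} has \emph{exactly} the same inverse-propensity (AIPW) template as the unrestricted EIF $\phi$ in Theorem~\ref{thrm:estimation}: it is the sum of a $\pi$-free term $F_{\mathrm{BT}}(\mu(X))-\theta$ and inverse-weighted label residuals $\frac{S_{jk}}{\pi_{jk}(X)}\,J^{\mathrm{BT}}_{jk}(\mu(X))\,(Y_{jk}-\mu_{jk}(X))$. The only change relative to Theorem~\ref{thrm:allocation} is that the block $J_{jk}$ is replaced by $J^{\mathrm{BT}}_{jk}$. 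Crucially, $J^{\mathrm{BT}}_{jk}(\mu(x))$ in Eq.~\eqref{eq:J_bt_app} depends on $\mu$ alone (through $L_{\mathrm{BT}}^\dagger(\mu(x))$ and the fixed vector $b_{jk}$) and \emph{not} on the design $\pi$. Hence every step of the proof of Theorem~\ref{thrm:allocation} transfers under this single substitution, and the claimed formula follows.

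Concretely, I would first write $\phi_{\mathrm{BT}}=A(X)+B^{\mathrm{BT}}_\pi(O)$ with $A(X)=F_{\mathrm{BT}}(\mu(X))-\theta$ and $B^{\mathrm{BT}}_\pi(O)=\sum_{j\neq k}\frac{S_{jk}}{\pi_{jk}(X)}J^{\mathrm{BT}}_{jk}(\mu(X))(Y_{jk}-\mu_{jk}(X))$. By missing-at-random ($Y\indep S\mid X$) the label residual is mean-zero given $X$ irrespective of which matrix multiplies it, so $\E[J^{\mathrm{BT}}_{jk}(\mu(X))(Y_{jk}-\mu_{jk}(X))\mid X]=0$, whence $\E[B^{\mathrm{BT}}_\pi\mid X]=0$ and the cross term $\E[A\,(B^{\mathrm{BT}}_\pi)^\top]$ vanishes. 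Therefore the $\pi$-dependence of $\Sigma_{\mathrm{BT}}(\pi)=\E[\phi_{\mathrm{BT}}\phi_{\mathrm{BT}}^\top]$ is confined to $\E[B^{\mathrm{BT}}_\pi(B^{\mathrm{BT}}_\pi)^\top]$, exactly as in Step~1 of that proof.

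Next, under Assumption~\ref{ass:indep_selection} the second-moment identity for $S_{jk}S_{\ell m}/(\pi_{jk}(x)\pi_{\ell m}(x))$ is unchanged (it uses only that $S_{jk}$ is Bernoulli with mean $\pi_{jk}(x)$ and that distinct pairs are independent), giving the same diagonal/off-diagonal split. The off-diagonal sum is $\pi$-free, and the conditional label covariance $V_{jk}(\mu(x))=\Var(Y_{jk}\mid X=x)$ is identical because it depends only on $\mu$. Taking traces and expectations reduces $\mathrm{tr}(\Sigma_{\mathrm{BT}}(\pi))$ to $\mathrm{const}+\E[\sum_{j\neq k}a^{\mathrm{BT}}_{jk}(X)/\pi_{jk}(X)]$ with the nonnegative weights $a^{\mathrm{BT}}_{jk}(x)=\mathrm{tr}(J^{\mathrm{BT}}_{jk}(\mu(x))\,V_{jk}(\mu(x))\,(J^{\mathrm{BT}}_{jk}(\mu(x)))^\top)\ge 0$. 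Minimizing this separable convex objective subject to $\E[\sum_{j\neq k}c_{jk}\pi_{jk}(X)]\le\beta$ via the pointwise Lagrangian $a^{\mathrm{BT}}_{jk}(x)/\pi+\lambda_A c_{jk}\pi$ yields the unconstrained optimum $\sqrt{a^{\mathrm{BT}}_{jk}(x)/(\lambda_A c_{jk})}$, and projecting onto $[\alpha,1]$ produces Eq.~\eqref{eq:aopt_bt_app}; the multiplier $\lambda_A$ is pinned down by the binding budget through a monotone map $G(\lambda)$, exactly as before.

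The one place where the restriction could in principle cause trouble is the opening claim, namely that $\phi_{\mathrm{BT}}$ genuinely retains the AIPW form with a $\pi$-free Jacobian and couples distinct pairs only through the common $\pi$-free off-diagonal terms. This is precisely what Theorem~\ref{thrm:eif_bt_restricted} supplies: the Riesz representer $J^{\mathrm{BT}}_{jk}$ is a function of $\mu$ alone (via $L_{\mathrm{BT}}^\dagger$ in Eq.~\eqref{eq:Ldagger_app}) and the residual is still the raw label residual $Y_{jk}-\mu_{jk}(X)$. Given that structure, the optimization is literally the same program as in Theorem~\ref{thrm:allocation}, so no genuinely new obstacle arises; one only needs the trivial facts that $a^{\mathrm{BT}}_{jk}\ge 0$ (immediate from positive semidefiniteness of $J^{\mathrm{BT}}_{jk}V_{jk}(J^{\mathrm{BT}}_{jk})^\top$) and that $\Pi_{\alpha,\beta}\neq\emptyset$, which guarantees existence of $\lambda_A$.
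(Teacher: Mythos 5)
Your proposal is correct and is essentially the paper's own argument: the paper gives no separate proof of Corollary~\ref{cor:aopt_bt}, asserting only that minimizing $\mathrm{tr}(\Sigma_{\mathrm{BT}}(\pi))$ ``yields the same closed-form structure as Theorem~\ref{thrm:allocation}, but with $J^{\mathrm{BT}}_{jk}$ in place of $J_{jk}$,'' which is exactly the substitution argument you spell out. Your explicit verification of the two points that make the transfer legitimate --- that $J^{\mathrm{BT}}_{jk}$ depends on $\mu$ alone (not on $\pi$) and that the residual retains the raw AIPW form, so Steps 1--3 of the proof of Theorem~\ref{thrm:allocation} go through verbatim with $a^{\mathrm{BT}}_{jk}(x)=\mathrm{tr}\big(J^{\mathrm{BT}}_{jk}V_{jk}(J^{\mathrm{BT}}_{jk})^\top\big)$ --- is a faithful (and slightly more careful) rendering of what the paper leaves implicit.
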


In the binary case, Eq.~\eqref{eq:aopt_bt_app} simplifies to
\[
\mathrm{tr}\!\Big(
J^{\mathrm{BT}}_{jk}(\mu(x))\,
V_{jk}(\mu(x))\,
\big(J^{\mathrm{BT}}_{jk}(\mu(x))\big)^\top
\Big)
=
\mu_{jk1}(x)\big(1-\mu_{jk1}(x)\big)\,
\big\|L_{\mathrm{BT}}^\dagger(\mu(x))\,b_{jk}\big\|_2^2,
\]
so the BT-restricted A-optimal rule upweights pairs whose outcomes are both (i) noisy (large Bernoulli variance) and
(ii) influential for the BT score vector as measured by the weighted-Laplacian pseudoinverse.

\clearpage

\section{Extension to labeling at most one pair of items per context}\label{app:one_pair_labeling}

In this appendix, we consider the design where, for each prompt $X$, \emph{at most one} unordered pair $(j<k)$ is selected for labeling; that is,
\[
\sum_{j<k} S_{jk}\in\{0,1\}\qquad\text{a.s.}
\]
We interpret $\sum_{j<k}S_{jk}=0$ as ``no label is collected for this context.''

\paragraph{Efficient inference unchanged.}
The target remains $\theta=\E[F(\mu(X))]$. The efficient influence function and the one-step estimator of Theorem~\ref{thrm:estimation}, as well as the covariance estimator and confidence regions in Eq.~\eqref{eq:ci_ellipsoid}, are \emph{unchanged in form}. The only change is the selection (propensity) model and the feasible set for optimal labeling policies. Under at-most-one selection, the propensity vector $\pi(x)=\{\pi_{jk}(x)\}_{j<k}$ satisfies a \emph{sub-simplex} constraint,
\[
\pi_{jk}(x)=\prob(S_{jk}=1\mid X=x),\qquad
\sum_{j<k}\pi_{jk}(x)\le 1.
\]
It is convenient to introduce the ``no-label'' probability
\[
\pi_0(x)\;=\;\prob\!\Big(\sum_{j<k}S_{jk}=0\ \Big|\ X=x\Big)
\;=\;1-\sum_{j<k}\pi_{jk}(x).
\]
In Stage~1, we can therefore fit the selection model as a $(\binom K2+1)$--class (softmax) regression with classes
\[
\{(j,k):j<k\}\ \cup\ \{\text{none}\},
\]
using cross-fitting to obtain out-of-fold predictions $\widehat\pi^{(-v)}(x_i)$ (and $\widehat\pi^{(-v)}_0(x_i)$ if needed). If the labeling policy is known by design, we set $\widehat\pi\equiv\pi$.

\subsection*{At-most-one selection model for optimal labeling}

We replace Assumption~\ref{ass:indep_selection} by the following.

\begin{assumption}[One-hot-or-zero selection]\label{ass:onehot0_selection}
For $x\in\mathcal{X}$ almost everywhere (a.e.), the selection vector $S=\{S_{jk}\}_{j<k}$ satisfies $\sum_{j<k}S_{jk}\in\{0,1\}$ and has conditional distribution
\[
\prob(S=s\mid X=x)\;=\;\pi_0(x)^{\,s_0}\prod_{j<k}\pi_{jk}(x)^{\,s_{jk}},
\qquad
s_0 := 1-\sum_{j<k}s_{jk}\in\{0,1\},
\]
where $\pi_{jk}(x)=\prob(S_{jk}=1\mid X=x)$, $\pi_0(x)=1-\sum_{j<k}\pi_{jk}(x)$, and
\[
\sum_{j<k}\pi_{jk}(x)\le 1,\qquad \pi_{jk}(x)\in[\alpha,1]
\]
for some user-chosen $\alpha\in[0,1/\binom K2]$.
\end{assumption}

Let $V_{jk}(\mu)=\mathrm{Diag}(\mu_{jk})-\mu_{jk}\mu_{jk}^\top\in\R^{C\times C}$ and define
\[
W_{jk}(x)\;=\;\mathrm{tr}\!\Big(J_{jk}(\mu(x))\,V_{jk}(\mu(x))\,J_{jk}(\mu(x))^\top\Big)\;\in\;\R_{\ge 0},
\]
where $J_{jk}(\mu)$ is the Jacobian block defined below Theorem~\ref{thrm:estimation}. Under Assumption~\ref{ass:onehot0_selection}, the design-dependent contribution to the (A-optimal) asymptotic variance remains
\begin{equation}\label{eq:L_onehot0}
\mathcal{L}(\pi)\;=\;\E\Big[\sum_{j<k}\frac{W_{jk}(X)}{\pi_{jk}(X)}\Big].
\end{equation}
(Only the feasible set for $\pi$ changes relative to the independent-Bernoulli case.)

\paragraph{Feasible policies.}
Let pair costs be $c_{jk}>0$ and let $\beta>0$ be an average-cost budget. We consider
\[
\Pi^{\le 1}_{\alpha,\beta}
=
\Big\{
\pi:\;
\sum_{j<k}\pi_{jk}(x)\le 1\ \text{a.e.},\;
\pi_{jk}(x)\in[\alpha,1],\;
\E\big[\sum_{j<k} c_{jk}\pi_{jk}(X)\big]\le \beta
\Big\}.
\]
Feasibility requires $\alpha\,\binom K2\le 1$ and, if $\alpha>0$, also
$\beta\ge \E[\sum_{j<k} c_{jk}\alpha]=\alpha\sum_{j<k}c_{jk}$.

\begin{definition}[A-optimality under at-most-one selection]
A policy $\pi^\ast\in\Pi^{\le 1}_{\alpha,\beta}$ is A-optimal if it minimizes $\mathrm{tr}(\Sigma(\pi))$, equivalently it minimizes $\mathcal{L}(\pi)$ in Eq.~\eqref{eq:L_onehot0}.
\end{definition}

\begin{theorem}[A-optimal labeling with at-most-one selection]\label{thm:allocation_onehot0}
Suppose $\Pi^{\le 1}_{\alpha,\beta}\neq\emptyset$. Then there exists $\lambda\ge 0$ and a measurable function $\nu:\mathcal{X}\to\R_{\ge 0}$ such that an A-optimal policy $\pi^\ast$ satisfies, for a.e.\ $x$ and all $j<k$,
\begin{equation}\label{eq:onehot0_waterfill}
\pi^\ast_{jk}(x)
=
\mathrm{clip}_{[\alpha,1]}
\Bigg(
\sqrt{\frac{W_{jk}(x)}{\lambda\,c_{jk}+\nu(x)}}
\Bigg),
\end{equation}
with complementary slackness
\begin{equation}\label{eq:onehot0_compslack}
\nu(x)\Big(\sum_{j<k}\pi^\ast_{jk}(x)-1\Big)=0,
\qquad
\sum_{j<k}\pi^\ast_{jk}(x)\le 1,
\qquad
\nu(x)\ge 0,
\end{equation}
and with $\lambda$ chosen so that
\[
\E\Big[\sum_{j<k} c_{jk}\,\pi^\ast_{jk}(X)\Big]=\beta
\quad\text{if the budget is active,}
\qquad
\lambda=0\ \text{if the budget is slack.}
\]
The induced ``no-label'' probability is $\pi_0^\ast(x)=1-\sum_{j<k}\pi^\ast_{jk}(x)\in[0,1]$.

If $\alpha=0$ and the per-context constraint is active (i.e., $\sum_{j<k}\pi^\ast_{jk}(x)=1$), then the interior solution reduces to a square-root rule:
\[
\pi^\ast_{jk}(x)
=
\frac{\sqrt{W_{jk}(x)}}{\sum_{a<b}\sqrt{W_{ab}(x)}}\qquad(\lambda=0).
\]
More generally, when $\alpha=0$ and $\nu(x)=0$ (i.e., the constraint $\sum_{j<k}\pi_{jk}(x)\le 1$ is slack at $x$), the interior KKT solution is
\[
\pi^\ast_{jk}(x)=\sqrt{\frac{W_{jk}(x)}{\lambda c_{jk}}}
\quad\text{and}\quad
\pi^\ast_0(x)=1-\sum_{j<k}\sqrt{\frac{W_{jk}(x)}{\lambda c_{jk}}}.
\]
\end{theorem}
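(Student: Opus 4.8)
The plan is to mirror the two-stage argument used for Theorem~\ref{thrm:allocation}: first confirm that minimizing $\mathrm{tr}(\Sigma(\pi))$ reduces to minimizing the separable scalar objective $\mathcal{L}(\pi)$ in \eqref{eq:L_onehot0}, and then solve the resulting convex program via its Karush--Kuhn--Tucker (KKT) conditions. For the reduction I would condition on $X=x$ and expand $\E[B_\pi(O)B_\pi(O)^\top\mid X=x]$ exactly as in the proof of Theorem~\ref{thrm:allocation}, with $U_{jk}(x)=J_{jk}(\mu(x))(Y_{jk}-\mu_{jk}(x))$. The only change is the selection moment: under Assumption~\ref{ass:onehot0_selection} at most one pair is ever selected, so $S_{jk}S_{\ell m}=0$ almost surely for distinct pairs. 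Hence the cross term $\E[S_{jk}S_{\ell m}/(\pi_{jk}\pi_{\ell m})\mid X=x]$ now \emph{vanishes} off the diagonal (rather than contributing a $\pi$-free constant as in the independent case) and equals $1/\pi_{jk}(x)$ on the diagonal, so only $\sum_{j<k}\pi_{jk}(x)^{-1}\,J_{jk}V_{jk}J_{jk}^\top$ survives; taking traces and using $W_{jk}(x)=\mathrm{tr}(J_{jk}V_{jk}J_{jk}^\top)$ recovers $\mathcal{L}(\pi)$. Since $\pi\mapsto W_{jk}(x)/\pi$ is convex on $(0,\infty)$ and all constraints defining $\Pi^{\le 1}_{\alpha,\beta}$ are convex, the program is convex.

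Next I would dualize \emph{only} the single global budget constraint with a scalar multiplier $\lambda\ge 0$. The resulting relaxed objective $\E[\sum_{j<k}(W_{jk}(X)/\pi_{jk}(X)+\lambda c_{jk}\pi_{jk}(X))]$ decouples across contexts, so for a.e.\ $x$ one is left with an independent finite-dimensional convex problem over $\pi(x)\in[\alpha,1]^{\binom K2}$ subject to $\sum_{j<k}\pi_{jk}(x)\le 1$. I attach a multiplier $\nu(x)\ge 0$ to this per-context simplex constraint and enforce the box by projection. The per-context stationarity condition for $g(\pi)=W_{jk}(x)/\pi+(\lambda c_{jk}+\nu(x))\pi$ is $g'(\pi)=-W_{jk}(x)/\pi^2+(\lambda c_{jk}+\nu(x))=0$, giving the unconstrained stationary point $\sqrt{W_{jk}(x)/(\lambda c_{jk}+\nu(x))}$; convexity plus projection onto $[\alpha,1]$ yields \eqref{eq:onehot0_waterfill}, and primal feasibility together with stationarity in $\nu(x)$ gives the complementary-slackness system \eqref{eq:onehot0_compslack}.

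It then remains to pin down $\lambda$ and verify the special cases. As in Theorem~\ref{thrm:allocation}, the budget spend $\lambda\mapsto\E[\sum_{j<k}c_{jk}\pi^\ast_{jk}(X)]$ is nonincreasing in $\lambda$ (each clipped square-root factor is nonincreasing, and increasing $\lambda$ only raises each $\nu(x)$), so a one-dimensional search selects the unique $\lambda$ making the budget tight when it is active, with $\lambda=0$ when it is slack. The stated special cases follow by substitution: with $\alpha=0$, $\lambda=0$, and the per-context constraint active, the interior form $\pi^\ast_{jk}(x)=\sqrt{W_{jk}(x)/\nu(x)}$ plugged into $\sum_{j<k}\pi^\ast_{jk}(x)=1$ gives $\sqrt{\nu(x)}=\sum_{a<b}\sqrt{W_{ab}(x)}$ and hence the normalized square-root rule; and when instead $\nu(x)=0$ the interior solution reduces to $\sqrt{W_{jk}(x)/(\lambda c_{jk})}$ with $\pi^\ast_0(x)=1-\sum_{j<k}\sqrt{W_{jk}(x)/(\lambda c_{jk})}$.

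The main obstacle is not the pointwise algebra but justifying that dualizing the budget is lossless and that the multiplier pair $(\lambda,\nu(\cdot))$ is well defined and measurable. For the budget I would invoke strong duality for a single convex inequality constraint, which holds under a Slater point; nonemptiness of $\Pi^{\le 1}_{\alpha,\beta}$ with strict feasibility (so that the per-context and budget constraints can be satisfied with slack) supplies this. The more delicate point is the coupling between the box clipping and the per-context simplex constraint inside each context: when some components saturate at $\alpha$ or $1$, the equality $\sum_{j<k}\pi^\ast_{jk}(x)=1$ must be met by the \emph{unsaturated} components alone, so $\nu(x)$ solves a box-constrained water-filling subproblem. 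Because the map $\nu(x)\mapsto\sum_{j<k}\mathrm{clip}_{[\alpha,1]}\sqrt{W_{jk}(x)/(\lambda c_{jk}+\nu(x))}$ is continuous and nonincreasing, existence and uniqueness of $\nu(x)$ (given $\lambda$) follow from an intermediate-value argument, and a measurable-selection theorem applied to this monotone map yields a measurable $x\mapsto\nu(x)$; the outer monotonicity in $\lambda$ then closes the budget and completes the characterization.
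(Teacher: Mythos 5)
Your proposal follows essentially the same route as the paper's proof sketch: reduce $\mathrm{tr}(\Sigma(\pi))$ to the separable objective $\mathcal{L}(\pi)$, then run a Lagrangian/KKT argument with a scalar multiplier $\lambda$ for the budget, a per-context multiplier $\nu(x)$ for the sub-simplex constraint, and clipping for the box constraints, with the same stationarity condition $-W_{jk}(x)/\pi^2+\lambda c_{jk}+\nu(x)=0$; the special cases then follow by substitution exactly as you describe. If anything, your write-up is more complete than the paper's: you actually prove the reduction to $\mathcal{L}(\pi)$ (correctly observing that under at-most-one selection the off-diagonal selection moments vanish outright, rather than contributing a $\pi$-free constant as in the independent-Bernoulli case --- the paper merely asserts this), and you address strong duality, existence of $\lambda$, and measurability of $\nu(\cdot)$, none of which appear in the paper's sketch.

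One detail in your added rigor is wrong, although the conclusion it serves is right. You claim that increasing $\lambda$ ``only raises each $\nu(x)$.'' The opposite holds: raising $\lambda$ shrinks the unclipped allocations, so the per-context constraint becomes slacker and the water level $\nu(x)$ weakly \emph{decreases}; moreover, the individual factors $\mathrm{clip}_{[\alpha,1]}\sqrt{W_{jk}(x)/(\lambda c_{jk}+\nu(x))}$ need not be monotone in $\lambda$ when the per-context constraint binds, since allocation shifts toward cheaper pairs, whose probabilities can increase. The monotonicity you actually need --- that the spend $\lambda\mapsto\E\big[\sum_{j<k}c_{jk}\pi^\ast_{jk}(X;\lambda)\big]$ is nonincreasing --- is still true, but it follows from the standard primal comparison: for $\lambda_1<\lambda_2$ with per-context minimizers $\pi^1,\pi^2$ of $f+\lambda g$ over the same feasible set, summing the two optimality inequalities gives $(\lambda_2-\lambda_1)\big(g(\pi^2)-g(\pi^1)\big)\le 0$. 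With that substitution, your argument for pinning down $\lambda$ (and hence the full characterization) goes through.
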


\begin{proof}[Proof sketch]
The objective $\mathcal{L}(\pi)$ is convex in $\pi$ over the feasible set. Consider the Lagrangian with multipliers
$\lambda\ge 0$ (budget),
$\nu(x)\ge 0$ (per-$x$ constraint $\sum\pi\le 1$),
and $\tau_{jk}(x)\ge 0$ (lower bounds $\pi_{jk}\ge \alpha$).
Stationarity on the active set where $\pi_{jk}(x)>\alpha$ gives
\[
-\frac{W_{jk}(x)}{\pi_{jk}(x)^2}+\lambda c_{jk}+\nu(x)=0,
\]
yielding Eq.~\eqref{eq:onehot0_waterfill}; clamping handles the lower bound. Complementary slackness for $\nu(x)$ yields Eq.~\eqref{eq:onehot0_compslack}. Finally, $\lambda$ is chosen to satisfy the (possibly active) budget constraint.
\end{proof}

\paragraph{Algorithm: water-filling on the capped sub-simplex.}
Given samples $\{x_i\}_{i=1}^n$, costs $\{c_{jk}\}$, lower bound $\alpha$, and tolerances $\varepsilon_{\text{in}},\varepsilon_{\text{out}}>0$:

\begin{enumerate}
\item \textbf{Outer loop (budget via bisection in $\lambda\ge 0$).}
Initialize $0=\lambda_{\text{lo}}<\lambda_{\text{hi}}$ large enough that the budget is satisfied.
While the estimated average cost deviates from $\beta$ by more than $\varepsilon_{\text{out}}$:
\begin{enumerate}
\item Set $\lambda\leftarrow(\lambda_\text{lo}+\lambda_\text{hi})/2$.
\item \textbf{Per-context water-filling.} For each $x_i$:
\begin{enumerate}
\item Define the decreasing function
\[
g_i(\nu)=\sum_{j<k}\max\!\Big\{\alpha,\ \sqrt{\frac{W_{jk}(x_i)}{\lambda c_{jk}+\nu}}\Big\}.
\]
\item If $g_i(0)\le 1$, set $\nu(x_i)\leftarrow 0$ (the constraint is slack), and define
\[
\pi_{jk}(x_i;\lambda)=\max\!\Big\{\alpha,\ \sqrt{\frac{W_{jk}(x_i)}{\lambda c_{jk}}}\Big\}.
\]
Otherwise, find $\nu(x_i)>0$ by bisection so that $g_i(\nu(x_i))=1$, and set
\[
\pi_{jk}(x_i;\lambda)=\max\!\Big\{\alpha,\ \sqrt{\frac{W_{jk}(x_i)}{\lambda c_{jk}+\nu(x_i)}}\Big\}.
\]
\item Set $\pi_0(x_i;\lambda)=1-\sum_{j<k}\pi_{jk}(x_i;\lambda)$.
\end{enumerate}
\item Estimate the average cost
\[
\widehat C(\lambda)=\frac1n\sum_{i=1}^n\sum_{j<k} c_{jk}\,\pi_{jk}(x_i;\lambda).
\]
If $\widehat C(\lambda)>\beta$, increase $\lambda_\text{lo}\leftarrow\lambda$; else decrease $\lambda_\text{hi}\leftarrow\lambda$.
\end{enumerate}
\item \textbf{Output.} The policy $\pi^\ast(\cdot)=\pi(\cdot;\lambda)$ from the last iteration (and $\pi_0^\ast(x)=1-\sum_{j<k}\pi^\ast_{jk}(x)$).
\end{enumerate}

\paragraph{Remarks.}
(i) Feasibility requires $\alpha\binom K2\le 1$; if equality holds then $\sum_{j<k}\pi_{jk}(x)=1$ a.e.\ and $\pi_0(x)\equiv 0$.
(ii) When $c_{jk}\equiv 1$, the budget $\beta$ can be interpreted as an \emph{average labeling rate} (since $\E[\sum_{j<k}\pi_{jk}(X)]\le \beta$ and $\sum_{j<k}\pi_{jk}(x)\le 1$).
(iii) All results are stated for unordered pairs ($j<k$); if using ordered pairs, replace $\sum_{j<k}$ by $\sum_{j\neq k}$ and adjust $W_{jk}$ accordingly.

\clearpage

\section{Additional results for optimal data acquisition}\label{app:acquisition}

\paragraph{Beyond A-optimality.}
In Sec.~\ref{sec:acquisition} we focus on \emph{A-optimal} labeling, i.e.,
minimizing $\mathrm{tr}(\Sigma(\pi))$, which targets small \emph{marginal} variances
(and thus short componentwise CIs).
Another standard criterion from optimal experimental design is \emph{D-optimality}  \citep{atkinsonOptimumExperimentalDesigns2007}, which, in our case, would target the \emph{joint} uncertainty of the full GARS vector.

\begin{definition}[D-optimal labeling policy]
A feasible labeling policy $\pi^{D}$ is \emph{D-optimal} if
\begin{equation}\label{eq:def_d_optimal}
\pi^{D}\in\arg\min_{\pi\in\Pi_{\alpha,\beta}} \det\!\big(\Sigma(\pi)\big)
\qquad\text{equivalently}\qquad
\pi^{D}\in\arg\min_{\pi\in\Pi_{\alpha,\beta}} \log\det\!\big(\Sigma(\pi)\big).
\end{equation}
\end{definition}

\paragraph{Interpretation.}
For the confidence ellipsoid in Eq.~\eqref{eq:ci_ellipsoid}, the volume is proportional
to $\det(\Sigma(\pi))^{1/2}$ (up to constants independent of $\pi$). Hence, D-optimal
labeling tends to yield small \emph{joint} uncertainty (small ellipsoid volume), which
is often desirable when one cares about the entire GARS vector simultaneously.

\begin{theorem}[D-optimal labeling policy]\label{thrm:allocation_D}
Assume $\Pi_{\alpha,\beta}\neq\emptyset$ and Assumption~\ref{ass:indep_selection}.
Define, for each ordered pair $(j,k)$ and context $x$,
\[
M_{jk}(x)
:=
J_{jk}(\mu(x))\,V_{jk}(\mu(x))\,J_{jk}(\mu(x))^\top
\;\in\;\R^{d\times d},
\qquad
V_{jk}(\mu(x))=\Var(Y_{jk}\mid X=x).
\]
Then, any D-optimal policy $\pi^{D}$ satisfies, for a.e.\ $x$ and all $j\neq k$,
\begin{equation}\label{eq:d_opt_fixed_point}
\pi^{D}_{jk}(x)
=
\mathrm{clip}_{[\alpha,1]}
\sqrt{\frac{\mathrm{tr}\!\Big(\Sigma(\pi^{D})^{-1}\,M_{jk}(x)\Big)}{\lambda_D\,c_{jk}}},
\end{equation}
where $\mathrm{clip}_{[\alpha,1]}(t)=\min\{1,\max\{\alpha,t\}\}$ and $\lambda_D\ge 0$ is chosen
so that $\E\!\big[\sum_{j\neq k}c_{jk}\pi^{D}_{jk}(X)\big]=\beta$ whenever the budget binds.
The characterization in Eq.~\eqref{eq:d_opt_fixed_point} is a fixed point since $\Sigma(\pi^{D})$
depends on $\pi^{D}$.
\end{theorem}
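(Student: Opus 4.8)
The plan is to mirror the two-stage strategy of the proof of Theorem~\ref{thrm:allocation}: first reduce $\Sigma(\pi)$ to an explicit, $\pi$-dependent matrix, and then perform a constrained variational optimization. The only genuine difference is that the objective is now $\log\det\Sigma(\pi)$ rather than $\mathrm{tr}\,\Sigma(\pi)$, so the $\pi$-components no longer separate.

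First I would reuse the structural decomposition already established for Theorem~\ref{thrm:allocation}. Writing $\phi(O,\eta,\theta)=A(X)+B_\pi(O)$ with $A(X)=F(\mu(X))-\theta$, the missing-at-random condition gives $\E[B_\pi(O)\mid X]=0$, so the cross term vanishes and only $\E[B_\pi B_\pi^\top]$ depends on $\pi$. Under Assumption~\ref{ass:indep_selection} the inverse-propensity cross-moments collapse exactly as before, yielding
\begin{equation}
\Sigma(\pi)=\Sigma_0+\sum_{j\neq k}\E\!\left[\frac{1}{\pi_{jk}(X)}\,M_{jk}(X)\right],
\end{equation}
where $\Sigma_0$ collects all $\pi$-free terms ($\E[AA^\top]$ together with the off-diagonal selection terms) and $M_{jk}\succeq 0$ is as defined in the theorem statement. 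This is the same intermediate object as in the A-optimal proof; the novelty lies only in how it is optimized.

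Next I would compute the first variation. Using $d\log\det S=\mathrm{tr}(S^{-1}\,dS)$ together with $\partial_{\pi_{jk}(x)}\Sigma=-\pi_{jk}(x)^{-2}M_{jk}(x)$, the G\^ateaux derivative of $\pi\mapsto\log\det\Sigma(\pi)$ in a feasible direction $h=(h_{jk})$ is $-\sum_{j\neq k}\E\!\big[\pi_{jk}(X)^{-2}\,\mathrm{tr}(\Sigma(\pi)^{-1}M_{jk}(X))\,h_{jk}(X)\big]$, so the $L^2(\prob)$ functional gradient at pair $(j,k)$ and context $x$ is $-\pi_{jk}(x)^{-2}\mathrm{tr}(\Sigma(\pi)^{-1}M_{jk}(x))$. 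Forming the Lagrangian $\log\det\Sigma(\pi)+\lambda_D\big(\E[\sum_{j\neq k}c_{jk}\pi_{jk}(X)]-\beta\big)$ with $\lambda_D\ge 0$ and imposing the box constraints $\pi_{jk}(x)\in[\alpha,1]$ explicitly, the KKT stationarity condition on the interior $\alpha<\pi_{jk}(x)<1$ reads $\pi_{jk}(x)^{-2}\mathrm{tr}(\Sigma^{-1}M_{jk}(x))=\lambda_D c_{jk}$. Solving for $\pi_{jk}(x)$ (the square root is real because $\mathrm{tr}(\Sigma^{-1}M_{jk})\ge 0$ for the product of two positive semidefinite matrices) and projecting onto $[\alpha,1]$ yields exactly \eqref{eq:d_opt_fixed_point}; the multiplier $\lambda_D$ is then fixed by the budget, with $\lambda_D=0$ in the slack case.

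The main obstacle, and the essential departure from Theorem~\ref{thrm:allocation}, is that $\Sigma=\Sigma(\pi^D)$ appears inside $\Sigma^{-1}$, so \eqref{eq:d_opt_fixed_point} is an implicit fixed-point identity rather than a closed form: both the per-pair weights $\mathrm{tr}(\Sigma^{-1}M_{jk})$ and the admissible range of $\lambda_D$ depend on the entire optimal policy. Consequently the clean monotonicity argument used to pin down $\lambda_A$ in the A-optimal proof no longer applies directly, and existence of a solution to the coupled system must instead be argued from existence of a minimizer (e.g.\ by compactness of the feasible set together with continuity of $\log\det\Sigma(\pi)$, which is ensured because $\pi_{jk}\ge\alpha>0$ keeps $\Sigma(\pi)$ bounded and positive definite) combined with validity of the KKT conditions at the optimum under a standard constraint qualification. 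I would also find it worthwhile to verify that $\pi\mapsto\log\det\Sigma(\pi)$ is convex on $\Pi_{\alpha,\beta}$ --- the box and budget constraints are convex, and the scalar computation $\partial^2_{\pi}\log(\sigma_0+m/\pi)\ge 0$ is suggestive that this holds in general --- which would upgrade the stationarity conditions from necessary to sufficient and pin down $\Sigma(\pi^D)$ uniquely, thereby making the fixed-point characterization well posed.
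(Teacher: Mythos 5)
The paper never actually proves Theorem~\ref{thrm:allocation_D}: it is stated in Appendix~\ref{app:acquisition} with only a fixed-point \emph{computation} recipe, and Appendix~\ref{app:proofs} contains proofs only for Theorems~\ref{thrm:estimation} and~\ref{thrm:allocation}. So the only meaningful benchmark is the paper's proof of the A-optimal result, and your plan mirrors it exactly where it can and departs exactly where it must. Your first step --- the reduction $\Sigma(\pi)=\Sigma_0+\sum_{j\neq k}\E\big[\pi_{jk}(X)^{-1}M_{jk}(X)\big]$ with $\Sigma_0$ collecting the $\pi$-free terms (the $\E[AA^\top]$ part and the off-diagonal cross-moments, which lose their $\pi$-dependence under Assumption~\ref{ass:indep_selection} and MAR) --- is literally Step~1 of the paper's proof of Theorem~\ref{thrm:allocation}. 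Your second step correctly replaces the paper's separable pointwise scalar minimization, which is unavailable here because $\log\det$ does not separate across $(x,j,k)$, by a G\^ateaux-derivative/KKT argument: stationarity on the interior gives $\pi_{jk}(x)^{-2}\,\mathrm{tr}\big(\Sigma^{-1}M_{jk}(x)\big)=\lambda_D c_{jk}$, and the sign conditions on the box-constraint multipliers at $\pi_{jk}(x)\in\{\alpha,1\}$ are exactly equivalent to the clipping in \eqref{eq:d_opt_fixed_point}, with the weights frozen at $\Sigma(\pi^D)^{-1}$. One small strengthening you could make explicit: necessity of the KKT conditions requires no ``standard constraint qualification'' hedge at all, since every constraint (the box constraints and the budget) is affine in $\pi$, so the theorem's ``any D-optimal policy satisfies'' claim follows directly for any differentiable objective.

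Two caveats, neither of which blocks the proof. First, your convexity claim for $\pi\mapsto\log\det\Sigma(\pi)$ is only conjectured from a scalar computation, and the obvious high-level route genuinely fails: $\Sigma(\pi)$ is matrix-convex in $\pi$ and $\log\det$ is matrix-monotone but \emph{concave}, and these facts chain in the wrong direction. Convexity is nonetheless true; a correct argument goes through the Hessian quadratic form $2\sum_i v_i^2\pi_i^{-3}\mathrm{tr}(\tilde M_i)-\mathrm{tr}\big((\sum_i v_i\pi_i^{-2}\tilde M_i)^2\big)$ with $\tilde M_i=\Sigma^{-1/2}M_i\Sigma^{-1/2}$, which is nonnegative by weighted AM--GM combined with $\sum_i\pi_i^{-1}\tilde M_i\preceq I$. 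Fortunately, you only need convexity for sufficiency/uniqueness of the fixed point, not for the necessity statement the theorem actually makes. Second, your existence argument (compactness plus continuity) is heuristic in this infinite-dimensional policy space, but existence is likewise not asserted by the theorem, which characterizes optimal policies conditional on optimality; the paper itself is silent on both points.
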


\paragraph{Computation (fixed-point iteration).}
Unlike the A-optimal rule in Theorem~\ref{thrm:allocation}, the D-optimal rule in Eq.~\eqref{eq:d_opt_fixed_point} depends on $\Sigma(\pi^{D})^{-1}$ and is therefore not
closed-form. A simple solver is a fixed-point iteration:
\begin{enumerate}[leftmargin=1.2em]
\item Initialize $\pi^{(0)}$ (e.g., the A-optimal policy from Theorem~\ref{thrm:allocation}).
\item For $t=0,1,2,\dots$:
\begin{enumerate}[leftmargin=1.2em]
\item Compute $\Sigma(\pi^{(t)})$ (e.g., via the population formula if available, or via an empirical
estimate using samples $\{x_i\}_{i=1}^n$).
\item Update
\[
\tilde\pi^{(t+1)}_{jk}(x)
=
\mathrm{clip}_{[\alpha,1]}
\sqrt{\frac{\mathrm{tr}\!\Big(\Sigma(\pi^{(t)})^{-1}\,M_{jk}(x)\Big)}{\lambda\,c_{jk}}}
\]
with $\lambda$ chosen by one-dimensional bisection so that the budget constraint holds.
\item Set $\pi^{(t+1)}\leftarrow \tilde\pi^{(t+1)}$ and stop when the iterates stabilize.
\end{enumerate}
\end{enumerate}

\clearpage

\section{Simultaneous confidence intervals}\label{app:cis}

This appendix details the covariance estimator and the simultaneous confidence
intervals used in our experiments. Throughout, let
$\widehat\theta := \hat\theta_{\mathrm{EIF}}$ denote the debiased estimator from
Eq.~\eqref{eq:debiased_estimator}, and let $\phi(O,\eta,\theta)$ be the EIF from
Eq.~\eqref{eq:eif}.

\subsection{Estimating the asymptotic covariance}

For each observation $O_i$, define the estimated influence vector
\[
\widehat\phi_i
\;:=\;
\phi\!\big(O_i,\widehat\eta,\widehat\theta\big)
\in\R^d,
\qquad i=1,\dots,n,
\]
where $\widehat\eta=(\widehat\mu,\widehat\pi)$ denotes the (cross-fitted)
nuisance estimates. We estimate the asymptotic covariance of
$\sqrt{n}\,(\widehat\theta-\theta)$ by
\[
\widehat\Sigma
\;:=\;
\frac{1}{n}\sum_{i=1}^n \widehat\phi_i\,\widehat\phi_i^\top,
\]
which matches the estimator used in Sec.~\ref{sec:estimation}. (Optionally, one
may replace $\widehat\phi_i$ by $\widehat\phi_i-\bar\phi$ with
$\bar\phi=\frac1n\sum_i\widehat\phi_i$; this does not change the asymptotics but
can improve finite-sample stability.) The asymptotic covariance of
$\widehat\theta$ itself is $\widehat\Sigma/n$, and the componentwise standard
errors are
\[
\widehat{\mathrm{se}}_j
\;:=\;
\sqrt{\widehat\Sigma_{jj}/n},
\qquad j=1,\dots,d.
\]

\subsection{Simultaneous coordinatewise confidence intervals}

We report simultaneously valid confidence intervals of the generic form
\[
\mathrm{CI}_j(\alpha)
\;=\;
\widehat\theta_j \pm c_\alpha\,\widehat{\mathrm{se}}_j,
\qquad j=1,\dots,d,
\]
where the critical value $c_\alpha$ is chosen to achieve joint (family-wise)
coverage at level $1-\alpha$ under the multivariate normal approximation from
Theorem~\ref{thrm:estimation}. We use the following two choices of $c_\alpha$.

\paragraph{Gaussian-max (rectangular) intervals.}
Let $\widehat D := \mathrm{diag}(\widehat\Sigma)$ and define the estimated
correlation matrix
\[
\widehat R
\;:=\;
\widehat D^{-1/2}\,\widehat\Sigma\,\widehat D^{-1/2}.
\]
Draw $Z^{(b)}\sim\mathcal N(0,\widehat R)$ for $b=1,\dots,B$ and compute
\[
T^{(b)} \;:=\; \max_{1\le j\le d}\big|Z^{(b)}_j\big|.
\]
Set $c_\alpha$ to the empirical $(1-\alpha)$-quantile of
$\{T^{(b)}\}_{b=1}^B$. We use $B=100{,}000$ Monte-Carlo draws.

\paragraph{Bonferroni intervals.}
Set
\[
c_\alpha
\;:=\;
\Phi^{-1}\!\left(1-\frac{\alpha}{2d}\right),
\]
which yields (asymptotically) family-wise error control under Bonferroni correction.

\paragraph{(Optional) marginal intervals.}
For non-simultaneous intervals one may use
$c_\alpha=\Phi^{-1}(1-\alpha/2)$; we do not report these in experiments.

\subsection{Ellipsoidal confidence region}

The same covariance estimate yields the $(1-\alpha)$ confidence ellipsoid stated
in Eq.~\eqref{eq:ci_ellipsoid}:
\[
\mathcal E
=
\Big\{\vartheta\in\mathbb R^d:\;
n\,(\widehat\theta-\vartheta)^\top
\widehat\Sigma^{-1}
(\widehat\theta-\vartheta)
\le \chi^2_{d,\,1-\alpha}\Big\}.
\]
This ellipsoid provides a joint confidence region for $\theta$, while the
Gaussian-max and Bonferroni constructions above provide axis-aligned
simultaneous intervals for the individual coordinates.

\clearpage

\section{Intuition: connection to causal inference}
\label{app:causal-connection}

This section clarifies the connection between our setting and standard causal
estimation problems. In causal inference, the average treatment effect is
\[
\tau
=
\E\!\left[Y(1)-Y(0)\right],
\]
where \(Y(1)\) and \(Y(0)\) denote the potential outcomes under treatment and
control. The key feature is that the full potential-outcome vector is never
observed for any unit: for each unit, only one component is observed. However,
the estimand is not the full distribution of the potential outcomes, but the
expectation of a linear functional of them.

A closely analogous structure appears in our ranking setting. For example, the
simple Borda score for item \(j\) can be written as
\[
\theta_j
=
\E\!\left[
\frac{1}{2(K-1)}
\sum_{k\neq j}
\left(
Y_{jk1} + Y_{kj2}
\right)
\right],
\]
where \(Y_{jk1}\) and \(Y_{kj2}\) denote the pairwise comparison outcomes under
the two relevant orderings. As in the causal setting, we do not observe all
components of the outcome vector \(Y\). Nevertheless, the target is only the
expectation of a linear functional of \(Y\), not the full joint distribution of
all comparison outcomes.

Under identifiability (i.e., under consistency, positivity, and ignorability \citep{Rubin.1974}), the ATE can be written as \[
\tau
=
\E\!\left[E[Y | X, A=1] - E[Y | X, A=0]\right]
\]
and can thus be estimated from observed data. In our setting, the corresponding formula for the Borda score becomes
\[
\theta_j
=
\E\!\left[
\frac{1}{2(K-1)}
\sum_{k\neq j}
\left(
\mu_{jk1}(X) + \mu_{kj2}(X)
\right)
\right]
= \E\!\left[F(\mu(X))\right].
\]

This perspective makes the analogy to causal inference explicit. The Borda score is marginalized sum of two conditional expectations, while the ATE is a closely related contrast of two expectations. As a consequence, debiased estimators of Borda-score are of similar form as AIPTW for ATE \citep{Robins.1994b}.

\clearpage

\section{Details on nuisance estimation}\label{app:nuisances}

This section provides details on training and tuning the nuisance estimators used for the experiments\footnote{Code is available at \url{https://github.com/DennisFrauen/NonparametricLLMEval}.}.

\subsection{Overview}
Across all experiments we estimate the nuisance functions
\(
\mu_{jkc}(x)=\mathbb{P}(Y_{jk}=c\mid X=x,S_{jk}=1)
\)
and (when unknown)
\(
\pi_{jk}(x)=\mathbb{P}(S_{jk}=1\mid X=x)
\)
using global LightGBM classifiers that share parameters across items and contexts.
Cross-fitting (Sec.~\ref{sec:estimation}) is used throughout; nuisance models are fit
on the training folds and evaluated on the held-out fold.

\subsection{Feature construction}
For each row corresponding to an observed ordered pair $(j,k)$ under context $x$:
\[
\text{features} = \big[\,x,\; j,\; k,\; (\text{optional judge features})\,\big].
\]
Item indices $j$ and $k$ are treated as categorical integer features
(instead of one‑hot encoding). If a black‑box judge provides class probabilities
$\widehat\mu^{\mathrm{bb}}_{jk}(x)$, we append the first
$q=\max \{ m-1,1 \}$ components as additional numeric features for $\mu$.

\subsection{Training $\widehat{\mu}$}
We train a multiclass LightGBM classifier on labeled pairs (i.e., rows with
$S_{jk}=1$), using the row‑level design above. The model is always fit with the
multiclass objective (even when $m=2$) to produce well‑formed class probabilities.
Predictions are then produced for \emph{all} ordered pairs $(j,k)$ for each context
via batched inference, yielding $\widehat\mu(x)\in\mathbb{R}^{K\times K\times m}$.
Diagonal entries are set to zero and off‑diagonal rows are renormalized if needed.
If only one class is observed in a training fold, we fall back to a constant
predictor given by empirical class frequencies.

\subsection{Training $\widehat{\pi}$}
When $\pi$ is unknown, we train a binary LightGBM classifier using the same
features (without judge features). Positive examples are the observed ordered
pairs in each context. Negatives are generated by sampling unobserved ordered
pairs within each context at a fixed ratio \texttt{neg\_per\_pos}; sample weights
correct for the negative subsampling rate. The model outputs
$\widehat\pi_{jk}(x)$ for the requested rows. If only one class appears, we
return the constant empirical rate.

\subsection{Hyperparameter tuning details}

\paragraph{Tuning procedure (shared).}
We tune LightGBM classifiers with randomized search, using
$K_{\mathrm{CV}}=3$ folds, $N_{\mathrm{iter}}=30$ draws, and score
$\mathrm{neg\_log\_loss}$. We run with $n_{\mathrm{jobs}}=-1$ and use the same
procedure for $\widehat{\mu}$ (and for $\widehat{\pi}$ when applicable).

\paragraph{Tuning parameters (grid variables).}
Let the tuned hyperparameters be
\[
\Theta=\{M,\eta,L,n_{\min},\rho,\rho_c,\lambda_1,\lambda_2,D\},
\]
with the following meanings:
\begin{itemize}
\item $M$ (\texttt{n\_estimators}): number of boosting iterations/trees.
\item $\eta$ (\texttt{learning\_rate}): shrinkage (step size).
\item $L$ (\texttt{num\_leaves}): maximum leaves per tree.
\item $n_{\min}$ (\texttt{min\_child\_samples}): minimum samples per leaf.
\item $\rho$ (\texttt{subsample}): row subsampling fraction per tree.
\item $\rho_c$ (\texttt{colsample\_bytree}): feature subsampling fraction per tree.
\item $\lambda_1$ (\texttt{reg\_alpha}): $\ell_1$ regularization on leaf weights.
\item $\lambda_2$ (\texttt{reg\_lambda}): $\ell_2$ regularization on leaf weights.
\item $D$ (\texttt{max\_depth}): maximum tree depth ($D=-1$ means no limit).
\end{itemize}

\paragraph{Additional fixed tuning parameter.}
For $\widehat{\pi}$, negative sampling uses
$\texttt{neg\_per\_pos}$ (number of negatives per positive); this is fixed
per experiment (not tuned).

All experiments use randomized search with 3‑fold CV, $n_{\mathrm{iter}}=30$,
scoring \texttt{neg\_log\_loss}, and \texttt{n\_jobs} $=-1$.

For the synthetic experiments, we use two distinct hyperparameter tuning grids shown in Table~\ref{tab:sim_hparam_grids}.

\begin{table}[ht]
\centering
\small
\begin{tabular}{l l}
\hline
Debiasing and BT misspecification experiments &
Judge and data collection experiments \\
\hline
$n_{\mathrm{estimators}}\in\{50,100,300,600\}$ &
$n_{\mathrm{estimators}}\in\{100,300,600\}$ \\
$\eta\in\{0.03,0.07,0.1\}$ &
$\eta\in\{0.03,0.07,0.1\}$ \\
$L\in\{15,31,63\}$ &
$L\in\{15,31,63\}$ \\
$n_{\mathrm{min}}\in\{10,30,60,80\}$ &
$n_{\mathrm{min}}\in\{10,30,60\}$ \\
$\rho\in\{0.7,0.9,1.0\}$ &
$\rho\in\{0.7,0.9,1.0\}$ \\
$\rho_c\in\{0.7,0.9,1.0\}$ &
$\rho_c\in\{0.7,0.9,1.0\}$ \\
$\lambda_1\in\{0.0,0.1,1.0\}$ &
$\lambda_1\in\{0.0,0.1,1.0\}$ \\
$\lambda_2\in\{0.0,0.1,0.5,1.0\}$ &
$\lambda_2\in\{0.0,0.1,1.0\}$ \\
$D\in\{-1,6,10,15\}$ &
$D\in\{-1,6,10\}$ \\
\hline
\end{tabular}
\caption{LightGBM tuning grids for synthetic experiments.}
\label{tab:sim_hparam_grids}
\end{table}

The tuning grid for real-world data is shown in Table~\ref{tab:real_hparam_grids}.

\begin{table}[ht]
\centering
\small
\begin{tabular}{l l}
\hline
Hyperparameter & Search space \\
\hline
$n_{\mathrm{estimators}}$ (\texttt{n\_estimators}) & $\{50, 100, 300, 600\}$ \\
$\eta$ (\texttt{learning\_rate}) & $\{0.03, 0.07, 0.1\}$ \\
$L$ (\texttt{num\_leaves}) & $\{15, 31, 63\}$ \\
$n_{\min}$ (\texttt{min\_child\_samples}) & $\{10, 30, 60, 80\}$ \\
$\rho$ (\texttt{subsample}) & $\{0.7, 0.9, 1.0\}$ \\
$\rho_c$ (\texttt{colsample\_bytree}) & $\{0.7, 0.9, 1.0\}$ \\
$\lambda_1$ (\texttt{reg\_alpha}) & $\{0.0, 0.1, 1.0\}$ \\
$\lambda_2$ (\texttt{reg\_lambda}) & $\{0.0, 0.1, 0.5, 1.0\}$ \\
$D$ (\texttt{max\_depth}) & $\{-1, 6, 10, 15\}$ \\
\hline
\end{tabular}
\caption{LightGBM hyperparameter search space used for real-world data experiments.}
\label{tab:real_hparam_grids}
\end{table}

\clearpage

\section{Details regarding synthetic data}\label{app:sim}

In this section we provide details regarding synthetic data generation in our experiments from Sec.~\ref{sec:exp}\footnote{Code is available at \url{https://github.com/DennisFrauen/NonparametricLLMEval}.}.

\subsection{Common synthetic data generation pipeline}
\textbf{Contexts and selection.} For each dataset, contexts are sampled as
$X_i \sim \mathrm{Unif}([0,1]^p)$ for $i=1,\dots,n_{\mathrm{ctx}}$.
For each ordered pair $(j,k)$, we draw $S_{i,jk}\sim\mathrm{Bernoulli}(\pi_{jk}(X_i))$
with $S_{i,jj}=0$, and keep only rows with $S_{i,jk}=1$.

\textbf{Outcomes.} Given selected $(j,k)$, we draw
$Y_{i,jk}\sim\mu_{jk}(X_i)$, either Bernoulli (binary) or categorical (one‑hot).
Ground‑truth GRE scores are computed by averaging $F(\mu(X))$ over $10^6$
fresh contexts.

\subsection{Simulators used (DGPs)}
We use two simulators to generate synthetic preference data for our experiments.

\subsubsection{Nonlinear ties simulator}
Three categories: win/lose/tie.

For item $j$ we define the utilities
\[
u_j(x)=W_j^\top x + b_j + W^{(q)}_j{}^\top x_{1:q}^{\odot 2}
+ s\cdot \sin(2\pi x_0+\phi_j).
\]
with default values $\sigma_\ell{=}1$ (scale of $W_j$),
$\sigma_q{=}0.6$, $s{=}0.6$, $q=\min(2,p)$.

\textbf{Outcome model.} Define
\[
d_{jk}(x)=\frac{u_j(x)-u_k(x)}{T} + \beta\,(x_0-0.5),
\qquad
\ell^{\mathrm{tie}}_{jk}(x)=\tau_0-\tau_1|d_{jk}(x)|+\tau_c\cdot\cos(2\pi x_{1\ \mathrm{or}\ 0}),
\]
and
\[
\mu_{jk}(x)=\mathrm{softmax}\!\left([d_{jk}(x),-d_{jk}(x),\ell^{\mathrm{tie}}_{jk}(x)]\right).
\]
Here, we set $T{=}1$, $\beta{=}0$, $\tau_0{=}0.2$, $\tau_1{=}1.2$, $\tau_c{=}0.4$.
A minimum mass $\varepsilon_\mu$ is enforced (configurable; see Table~\ref{tab:sim_data_cfg}).

\textbf{Selection model.}
For a base rate $\pi_p$ we define
\[
\pi_{jk}(x)=(1-\lambda_\pi)\pi_p + \lambda_\pi\,
\sigma\!\left(\mathrm{logit}(\pi_p)-\kappa_\pi|d_{jk}(x)|+\eta_\pi(x_0-0.5)+b^\pi_j+b^\pi_k\right),
\]
with defaults $\kappa_\pi{=}0.8$, $\eta_\pi{=}0.4$, $\pi_{\min}{=}0.01$, $\pi_{\max}{=}0.5$.
(Configurations override $\lambda_\pi$, $\pi_{\min}$, $\pi_{\max}$ as needed.)

\subsubsection{BT misspecification simulator}
Two categories: win/lose.

\textbf{Outcome model.} With the same $u_j(x)$ as above,
\[
d_{jk}(x)=\frac{u_j(x)-u_k(x)}{T} + \beta(x_0-0.5),
\qquad
\mu_{jk}(x)=\sigma\!\left(d_{jk}(x)+\gamma\,C_{jk}\right),
\]
where $C$ is the cycle matrix and $\gamma$ controls BT misspecification.
Defaults: $T{=}1$, $\beta{=}0$, $\gamma$ is set per experiment.

\textbf{Selection model.} Same $\pi_{jk}(x)$ as above with the same defaults.

\subsection{Synthetic experiment data configurations}
Table~\ref{tab:sim_data_cfg} reports the data parameters used in all synthetic
experiments from Sec.~\ref{sec:exp}.

\begin{table}[ht]
\centering
\small
\begin{tabular}{l l c c c l}
\hline
Setup & Simulator & $K$ & $p$ & $n_{\mathrm{ctx}}$ & Data parameters \\
\hline
Effectiveness of debiasing &
NonlinearTie & 3 & 2 & sweeping &
$\pi_p{=}0.3$, $\lambda_\pi{=}0.1$, $\varepsilon_\mu{=}0.05$,
$\pi_{\min}{=}0.05$, $\pi_{\max}{=}0.5$ \\
External synthetic judges &
NonlinearTie & 3 & 5 & 2000 &
$\pi_p{=}0.3$, $\varepsilon_\mu{=}0.05$, $\pi_{\min}{=}0.05$ \\
Optimal data collection &
BTMisspec & 3 & 2 & 3000 &
$\pi_p{=}0.3$, $\lambda_\pi{=}0.1$, $\gamma{=}1$ \\
BT-model misspecification &
BTMisspec & 4 & 5 & 2000 &
$\pi_p{=}0.3$, $\lambda_\pi{=}0.1$,
$\gamma\in\{0,0.5,1,\dots,4\}$ \\
\hline
\end{tabular}
\caption{Synthetic data configurations.}
\label{tab:sim_data_cfg}
\end{table}

\textbf{Judge noise.} In \texttt{synthetic\_ties\_judge}, the judge is constructed
by adding Gaussian noise on the logit scale of the ground‑truth $\mu$ with
$\sigma_{\mathrm{judge}}\in\{0,0.2,0.4,0.6,0.8,1\}$.

\clearpage

\section{Details regarding real-world data}\label{app:real}

In this section we provide details regarding real-world datasets used in Sec.~\ref{sec:exp} and Appendix~\ref{app:exp}.
\subsection{Datasets and variables}
We evaluate on two public real‑world preference datasets, both originally provided by \citet{zhengJudgingLLMasaJudgeMTBench2023}.

\paragraph{Chatbot Arena conversations.}
Each row contains a prompt (up to three turns), a model pair $(\mathrm{model\_a},\mathrm{model\_b})$,
and a categorical winner label in
\{\texttt{model\_a}, \texttt{model\_b}, \texttt{tie}, \texttt{tie (bothbad)}\}.
We deduplicate by \texttt{question\_id} and use one comparison per context,
yielding $n=32{,}980$ contexts/comparisons and $K=20$ distinct models.

\paragraph{MT‑Bench human judgments.}
We compute a majority‑vote
winner for each $(\texttt{question\_id},\texttt{turn},\texttt{model\_a},\texttt{model\_b})$
over human judges. This yields $n=2{,}396$ pairwise comparisons across
$160$ unique contexts, with labels in
\{\texttt{model\_a}, \texttt{model\_b}, \texttt{tie}\} and $K=6$ models.

\paragraph{Items and labels.}
In both datasets, models are mapped to integer IDs and used as items.
Labels are one‑hot encoded with $C=4$ categories for Chatbot Arena and $C=3$
for MT‑Bench.

\begin{table}[ht]
\centering
\small
\begin{tabular}{l c c c c c}
\hline
Dataset & $K$ & $C$ & \#contexts & \#comparisons & Features ($p$) \\
\hline
Chatbot Arena & 20 & 4 & 32{,}980 & 32{,}980 & 102 \\
MT‑Bench (human) & 6 & 3 & 160 & 2{,}396 & 101 \\
\hline
\end{tabular}
\caption{Real‑world datasets used in experiments.}
\end{table}

\subsection{Preprocessing and text representation}

We obtain prompt representations
vectorized by TF‑IDF, then reduced with truncated SVD (PCA) to 100 dimensions. 
For Chatbot Arena, we concatenate:
(i) a RoBERTa‑based toxicity probability,
(ii) the 100‑D SVD prompt vector, and
(iii) the turn index, giving $p=102$.
For MT‑Bench, we concatenate the 100‑D SVD prompt vector and the turn index,
giving $p=101$.

\subsection{Experimental configuration}

We run a single experiment per dataset (\texttt{n\_runs}=1) and use cross‑fitting with $V=2$ folds for Chatbot Arena and $V=5$ folds for MT‑Bench. We report both debiased (AIPTW) and plug-in estimators for Borda, BT, and rank centrality scores, and compute simultaneous CIs using \texttt{rect} and \texttt{bonferroni} with $\alpha=0.05$.

\subsection{Nuisance training details}

We train LightGBM classifiers for $\widehat\mu$ and $\widehat\pi$. Tuning uses randomized search with 3‑fold CV and the \texttt{neg\_log\_loss} score. For Chatbot Arena, we use $N_{\mathrm{iter}}=2$; for MT‑Bench, we use $N_{\mathrm{iter}}=30$. Negative sampling for $\widehat\pi$
uses \texttt{neg\_per\_pos}=10.

\clearpage

\section{Additional experiments}\label{app:exp}

\subsection{Additional real-world dataset}

Here, we report the additional real-world data results using the MT-Bench dataset introduced in Appendix~\ref{app:real}. The results are shown in Fig.~\ref{fig:real_mt}.

\begin{figure}[ht]
 \centering
\includegraphics[width=1\linewidth]{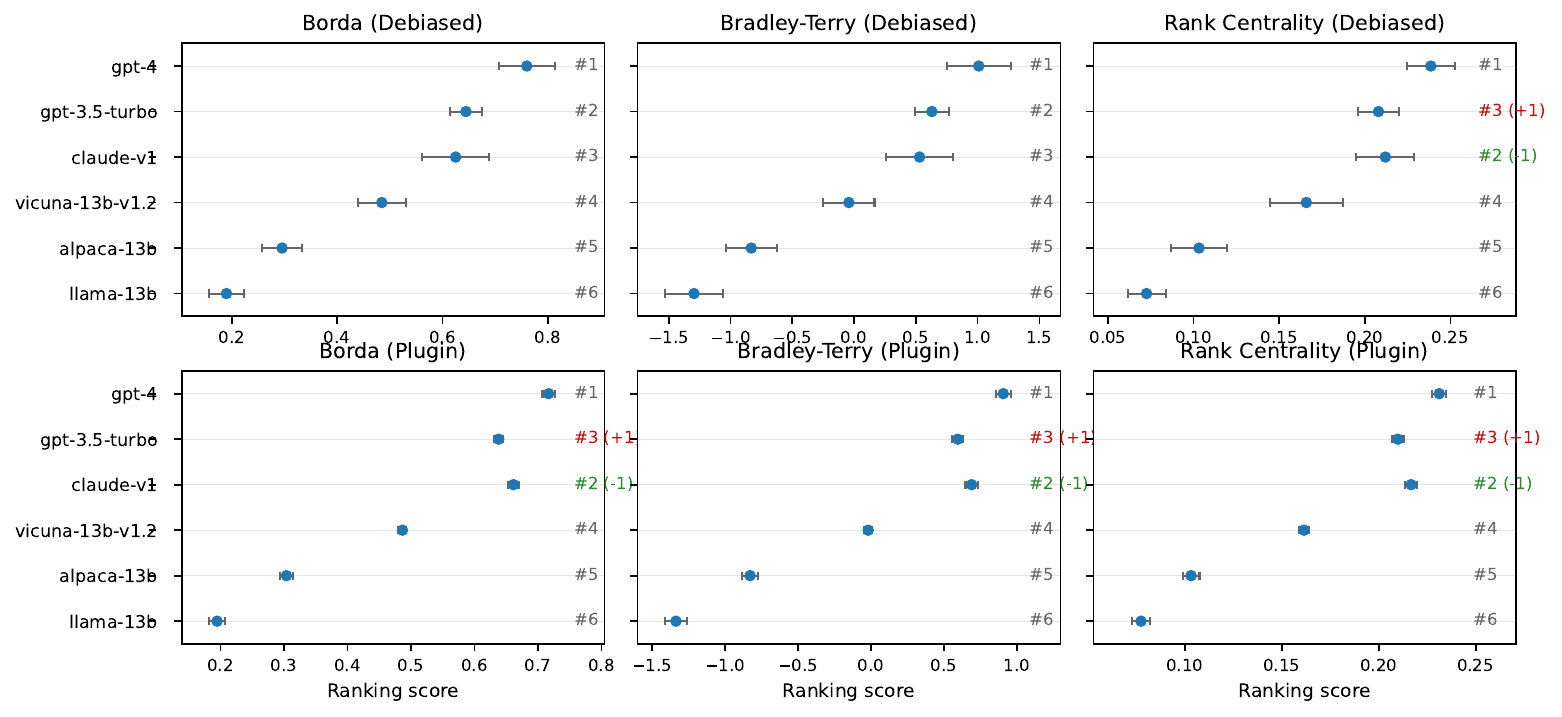}
\caption{\textbf{Results for MT-Bench preference data.} Shown: estimated ranking scores for different GARS functionals (Borda, BT, and rank centrality) and estimators (debiased and plug-in). Changes in ranking are indicated in red and green (for debiased estimators as compared to debiased Borda scores, for plug-in estimators as compared to the corresponding debiased estimator). We report 95\% simultaneous confidence intervals, which attain near-zero width for plug-in estimators.}
\label{fig:real_mt}
\end{figure}

\textbf{Results.}  The results are shown in Fig.~\ref{fig:real_mt}. We make similar observations as for the real-world experiment in Sec.~\ref{sec:exp}: (i) all obtained rankings are largely consistent with the baseline ranking, thus indicating a reasonable base performance. (ii) Rank Centrality deviates slightly more from the baseline than Borda and RC, thus highlighting that different GARS types can yield different rankings even on the same data. (iii) While the point ranking scores of the plug-in estimators do not deviate too much from their debiased counterparts, their confidence intervals are very small and are most likely invalid.

\subsection{Real-world results with OpenAI embeddings}

Here we show the results for Chatbot Arena preference data from Figure~\ref{fig:real_arena} of the main paper but with OpenAI text embeddings.
We report 95\% simultaneous confidence intervals, which attain near-zero width for plug-in estimators. Prompt representations are obtained using OpenAI text embeddings (\texttt{text-embedding-3-small}\footnote{\url{https://platform.openai.com/docs/guides/embeddings}}) with dimensionality set to 100, replacing the TF-IDF+SVD representation with a semantic encoding of prompt context. \textbf{The overall results remain robust.}

\begin{figure}[ht]
  \centering
\includegraphics[width=1\linewidth]{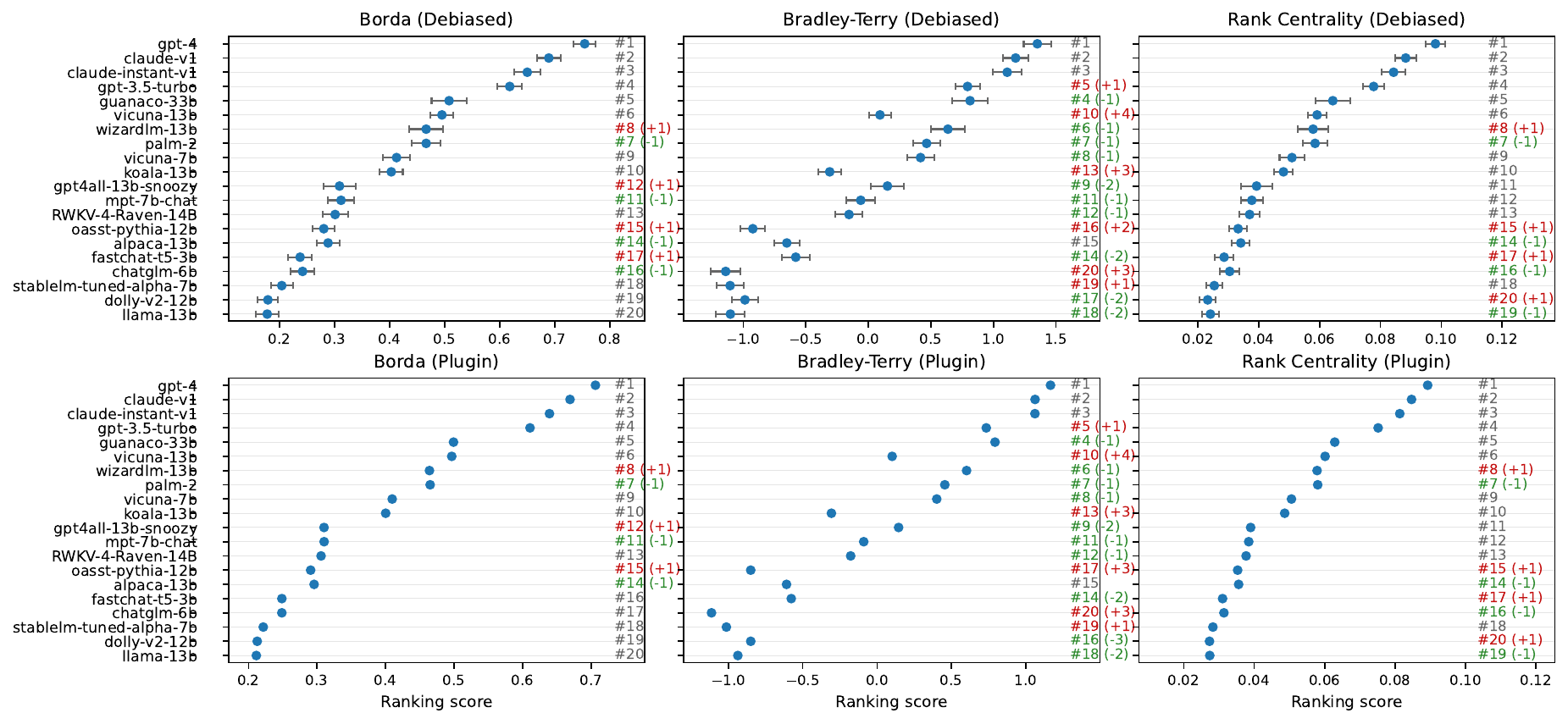}
  \caption{\textbf{Results for Chatbot Arena preference data from Figure~\ref{fig:real_arena} of the main paper but with OpenAI text embeddings.} Shown: estimated ranking scores for different GARS functionals (Borda, BT, and rank centrality) and estimators (debiased and plug-in). Changes in ranking are indicated in red and green as compared to the baseline ranking from \citet{zhengJudgingLLMasaJudgeMTBench2023}. 
}
\vspace{1cm}
  \label{fig:toy_example_2_dr}
\end{figure}

\end{document}